\documentclass[11pt]{article}
\usepackage[final]{acl}
\usepackage{times}
\usepackage{latexsym}
\usepackage[T1]{fontenc}
\usepackage[utf8]{inputenc}
\usepackage{inconsolata}
\usepackage{microtype}
\usepackage{amsmath,amssymb}
\usepackage{booktabs}
\usepackage{graphicx}
\usepackage{xcolor}
\usepackage{adjustbox}
\usepackage{url}
\usepackage{tabularx}
\usepackage{array}
\usepackage{algorithm}
\usepackage{algpseudocode}
\usepackage{float}
\usepackage{placeins}
\usepackage[most]{tcolorbox}
\usepackage{fontawesome5} 

\definecolor{questionaccent}{HTML}{285B87} 
\definecolor{questionbg}{HTML}{F6F8FA}     
\newtcolorbox{questionbox}{
    enhanced,
    breakable,
    colback=questionbg,
    coltext=black,
    frame hidden,
    borderline west={1.5pt}{0pt}{questionaccent},
    sharp corners,
    boxrule=0pt,
    boxsep=0pt,
    left=7pt,
    right=7pt,
    top=5pt,
    bottom=5pt,
    before skip=7pt,
    after skip=7pt
}

\definecolor{figurebase}{HTML}{4F9BC7}
\definecolor{figurece}{HTML}{4DBA9A}
\definecolor{figurelp}{HTML}{F0B93F}
\definecolor{figuresrp}{HTML}{E47B68}

\newcommand{\secbest}[1]{%
  \begin{tabular}[t]{@{}c@{}}
    #1\\[-0.55ex]
    \hline
  \end{tabular}%
}

\usepackage{amsthm}
\newtheorem{srptheorem}{Theorem}
\newtheorem{srpproposition}[srptheorem]{Proposition}
\newtheorem{srplemma}[srptheorem]{Lemma}
\newtheorem{srpcorollary}[srptheorem]{Corollary}
\theoremstyle{remark}

\theoremstyle{plain}
\makeatletter
\newcommand{\appsection}[1]{%
  \section{#1}%
  \addcontentsline{apc}{section}{\protect\numberline{\thesection}#1}%
}
\newcommand{\appsubsection}[1]{%
  \subsection{#1}%
  \addcontentsline{apc}{subsection}{\protect\numberline{\thesubsection}#1}%
}

\newcommand{\applink}[1]{\hyperref[#1]{Appendix~\ref*{#1}}}
\newcommand{\appendixcontents}{%
  \pdfbookmark[1]{Appendix}{appendix-contents}%
  \section*{Appendix}%
  \phantomsection\label{app:contents}%
  \begingroup
    \setcounter{tocdepth}{2}
    \@starttoc{apc}%
  \endgroup
  \vspace{0.5em}%
}
\makeatother

\newcolumntype{Y}{>{\centering\arraybackslash}X}
\graphicspath{{figures/}}
\title{Beyond Distribution Matching: Selective Margin Preservation for Supervised Fine-Tuning}
\author{
  \textbf{Yuxuan Wang}$^{1,3*}$,
  \textbf{Guangxiang Zhao}$^{2*}$,
  \textbf{Xinyu Tan}$^{1}$,
  \textbf{Yahui Han}$^{1}$,
  \textbf{Xinxiang Yin}$^{4}$ \\
  \textbf{Zhonghai Wu}$^{1}$,
  \textbf{Xiangzheng Zhang}$^{2}$,
  \textbf{Lin Sun}$^{2}$,
  \textbf{Tong Yang}$^{1\dagger}$ \\[0.35em]
  \textnormal{$^{1}$Peking University \quad $^{2}$Qiyuan Tech} \\
  \textnormal{$^{3}$Xidian University 
  \quad $^{4}$The University of Hong Kong} \\[0.15em]
  {\small\normalfont $^{*}$Equal contribution.\qquad
  $^{\dagger}$Corresponding author: Tong Yang.}
}

\begin{document}

\maketitle
\begin{abstract}
Supervised fine-tuning (SFT) changes logit margins among non-target tokens
at teacher-forced prefixes. Matching a local reference distribution keeps
those probabilities and also penalizes margin expansions. We ask whether a
soft auxiliary can preserve selected local non-target margin relations
without freezing that distribution. We introduce \textbf{Selective
Rank-Preserving SFT (SRP-SFT)}, a soft margin target: a hinge on
reference-scaled lower bounds for pairs from the reference's top-$K$
non-target tokens, applied at fixed teacher-forced prefixes. The hinge
stops once the requested bound is met. We derive the KL
cost of violating a margin, characterize the feasible region and its
conditional utility, and give a finite-step condition for reducing
aggregate violations. In one-epoch experiments on Qwen3-4B-Base and
UltraFeedback, this target raises MATH-500 and code-generation scores over
CE and LP-SFT at all three training seeds. At seed 1234, MATH-500 pass@1
rises from 49.56\% to 54.56\% and the four-metric code mean rises from
65.91\% to 69.50\% relative to LP-SFT. Ranking controls, held-out cache
diagnostics, sensitivity sweeps, and six additional backbones show where
these score changes appear.
\end{abstract}

\section{Introduction}
\suppressfloats[t]

\begin{figure}[!t]
    \centering
    \includegraphics[width=\columnwidth]{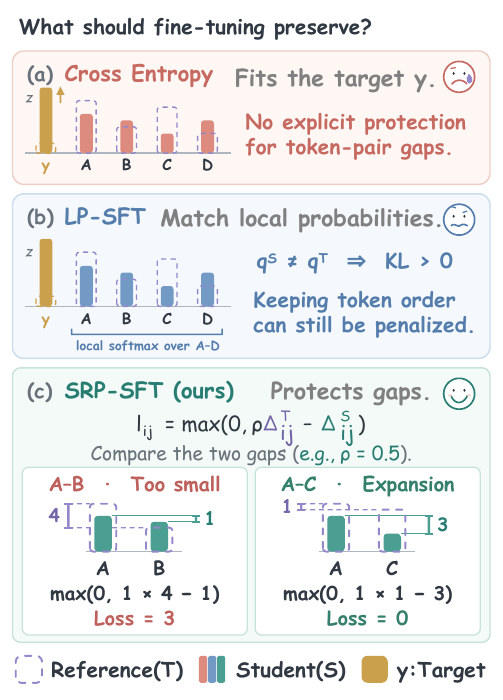}
    \caption{The three objectives preserve different local quantities.
    CE fits the observed token by maximizing its likelihood.
    LP-SFT matches a reference distribution.
    SRP-SFT imposes selected margin lower bounds and stops penalizing a pair
    once its bound is met.}
    \label{fig:motivation}
\end{figure}

Supervised fine-tuning (SFT) adapts pretrained language models to downstream tasks such as instruction following, reasoning, and code generation. At each training position, cross-entropy (CE) increases the probability of the observed next token but does not explicitly preserve the relative preferences among other tokens, potentially weakening or reversing preferences learned during pretraining.

One approach is to keep the fine-tuned model close to a frozen reference model. LP-SFT, our closest comparator, does so by matching the reference's probability distribution over a set of plausible non-target tokens \citep{lpsft}. However, probability matching is more restrictive than preference preservation: if the reference prefers token A over token B, a student that strengthens this preference is still penalized because its probabilities differ from the reference.

This raises a natural question:

\begin{questionbox}
\textcolor{questionaccent}{\faLightbulb}\enspace
\textit{Can we preserve the reference model's
preferences without requiring the student to
reproduce its exact probability distribution?}
\end{questionbox}

Preserving only the ordering of non-target tokens
is insufficient, since the ranking can remain
unchanged even as the gap between tokens approaches
zero. We therefore seek a middle ground that
prevents excessive preference weakening while
allowing stronger preferences during adaptation.

Motivated by this idea, we propose \textbf{Selective Rank-Preserving SFT (SRP-SFT)}, which preserves selected preference relationships through soft margin constraints. At each fixed teacher-forced prefix, a frozen reference identifies plausible non-target alternatives and provides their pairwise logit margins. Rather than matching their full probability distribution, SRP-SFT applies a one-sided hinge penalty only when a student margin falls below a reference-scaled threshold. Once the threshold is met, no further penalty is imposed, allowing additional separation. The threshold can be adjusted to permit limited contraction or encourage stronger preferences, while the standard CE objective continues to fit the observed target token.

We study the theoretical properties and empirical behavior of this selective preservation objective, characterizing how margin constraints differ from KL-based distribution matching, what changes they permit, and under what conditions the auxiliary penalty can reduce margin violations. In one-epoch experiments on Qwen3-4B-Base and UltraFeedback, SRP-SFT improves reasoning and code-generation scores over CE and LP-SFT across three matched training seeds. Held-out diagnostics further show that, although LP-SFT achieves lower local KL divergence, SRP-SFT exhibits less margin contraction and fewer ranking inversions. Comparisons with alternative ranking objectives and parameter sensitivity studies reveal that stronger preference preservation does not always translate into better task performance. Experiments on six additional backbones illustrate both the benefits and limitations of the proposed objective.

Our contributions are threefold. \textbf{First}, we introduce SRP-SFT, a selective margin-based objective that protects local non-target preference relationships without requiring exact distribution matching. \textbf{Second}, we establish theoretical properties of this objective, including the KL cost of margin violations, the geometry of its feasible region, a conditional utility bound, and conditions for reducing aggregate violations during optimization. \textbf{Third}, we conduct extensive evaluations with matched multi-seed comparisons, alternative ranking objectives, held-out structural diagnostics, sensitivity analyses, and six additional backbones.

\section{Related Work}
\label{sec:related}
\paragraph{Auxiliary targets in supervised fine-tuning.}
Several SFT objectives add a signal beyond the observed target
\citep{luo2025empirical}. Learning without Forgetting matches old outputs \citep{li2018lwf}. EWC and $L^2$-SP anchor parameters
\citep{kirkpatrick2017ewc,li2018l2sp}. Stored-example constraints and NTK
analyses address update interference \citep{theory-gem,theory-ntk}.
Recent SFT objectives reweight target supervision in DFT
\citep{wu2025generalization} and EAFT \citep{diao2026entropy}, preserve
diversity in GEM \citep{li2025preserving}, regularize adverse updates in LFU
\citep{lfu}, or add a KL anchor in ASFT \citep{zhu2025anchored}. RPSFT
protects projected rotations in pretrained singular subspaces
\citep{jin2026rpsft}. Our auxiliary is a soft margin target:
LP-SFT matches a frozen reference's local non-target distribution
\citep{lpsft}, whereas SRP-SFT places lower bounds on selected non-target
pairs at teacher-forced prefixes.

\paragraph{Distillation and relational supervision.}
Knowledge distillation transfers teacher knowledge through output matching or alternative training signals \citep{hinton2015distilling,gu2024minillm,agarwal2024policy,zhao2022dkd}. Ranking objectives \citep{burges2005ranknet,cao2007listnet,xia2008listmle} motivate relational transfer through candidate rankings \citep{reddi2021rankdistil,rlkd,kendallkd}, prediction correlations \citep{huang2022dist}, and logit differences \citep{marginmse,theory-bild}. Unlike distribution matching, relational supervision need not reproduce the teacher's exact probabilities. Ordering alone allows margins to shrink, while exact gap matching penalizes expansion. SRP-SFT instead imposes one-sided lower bounds on selected non-target margins at each fixed teacher-forced prefix, allowing further separation without penalty.

\section{Preliminaries}
\label{sec:preliminaries}
\paragraph{Supervised model update.}
Let $\mathcal D=\{(x^{(n)},y^{(n)})\}_{n=1}^{N}$ be an SFT dataset,
with response $y=(y_1,\ldots,y_T)$ and vocabulary $\mathcal V$.
At the teacher-forced prefix $(x,y_{<t})$, the student produces logits
$z_t^S\in\mathbb R^{|\mathcal V|}$ and probabilities
$p_t^S=\operatorname{softmax}(z_t^S)$. CE minimizes
$\mathcal L_{\mathrm{CE},t}=-\log p_t^S(y_t)$ over valid response positions.
The frozen reference $T$ is a copy of the pretrained checkpoint from
which the trainable student $S$ is initialized. We compare the student and reference logits
at identical teacher-forced prefixes, so a change in local preference
is not confounded with a change in generated history.

\paragraph{Local preference geometry.}
For $M\in\{T,S\}$ and token pair $(i,j)$, the margin is
\begin{equation}
 \Delta_{ij,t}^{M}=z_{t,i}^{M}-z_{t,j}^{M}
 =\log\frac{p_t^{M}(i)}{p_t^{M}(j)}.
 \label{eq:margin_def}
\end{equation}
Its sign gives the ordering and its magnitude the log-odds strength.
For a reference-preferred pair ($\Delta_{ij,t}^{T}>0$), a negative
student margin is an inversion, a smaller nonnegative margin is a
contraction, and a larger margin is an expansion. Renormalizing over a
shared candidate set leaves this ratio unchanged. We can therefore
specify relations within that set without prescribing its total
probability mass. The selected-margin target is
\begin{equation}
 \Delta_{ij,t}^{S}\geq\rho\Delta_{ij,t}^{T},\qquad \rho\geq0.
 \label{eq:directional_constraint}
\end{equation}
Here $\rho=0$ requires weak order, $0<\rho<1$ allows bounded contraction,
$\rho=1$ requests at least the reference margin, and $\rho>1$ requests
selective sharpening. These are bounds on individual selected margins.
The trained soft-penalty objective need not satisfy all of them.

\paragraph{CE does not impose this target.}
In free-logit coordinates, CE gives
$\dot\Delta_{ij}=-(p_i-p_j)$: a non-target margin still contracts toward zero without
finite-time inversion. Shared parameters can instead produce contraction
or expansion (Appendices~\ref{app:theory_contraction}
and~\ref{app:theory_parameter}). The margin target specifies
which selected margins enter the auxiliary loss under these
parameter-dependent dynamics.

\section{Method: Selective Margin Constraints}
\label{sec:method}
SRP-SFT uses a frozen reference to select non-target alternatives and set
margin constraints between them. The construction specifies the candidate
set, pair orientations, and penalty for violating a requested margin.
CE continues to supervise the observed token, while the auxiliary acts
on the selected non-target relations (Figure~\ref{fig:srp_main}).

\begin{figure*}[t]
    \centering
    \includegraphics[width=\textwidth]{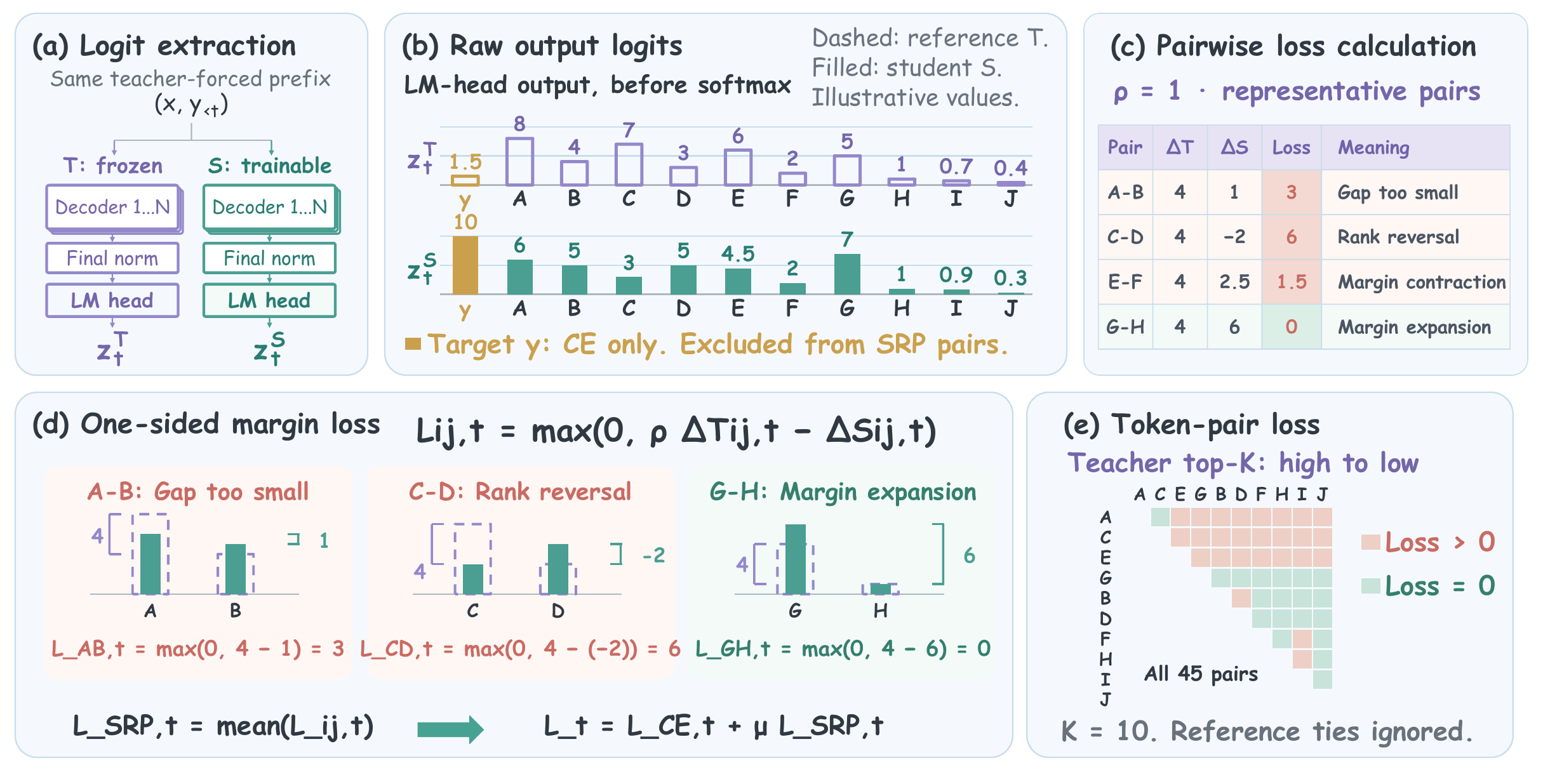}
    \caption{The selective margin target. The frozen reference
    selects plausible non-target alternatives and orients their pairs.
    CE fits the excluded target. The current student receives a pairwise
    penalty only below the requested margin. A satisfied pair supplies
    no direct auxiliary gradient. Both models use the same prefix.}
    \label{fig:srp_main}
\end{figure*}

\subsection{Selecting Non-Target Alternatives}
At each valid supervised position $t$, select the reference's top-$K$
non-target alternatives:
\begin{equation}
 \mathcal A_t=\operatorname{TopK}_{k\in\mathcal V\setminus\{y_t\}}
 z_{t,k}^{T}.
 \label{eq:alternative_set}
\end{equation}
Reference selection focuses the objective on tokens that are plausible
at the current prefix. Increasing $K$ broadens candidate coverage and
increases the maximum number of constraints quadratically.

Excluding the target $y_t$ separates the auxiliary's role from target fitting. CE can
promote an observed answer even when the reference scores it poorly.
Protecting reference-oriented pairs involving that answer could directly
oppose its promotion. Non-target constraints remove this direct
competition over the target logit, although shared parameters can still
produce gradient conflict.

\subsection{Orienting Selected Pairs}
Within $\mathcal A_t$, use the frozen reference to orient pairs:
\begin{equation}
 \mathcal P_t=\{(i,j):i,j\in\mathcal A_t,\ i\ne j,\
 \Delta_{ij,t}^{T}>0\}.
 \label{eq:pair_set}
\end{equation}
Each ordered unordered pair contributes one orientation, giving
at most $K(K-1)/2$ constraints. Teacher ties are ignored. Candidate
identities and orientations remain fixed per prefix. The student's ranking does not redefine what is protected.

Neither relations outside $\mathcal A_t$ nor its total probability mass
receive direct SRP supervision. A common shift of its logits is allowed
by both SRP and local KL. The additional freedom in SRP is to change
\emph{relative} logits while satisfying the selected margin bounds
(Theorem~\ref{thm:srp_geometry_utility}).

\subsection{Penalizing Only Margin Shortfalls}
At position $t$, average the selected hinge violations:
\begin{equation}
 \mathcal L_{\mathrm{SRP},t}
 =\frac{1}{|\mathcal P_t|}\sum_{(i,j)\in\mathcal P_t}
 [\rho\Delta_{ij,t}^{T}-\Delta_{ij,t}^{S}]_+,
 \label{eq:srp_loss}
\end{equation}
Here $[u]_+=\max(u,0)$, and an empty $\mathcal P_t$ contributes zero.
Average the total loss over valid supervised response positions:
\begin{equation}
 \mathcal L=\frac{1}{T}\sum_{t=1}^{T}
 (\mathcal L_{\mathrm{CE},t}+\mu\mathcal L_{\mathrm{SRP},t}),
 \label{eq:total_loss}
\end{equation}
The coefficient $\mu\geq0$ prices violations against target fitting,
whereas $\rho$ sets the requested margin. Changing $\rho$ changes the
boundary and active pairs. Changing $\mu$ rescales their contribution.
The two controls are not interchangeable.

A pair is penalized only while
$\Delta_{ij,t}^{S}<\rho\Delta_{ij,t}^{T}$. At or above the boundary,
its loss and chosen subgradient are zero. In particular, at $\rho=1$,
the objective penalizes contraction and inversion without rewarding or
penalizing further expansion. For $\rho>1$, expansion relative to the
teacher can still fall short of the scaled bound. Finite $\mu$ permits
residual violations: the zero-loss region characterizes the target,
not an invariant set for training updates.

\subsection{Gradients and Cached Implementation}
For an active pair,
$\ell_{ij,t}=\rho\Delta_{ij,t}^{T}-(z_{t,i}^{S}-z_{t,j}^{S})$
has logit gradients
\begin{equation}
 \frac{\partial\ell_{ij,t}}{\partial z_{t,i}^{S}}=-1,
 \qquad
 \frac{\partial\ell_{ij,t}}{\partial z_{t,j}^{S}}=+1.
 \label{eq:active_gradient}
\end{equation}
An isolated logit-gradient step increases the preferred token relative
to its partner. CE and other active pairs can move a satisfied pair
below its bound again, reactivating its hinge. With shared parameters,
aggregate corrections need not improve every margin in a single update.

Reference token IDs and logits are cached for fixed teacher-forced
examples. Each update gathers student logits and recomputes hinge
activity without another reference pass. The headline setting
is $K=10,\rho=1,\mu=1$, with at most 45 pairs per position.
Algorithm~\ref{alg:srp-sft} and Appendix~\ref{app:configuration} give the
implementation. Appendix~\ref{app:rho1_efficiency} reports training pilots
and the separate cost of constructing the cache.

\section{Properties of the Selected-Margin Target}
\label{sec:srp_theory}
We characterize the target: how its cost differs
from KL matching, the geometry of its feasible set, and when its
soft penalty reduces aggregate violations. All results use finite logits, fixed
teacher-forced prefixes, and the pairs in Eq.~\ref{eq:pair_set}.
At one prefix, write
$q=\operatorname{softmax}(z^T|_{\mathcal A})$ and
$p=\operatorname{softmax}(z^S|_{\mathcal A})$ for the locally normalized
distributions. For a selected state--pair triple $a$, let
\begin{equation}
 v_a=[\rho d_a^T-d_a^S]_+,\qquad d_a^T>0.
 \label{eq:theory_violation}
\end{equation}
The analysis specializes classical KL minimization, order-restricted
approximation, and stochastic-order arguments
\citep{srp-csiszar1975,srp-stout2012,srp-shaked2007} to the selected-margin
target. Proofs are in Appendix~\ref{app:srp_proofs}.

\subsection{The KL Cost of Losing a Margin}
\label{sec:theory_kl}
\begin{srptheorem}[KL threshold for a prescribed pair]
\label{thm:srp_kl_threshold}
Fix $q_i>q_j>0$, $m=q_i+q_j$, and
$d=\log(q_i/q_j)$. For any $b<d$,
\begin{equation}
 \inf_{p:\,\log(p_i/p_j)\leq b}\operatorname{KL}(q\|p)
 =m\,\operatorname{kl}\!\left(\sigma(d)\middle\|\sigma(b)\right),
 \label{eq:srp_kl_threshold}
\end{equation}
where the infimum is over strictly positive probability vectors on the
same set, $\sigma$ is the logistic function, and $\operatorname{kl}$ is
Bernoulli KL. The minimum is attained by leaving all other probabilities
unchanged and setting $p_i=m\sigma(b)$, $p_j=m\sigma(-b)$.
\end{srptheorem}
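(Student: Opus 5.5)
The proof splits KL into a between-block and a within-block part, as in the chain rule. Let $r=p_i+p_j$ and $s=p_i/r$, so that $\log(p_i/p_j)=\operatorname{logit}(s)$ and the constraint reads $s\le\sigma(b)$. Likewise set $m=q_i+q_j$ and $\sigma(d)=q_i/m$. Collapse $\{i,j\}$ into one super-atom and write $\bar q,\bar p$ for the coarsened distributions. Then
\begin{equation*}
\operatorname{KL}(q\|p)=\operatorname{KL}(\bar q\|\bar p)+m\,\operatorname{kl}\!\left(\sigma(d)\middle\|s\right).
\end{equation*}
This identity follows from writing $q_i\log(q_i/p_i)+q_j\log(q_j/p_j)$ as $m\log(m/r)+m[\sigma(d)\log(\sigma(d)/s)+\sigma(-d)\log(\sigma(-d)/(1-s))]$.

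\textbf{Lower bound.} Both terms are nonnegative, so $\operatorname{KL}(q\|p)\ge m\,\operatorname{kl}(\sigma(d)\|s)$. On $(0,\sigma(d)]$ the map $s\mapsto\operatorname{kl}(\sigma(d)\|s)$ is strictly decreasing in $s$. Its derivative is $(s-\sigma(d))/(s(1-s))$, which is negative for $s<\sigma(d)$. Since $b<d$ gives $\sigma(b)<\sigma(d)$, every feasible $s\le\sigma(b)$ satisfies $\operatorname{kl}(\sigma(d)\|s)\ge\operatorname{kl}(\sigma(d)\|\sigma(b))$.

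\textbf{Attainment.} Take $p_k=q_k$ for $k\notin\{i,j\}$, $p_i=m\sigma(b)$ and $p_j=m\sigma(-b)$. This vector is strictly positive and sums to one, because $\sigma(b)+\sigma(-b)=1$. It also satisfies $\log(p_i/p_j)=b$. Its coarsening equals $\bar q$, so the first term vanishes and $s=\sigma(b)$. The lower bound is therefore achieved, which gives both the value of the infimum and the stated minimizer.

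The only delicate points are the case split in the derivative sign and noting that strict positivity is never binding. Equality in the lower bound forces $\bar p=\bar q$ and $s=\sigma(b)$. So the minimizer is in fact unique, although the statement only needs existence.
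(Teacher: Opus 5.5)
Your proof is correct and follows the paper's own argument. Both use the chain-rule split of $\operatorname{KL}(q\|p)$ into a coarse term plus $m\,\operatorname{kl}(\sigma(d)\|s)$, strict monotonicity of the Bernoulli term below $\sigma(d)$, and the explicit minimizer that zeroes the remaining nonnegative terms. Your two-term coarsening only merges the paper's $\operatorname{kl}(m\|\cdot)$ and complement-conditional terms, which lets you cover the two-candidate case without the paper's separate remark.
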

KL \emph{can} therefore certify a prescribed margin: a value below the
threshold excludes $d^S\leq b$. The threshold also identifies how
matching prices a loss of preference. For $b=d-\delta$, the minimum
cost is
$m\sigma(d)(1-\sigma(d))\delta^2/2+O(\delta^3)$, whereas at $\rho=1$
that pair contributes $\delta/|\mathcal P|$ to SRP. The KL cost depends
on the pair's local probability mass and gap. Without conditions on
both, there is no positive pair-independent threshold.

Conversely, the scaled logits $z^S=c z^T$ on the candidates satisfy every SRP constraint
when $c\geq\max\{1,\rho\}$, while local KL can diverge as $c\to\infty$
for nonuniform $q$. The two objectives therefore assign different costs
to reference deviations and define different preservation targets.

\subsection{Admissible Geometry and Conditional Utility}
\label{sec:theory_geometry}
Define the feasible logit set and worst violation:
\begin{equation}
 \begin{aligned}
 \mathcal C_\rho&=\{u:u_i-u_j\geq\rho d_{ij}^T,
                              \ (i,j)\in\mathcal P\},\\
 V&=\max_{(i,j)\in\mathcal P}v_{ij},\qquad
 q^{(\rho)}=\operatorname{softmax}(\rho z^T|_{\mathcal A}).
 \end{aligned}
 \label{eq:srp_cone_definition}
\end{equation}
The maximum is zero for an empty pair set. Subtracting $\rho z^T$
turns $\mathcal C_\rho$ into an order cone. This geometry describes the
logit changes permitted by the margin constraints per prefix.

\begin{srptheorem}[Repair distance and local utility]
\label{thm:srp_geometry_utility}
For the full teacher-preferred pair set,
\begin{equation}
 \inf_{u\in\mathcal C_\rho}\|z^S|_{\mathcal A}-u\|_\infty=V/2.
 \label{eq:srp_repair_distance}
\end{equation}
Let $U_i\in[0,1]$ be any candidate utility that is nondecreasing with
$z_i^T$ and constant on teacher ties. Then
\begin{equation}
 \mathbb E_p U\geq\mathbb E_{q^{(\rho)}}U-\tanh(V/4).
 \label{eq:srp_utility_bound}
\end{equation}
For $\rho\geq1$, $\mathbb E_{q^{(\rho)}}U\geq\mathbb E_qU$.
The error term is tight already for two candidates and $\rho>0$.
\end{srptheorem}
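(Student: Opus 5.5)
The plan has two parts. First I prove the repair distance \eqref{eq:srp_repair_distance} by an explicit $\ell_\infty$ isotonic construction. Then I use the repaired logits $u$ as a bridge between $p$ and $q^{(\rho)}$ for the utility bound.

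\emph{Repair distance.} Reparametrize by $w=z^S|_{\mathcal A}-\rho z^T|_{\mathcal A}$. In these coordinates, $u\in\mathcal C_\rho$ means that $w'=u-\rho z^T$ satisfies $w'_i\geq w'_j$ whenever $z^T_i>z^T_j$, i.e.\ $w'$ is isotonic in the teacher preorder with ties unconstrained. The violation is $v_{ij}=[w_j-w_i]_+$, so $w_j-w_i\leq V$ for every teacher-preferred pair.

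For the lower bound, take a pair attaining $V$. Then $(u_i-z^S_i)-(u_j-z^S_j)\geq V$, which forces one coordinate of $u-z^S$ to have absolute value at least $V/2$.

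For the upper bound, define
\[
a_i=\max\Bigl(w_i,\max_{z^T_k<z^T_i}w_k\Bigr),\qquad b_i=\min\Bigl(w_i,\min_{z^T_k>z^T_i}w_k\Bigr),
\]
and set $w'_i=(a_i+b_i)/2$. Only strict inequalities enter, so the bound $w_k-w_i\leq V$ gives $w_i-V\leq b_i\leq w_i\leq a_i\leq w_i+V$. Hence $|w'_i-w_i|\leq V/2$.

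Isotonicity also holds. If $z^T_i>z^T_j$, the index set defining $a_i$ contains $j$ and every $k$ with $z^T_k<z^T_j$, so $a_i\geq a_j$. Symmetrically, $b_j\leq b_i$. The infimum is therefore attained, and I keep this $u$ for the second part.

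\emph{Utility bound.} Because $U$ is nondecreasing in $z^T$ and constant on ties, I write it as $U=U_{\min}+\sum_\ell c_\ell\mathbf 1_{A_\ell}$. Here the $A_\ell$ are teacher upper sets $\{z^T\geq\tau_\ell\}$, and $c_\ell\geq0$ with $U_{\min}+\sum_\ell c_\ell\leq 1$. It therefore suffices to show $p(A)\geq q^{(\rho)}(A)-\tanh(V/4)$ for every teacher upper set $A$, and then sum with weights $c_\ell$.

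I split this into two comparisons.
\begin{enumerate}
\item Compare $r=\operatorname{softmax}(u)$ with $q^{(\rho)}$. The ratio satisfies $r/q^{(\rho)}\propto e^{w'}$, which is nondecreasing in teacher order. A monotone likelihood ratio implies first-order stochastic dominance, so $r(A)\geq q^{(\rho)}(A)$.
\item Compare $p$ with $r$. Since $\|z^S-u\|_\infty\leq V/2$, the log-odds $\log\frac{p(A)}{p(A^c)}$ and $\log\frac{r(A)}{r(A^c)}$ differ by at most $V$. Writing $r(A)=\sigma(s)$, we get $p(A)\geq\sigma(s-V)$.
\item The gap $\sigma(s)-\sigma(s-V)$ is maximized at $s=V/2$, where it equals $\sigma(V/2)-\sigma(-V/2)=\tanh(V/4)$.
\end{enumerate}
Combining these gives \eqref{eq:srp_utility_bound}.

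For $\rho\geq1$, the same likelihood-ratio argument applies. The ratio $q^{(\rho)}/q\propto e^{(\rho-1)z^T}$ is nondecreasing in teacher order, so $\mathbb E_{q^{(\rho)}}U\geq\mathbb E_qU$.

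\emph{Tightness.} Take two candidates with teacher margin $d>0$ and $U=\mathbf 1\{\text{top}\}$. Set the student margin to $\rho d-V$, and choose $d=V/(2\rho)$; this choice needs $\rho>0$. Then $\mathbb E_{q^{(\rho)}}U-\mathbb E_pU=\sigma(V/2)-\sigma(-V/2)=\tanh(V/4)$, so the error term is attained.

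The step I expect to be delicate is the isotonic construction with teacher ties. The naive choice with non-strict inequalities ($\max$ over $z^T_k\leq z^T_i$) breaks the bound $|w'_i-w_i|\leq V/2$, because tied pairs are not constrained by $V$. That is why $a_i$ and $b_i$ are built from strict neighbours together with $w_i$ itself. The same tie convention is what makes the upper-set decomposition of $U$ legitimate.
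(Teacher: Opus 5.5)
Your proposal is correct and follows essentially the same route as the paper: the same reparametrization $x=z^S-\rho z^T$, the same strict-neighbour envelope construction for the $V/2$ repair, and the same bridge through the softmax of the repaired logits via a monotone-likelihood-ratio step plus a bounded odds-shift step giving $\tanh(V/4)$. Your tightness example is identical to the paper's. The only cosmetic difference is that you split $U$ into teacher upper-set indicators and apply the odds bound only to those sets, whereas the paper proves a general total-variation lemma (Lemma~\ref{lem:srp_softmax_tv}) and uses a covariance identity for the dominance step.
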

The first statement quantifies the correction needed to reach that
region. The second gives it a local utility interpretation: at zero
violation and $\rho\geq1$, no teacher-monotone conditional utility is
lowered relative to the reference. Nonzero violations yield the stated
degradation bound. Since $V\leq|\mathcal P|\mathcal L_{\mathrm{SRP},t}$,
a small mean hinge need not protect every pair equally. The appendix
provides a probability-weighted refinement and sparse-pair analysis.
Teacher--utility alignment is a substantive assumption. The guarantee
concerns choices within $\mathcal A$, not its probability mass, the
excluded target, or sequence-level correctness.

\subsection{When a Soft Penalty Repairs Violations}
\label{sec:theory_optimization}
Let $f$ be the CE objective on a fixed collection of prefixes and
$R(\theta)=\sum_a w_a v_a(\theta)$, with the actual fixed reduction
weights $w_a>0$. Set
$r=\sum_{a:v_a>0}w_a\nabla d_a^S$ and
$g=\nabla f-\mu r$. Suppose each margin gradient is $L_a$-Lipschitz
along the update segment and write $L_R=\sum_a w_aL_a$.

\begin{srpproposition}[Finite-step aggregate repair]
\label{prop:srp_aggregate_repair}
For $\theta^+=\theta-\eta g$, $\eta>0$,
\begin{equation}
 \begin{aligned}
 R(\theta^+)-R(\theta)
 &\leq\eta\langle r,\nabla f\rangle-\eta\mu\|r\|_2^2\\
 &\quad+C_\eta+\tfrac12L_R\eta^2\|g\|_2^2.
 \end{aligned}
 \label{eq:srp_finite_repair}
\end{equation}
Here $C_\eta\geq0$ is the explicitly computable linearized hinge-crossing
correction in Eq.~\ref{eq:srp_crossing}. It is zero when no linearized
hinge crosses its boundary. A negative right-hand side certifies a strict
decrease of the aggregate violation once updated.
\end{srpproposition}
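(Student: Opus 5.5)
The plan is to bound each hinge term separately and then sum with the weights $w_a$. For a single triple $a$, write $h_a(\theta)=\rho d_a^T-d_a^S(\theta)$, so that $v_a=[h_a]_+$. Since $d_a^S$ has an $L_a$-Lipschitz gradient along the segment $[\theta,\theta^+]$, the standard descent lemma gives
\begin{equation*}
 h_a(\theta^+)\leq h_a(\theta)+\eta\langle\nabla d_a^S,g\rangle+\tfrac12L_a\eta^2\|g\|_2^2 .
\end{equation*}
This follows because $-d_a^S$ has the same Lipschitz constant and the step is $-\eta g$. Define the linearized value $\tilde h_a=h_a(\theta)+\eta\langle\nabla d_a^S,g\rangle$. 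Because $[\cdot]_+$ is monotone and $1$-Lipschitz, we get $v_a(\theta^+)\leq[\tilde h_a]_+ +\tfrac12L_a\eta^2\|g\|_2^2$. Summing against $w_a$ produces the term $\tfrac12L_R\eta^2\|g\|_2^2$, so everything reduces to comparing $\sum_a w_a[\tilde h_a]_+$ with $R(\theta)$.

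Split the triples into active ones ($v_a>0$) and inactive ones ($v_a=0$). For an active $a$, write $[\tilde h_a]_+=\tilde h_a+[-\tilde h_a]_+$. For an inactive $a$, write $[\tilde h_a]_+=0+[\tilde h_a]_+$. Then
\begin{equation*}
 \sum_a w_a[\tilde h_a]_+=R(\theta)+\eta\Big\langle\sum_{a:v_a>0}w_a\nabla d_a^S,\,g\Big\rangle+C_\eta ,
\end{equation*}
where
\begin{equation*}
 C_\eta=\sum_{a:v_a>0}w_a[-\tilde h_a]_+ +\sum_{a:v_a=0}w_a[\tilde h_a]_+\geq0 .
\end{equation*}
This $C_\eta$ is exactly the linearized boundary-crossing correction of Eq.~\ref{eq:srp_crossing}: it vanishes precisely when no active linearized hinge overshoots to the satisfied side and no inactive one becomes active. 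The middle term is $\eta\langle r,g\rangle=\eta\langle r,\nabla f\rangle-\eta\mu\|r\|_2^2$, using $g=\nabla f-\mu r$. Collecting the three pieces gives Eq.~\ref{eq:srp_finite_repair}. A negative right-hand side then forces $R(\theta^+)<R(\theta)$.

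One subtlety needs care. The step uses $-\eta g$, and $g$ contains $-\mu r$, so $\langle\nabla d_a^S,-(-\eta g)\rangle$ must be signed consistently. I will check that $h_a$ decreases along $+\eta\mu r$, i.e.\ that the update moves the margins up. The main obstacle is bookkeeping the hinge kinks: boundary cases with $h_a(\theta)=0$ exactly must be placed in the inactive group, matching the chosen zero subgradient. I also need to ensure that the Lipschitz assumption is used only along the segment, not globally. Both points are handled by working with $[\cdot]_+$ monotonicity pointwise rather than with subgradient calculus.
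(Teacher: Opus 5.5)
Your proof is correct and is essentially the paper's argument. Your $\tilde h_a$ is the paper's $b_a+u_a$, and your active/inactive split reproduces $C(h)=\sum_a w_a([b_a+u_a]_+-[b_a]_+-I_au_a)$ term by term; the 1-Lipschitz hinge absorbs the curvature error exactly as in the paper, which only differs by first stating the bound for an arbitrary update $h$ (giving the AdamW version) before substituting $h=-\eta g$. The sign ``check'' in your last paragraph is unnecessary, and need not hold for individual pairs because the Gram entries $\langle\nabla d_a^S,\nabla d_b^S\rangle$ can be negative: the identity $\eta\langle r,g\rangle=\eta\langle r,\nabla f\rangle-\eta\mu\|r\|_2^2$ already accounts for all the signs.
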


The bound separates auxiliary repair from CE interference, hinge crossings, and curvature. It requires neither convexity nor a frozen neural tangent kernel, but certifies an aggregate, not every pair. Appendix~\ref{app:srp_repair} gives the version for any optimizer step, including AdamW. When the reference equals the initialization and $0<\rho\leq1$, a checkpoint satisfying $(f+\mu R)(\theta)\leq f(\theta_0)+\xi$ obeys $R(\theta)\leq(f(\theta_0)-f(\theta)+\xi)/\mu$ for $\mu>0$. With the teacher margin distribution, this bounds sorting failures at nonzero residual (Appendix~\ref{app:srp_train_to_risk}).

\begin{table*}[!t]
\centering
\scriptsize
\setlength{\tabcolsep}{1.2pt}
\renewcommand{\arraystretch}{1.10}
\begin{tabularx}{\textwidth}{@{}>{\raggedright\arraybackslash}p{1.20cm}*{13}{Y}@{}}
\toprule
Method
& \multicolumn{2}{c}{MATH-500}
& \multicolumn{2}{c}{AIME}
& \multicolumn{2}{c}{GSM-Plus mini}
& LogiQA2
& BBH-5
& ARC-C
& \shortstack{Hella-\\Swag}
& MMLU
& Phy-Oly
& Mean \\
\cmidrule(lr){2-3}
\cmidrule(lr){4-5}
\cmidrule(lr){6-7}
& p@1
& p@4
& p@1
& p@4
& Strict
& Flexible
& Acc.
& EM
& Norm.
& Norm.
& 5-shot
& Acc.
& \\
\midrule
Base$^{\dagger}$ & 48.11 & 71.23 & 1.81 & 5.17 & 51.92 & \textbf{60.21} & \secbest{44.15} & \secbest{16.40} & 52.30 & \textbf{73.70} & \textbf{73.00} & 8.90 & 42.24 \\
CE & 27.98 & 54.91 & 0.54 & 1.99 & 45.62 & 43.83 & 43.32 & 13.60 & 53.07 & 72.16 & 68.90 & 5.93 & 35.99 \\
GEM & 22.21 & 47.74 & 0.54 & 1.94 & 46.58 & 47.38 & 43.70 & 12.80 & 49.49 & \secbest{73.40} & 68.83 & 5.51 & 35.01 \\
DFT & 47.73 & 59.57 & 0.66 & 2.19 & 45.58 & 44.50 & 39.63 & 0.40 & 47.70 & 65.96 & 65.23 & 3.81 & 35.25 \\
ASFT & 33.41 & 57.22 & 2.03 & 5.42 & 47.50 & 48.79 & 37.09 & \textbf{18.40} & \textbf{55.03} & 70.72 & 67.56 & 8.47 & 37.64 \\
EAFT & 28.52 & 54.57 & 0.54 & 1.91 & 45.83 & 33.71 & 41.09 & 14.80 & 51.88 & 72.32 & 69.57 & 4.24 & 34.91 \\
LP-SFT & \secbest{49.56} & \secbest{77.52} & \secbest{3.28} & \secbest{9.25} & \secbest{57.88} & 59.08 & 44.02 & 14.80 & 53.75 & 73.35 & 71.48 & 8.05 & \secbest{43.50} \\
\textbf{SRP-SFT} & \textbf{54.56} & \textbf{80.50} & \textbf{4.76} & \textbf{11.68} & \textbf{58.83} & \secbest{59.92} & \textbf{46.25} & 14.80 & \secbest{54.18} & 73.36 & \secbest{71.89} & \textbf{10.59} & \textbf{45.11} \\
\bottomrule
\end{tabularx}
\caption{One-epoch Qwen3-4B-Base reasoning results on UltraFeedback
(seed 1234, percentages). All fine-tuned methods use the same training
implementation, software and hardware environment, and common training
hyperparameters, including a 478-step cosine schedule and 15-step warmup.
Mean averages all twelve metrics from the same checkpoint.
Bold/underline mark best/second-best. $^{\dagger}$ is the untouched Base.}
\label{tab:main_benchmark}
\end{table*}

\section{Experimental Setup}
\label{sec:experimental_setup}
We fine-tune Qwen3-4B-Base \citep{yang2025qwen3} on UltraFeedback's \texttt{train\_sft} split \citep{cui2023ultrafeedback}. ChatML sequences are right-truncated at 2{,}048 tokens; loss covers assistant responses. Fine-tuned methods share initialization, per-seed data order, and optimizer: one epoch, 478 AdamW steps, learning rate $2\times10^{-5}$, a cosine schedule with 15 warmup steps, and effective batch size 128 on eight A800-80GB GPUs. Seeds: 1234, 42, and 3407. Tables~\ref{tab:main_benchmark} and~\ref{tab:code_benchmark} report seed 1234. Base denotes the untouched checkpoint. Appendix~\ref{app:configuration} details revisions, row counts, and the software stack.

The matched comparison includes CE, GEM, DFT, ASFT, EAFT, LP-SFT, and
SRP-SFT. LP-SFT applies forward KL over the reference's top-10 non-target
tokens at $\mu=1$. SRP-SFT adds the soft hinge on strictly ordered pairs
among those tokens, with $K=10$, $\rho=1$, and $\mu=1$. Reference logits are
cached once. These coefficients are fixed before evaluation and reused at
all three seeds. Other loss definitions, sensitivity sweeps, and the
six-backbone studies are in Appendix~\ref{app:configuration}.

The reasoning mean averages twelve metrics from one checkpoint, including
MATH-500 pass@1/pass@4 at temperature 0.6 with 16 samples. The code mean
averages HumanEval+ and MBPP+ pass@1/pass@4 under EvalPlus 0.3.1, at
temperature 0.8 with 20 samples. Held-out margin diagnostics use the
998-example \texttt{test\_sft} cache.

\section{Results}
\label{sec:results}

\begin{table}[!tbp]
\centering
\footnotesize
\setlength{\tabcolsep}{2.0pt}
\renewcommand{\arraystretch}{1.10}
\begin{tabularx}{\columnwidth}{@{}l*{5}{Y}@{}}
\toprule
Method & \multicolumn{2}{c}{HumanEval+} & \multicolumn{2}{c}{MBPP+} & Mean \\
\cmidrule(lr){2-3}\cmidrule(lr){4-5}
& pass@1 & pass@4 & pass@1 & pass@4 & \\
\midrule
Base$^{\dagger}$ & 45.49 & 71.56 & 46.12 & \secbest{72.50} & 58.92 \\
CE & 44.18 & 69.56 & 44.66 & 65.79 & 56.04 \\
GEM & 22.93 & 46.34 & 26.40 & 52.13 & 36.95 \\
DFT & \secbest{64.12} & 69.98 & \textbf{57.24} & 63.56 & 63.72 \\
ASFT & \textbf{64.79} & 77.18 & \secbest{57.09} & 67.90 & \secbest{66.74} \\
EAFT & 46.43 & 72.00 & 44.59 & 65.45 & 57.12 \\
LP-SFT & 57.90 & \secbest{81.67} & 51.87 & 72.23 & 65.91 \\
\textbf{SRP-SFT} & 62.84 & \textbf{83.87} & 56.24 & \textbf{75.06} & \textbf{69.50} \\
\bottomrule
\end{tabularx}
\caption{One-epoch code results for Qwen3-4B-Base on UltraFeedback
(seed 1234, percentages).
Mean averages HumanEval+ and MBPP+ pass@1/pass@4.}
\label{tab:code_benchmark}
\end{table}

\subsection{Task Utility of Margin Constraints}
Table~\ref{tab:main_benchmark} reports the reasoning screen, and
Table~\ref{tab:code_benchmark} reports the code evaluation, both at seed
1234 under the matched one-epoch protocol. SRP-SFT raises MATH pass@1 from
49.56 for LP-SFT to 54.56 and raises the code mean from 65.91 to 69.50. It
improves all four code metrics over LP-SFT and has the highest code mean
among the seven fine-tuned methods. ASFT and DFT still lead the individual
code pass@1 columns on each benchmark.

Across the three training seeds, SRP-SFT remains above CE and LP-SFT.
Its code mean is $69.48\pm0.02$ and its twelve-metric reasoning mean is
$44.96\pm0.55$, compared with $66.17\pm0.27$ and $43.10\pm0.46$
for LP-SFT. CE reaches $55.66\pm1.18$ and $35.96\pm0.10$ on the same
aggregates.

\begin{table}[!t]
\centering
\footnotesize
\setlength{\tabcolsep}{1.8pt}
\renewcommand{\arraystretch}{1.10}
\begin{tabularx}{\columnwidth}{@{}l*{6}{Y}@{}}
\toprule
Method & \multicolumn{3}{c}{Reasoning} & \multicolumn{3}{c}{Code} \\
\cmidrule(lr){2-4}\cmidrule(lr){5-7}
& \shortstack{seed\\1234} & \shortstack{seed\\42} & \shortstack{seed\\3407} & \shortstack{seed\\1234} & \shortstack{seed\\42} & \shortstack{seed\\3407} \\
\midrule
CE & 35.99 & 36.03 & 35.85 & 56.04 & 54.33 & 56.60 \\
GEM & 35.01 & 34.89 & 35.52 & 36.95 & 33.93 & 36.88 \\
DFT & 35.25 & 33.37 & 35.77 & 63.72 & 61.35 & 58.38 \\
ASFT & 37.64 & 40.51 & 37.46 & \secbest{66.74} & \secbest{66.37} & \secbest{67.87} \\
EAFT & 34.91 & 35.81 & 35.89 & 57.12 & 57.32 & 57.65 \\
LP-SFT & \secbest{43.50} & \secbest{43.20} & \secbest{42.60} & 65.91 & 66.14 & 66.45 \\
\textbf{SRP-SFT} & \textbf{45.11} & \textbf{45.41} & \textbf{44.35} & \textbf{69.50} & \textbf{69.47} & \textbf{69.48} \\
\bottomrule
\end{tabularx}
\caption{Three-seed reasoning and code means (percentages).
Reasoning averages twelve metrics from the same checkpoint.
Code averages four metrics.}
\label{tab:two_seed_summary}
\end{table}

\subsection{Comparing Preservation Targets}
\label{sec:ablation}
With data, initialization, and coefficients fixed, we replace the SRP hinge with alternatives: distribution matching (LP-SFT, ListNet), pairwise/listwise ranking losses, exact-gap regression, and distillation objectives (Table~\ref{tab:ranking_loss_performance}; Appendix~\ref{app:loss_ablation_full}).

\begin{table}[!htb]
\centering
\footnotesize
\setlength{\tabcolsep}{1.5pt}
\renewcommand{\arraystretch}{1.06}
\begin{tabularx}{\columnwidth}{@{}l*{4}{Y}@{}}
\toprule
Loss & MATH & HE+ & MBPP+ & Mean \\
\midrule
CE & 27.98 & 44.18 & 44.66 & 38.94 \\
LP-SFT & 49.56 & 57.90 & 51.87 & 53.11 \\
RankOnly ($\rho=0$) & 41.89 & 55.37 & 49.96 & 49.07 \\
RankNet & 46.41 & 59.09 & 52.63 & 52.71 \\
ListMLE & 52.15 & \secbest{61.71} & \secbest{56.18} & \secbest{56.68} \\
Spearman & 46.74 & 56.68 & 52.08 & 51.83 \\
ListNet$^{\dagger}$ & 49.33 & 58.45 & 52.34 & 53.37 \\
Margin-MSE & 48.68 & 61.07 & 54.72 & 54.82 \\
BiLD teacher-led & 50.11 & 58.54 & 53.72 & 54.12 \\
BiLD T+S & 45.17 & 55.58 & 51.28 & 50.68 \\
RLKD-style union & \textbf{55.31} & 58.57 & 52.94 & 55.61 \\
Kendall-$\tau$ & 43.96 & 59.48 & 51.97 & 51.81 \\
Smooth-SRP & 50.65 & 59.85 & 53.65 & 54.72 \\
\textbf{SRP-SFT} & \secbest{54.56} & \textbf{62.84} & \textbf{56.24} & \textbf{57.88} \\
\bottomrule
\end{tabularx}
\caption{Fixed-configuration loss screen (pass@1, percentages).
Mean averages MATH, HumanEval+, and MBPP+.
Code columns use the Table~\ref{tab:code_benchmark} evaluation.
$^{\dagger}$Same target as LP-SFT.
Definitions are in Appendix~\ref{app:loss_ablation_full}.}
\label{tab:ranking_loss_performance}
\end{table}

Task utility tracks preservation of the base candidate order: across
14 losses, the mean falls as rank flips and margin contraction rise
(Spearman $-0.87$ and $-0.84$; Table~\ref{tab:ranking_loss_diagnostics}).
Order alone is insufficient: RankOnly ($\rho=0$), without a margin target,
contracts margins by 0.41 logits and scores 49.07, below LP-SFT; $\rho=1$
cuts contraction to 0.11, giving the best mean, 57.88. Penalizing both
directions falls short: Margin-MSE has the fewest flips
(0.0380 versus 0.0565) but scores 54.82. SRP-SFT leads both code columns
and trails only the RLKD-style union on MATH (54.56 versus 55.31).

\subsection{Distributional Fidelity versus Margin Preservation}
\label{sec:mechanism}
On the 998-example held-out cache, SRP-SFT reduces mean margin
contraction from 0.268 for LP-SFT to 0.110 and reduces inversions from
7.41\% to 5.65\%. CE has contraction 0.600 and inversion 15.40\%. LP-SFT
is closer in local KL, at 0.049 versus 0.075 for SRP-SFT, consistent with
the mass-dependent threshold in Theorem~\ref{thm:srp_kl_threshold}.
On this cache, the lower local KL does not identify the checkpoint with
less contraction and fewer inversions.

Table~\ref{tab:sharpness} and Figure~\ref{fig:mechanism} plot the same cache.
SRP-SFT has smaller margin shortfalls and larger local distribution
changes. The cache is training-disjoint and contains no benchmark labels
(Appendix~\ref{app:cache_provenance}).

\begin{figure}[!tbp]
    \centering
    \includegraphics[width=\columnwidth]{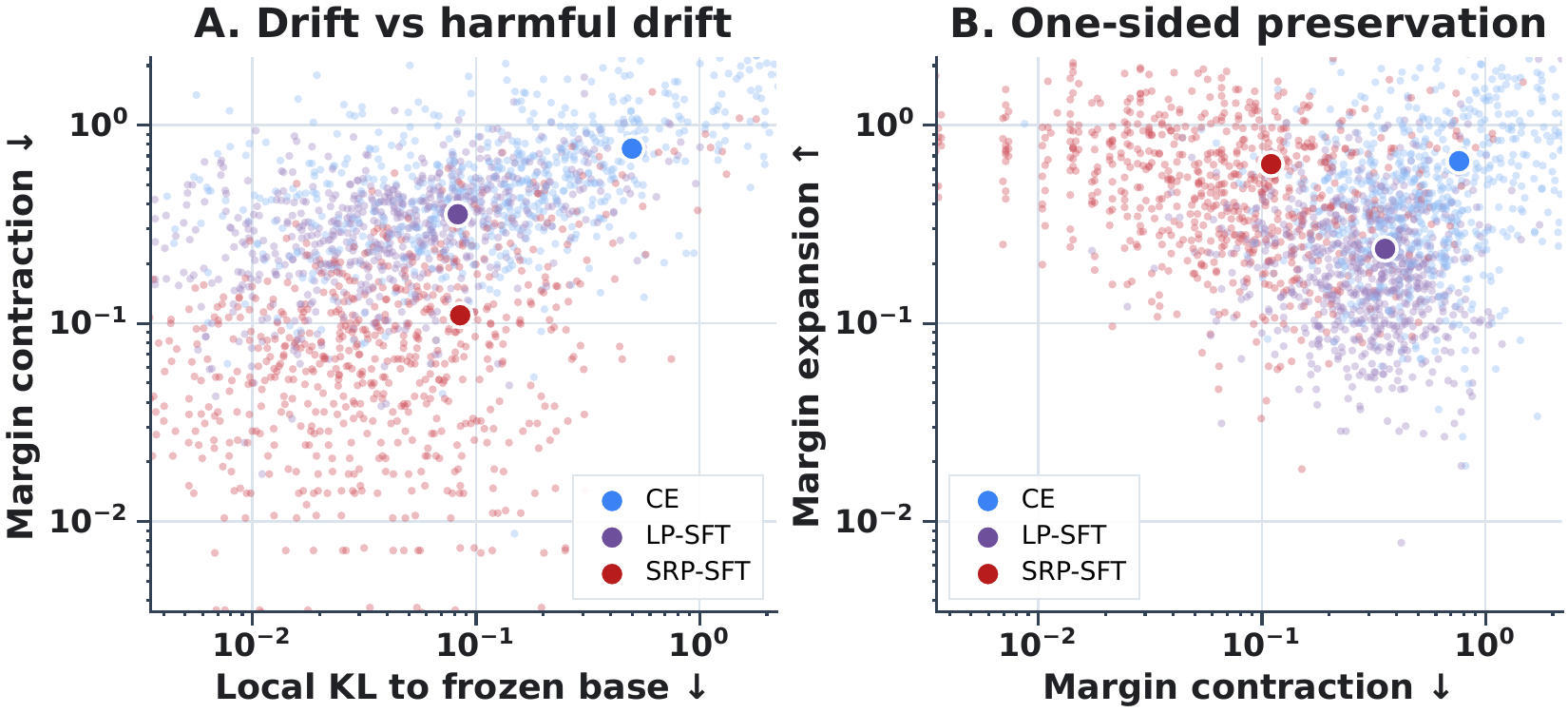}
    \caption{Mechanism diagnostics on the held-out cache.
    Distributional fidelity and margin preservation disagree on this cache:
    SRP-SFT has less contraction despite larger local KL.
    Small points are tokens. Large markers are pair-weighted means.}
    \label{fig:mechanism}
\end{figure}


\subsection{Preservation Strength versus Task Utility}
\label{sec:parameter_sensitivity_main}
The lower-bound formulation has three controls: the requested margin
$\rho$, the penalty weight $\mu$, and candidate coverage $K$. In the
$\rho$ sweep, the aggregate score rises from 47.05 at $\rho=0$ to 61.56 at
$\rho=2.5$ and then declines. From $\rho=1$ to $\rho=2.5$, the score
increases from 57.05 to 61.56 while Flip rises from 0.0565 to 0.0937 and
contraction rises from 0.1098 to 0.1891 (Table~\ref{tab:sensitivity}).
Over this interval, the task score and the realized margin statistics move
in opposite directions.

Increasing $\mu$ beyond 3 lowers the task score while reducing Flip, and
the $K$ sweep peaks at $K=5$. The $\rho$ sweep fixes $\mu=1,K=10$, whereas
the $\mu$ and $K$ sweeps use $\rho=0.5$. The reported one-factor peaks are
$\rho=2.5$, $K=5$, and $\mu=3$. The pre-fixed main setting remains
$K=10,\rho=1,\mu=1$ (Appendix~\ref{app:sensitivity_ablation}).

\begin{figure}[!tbp]
\centering
\includegraphics[width=\columnwidth,trim=0 17 0 0,clip]{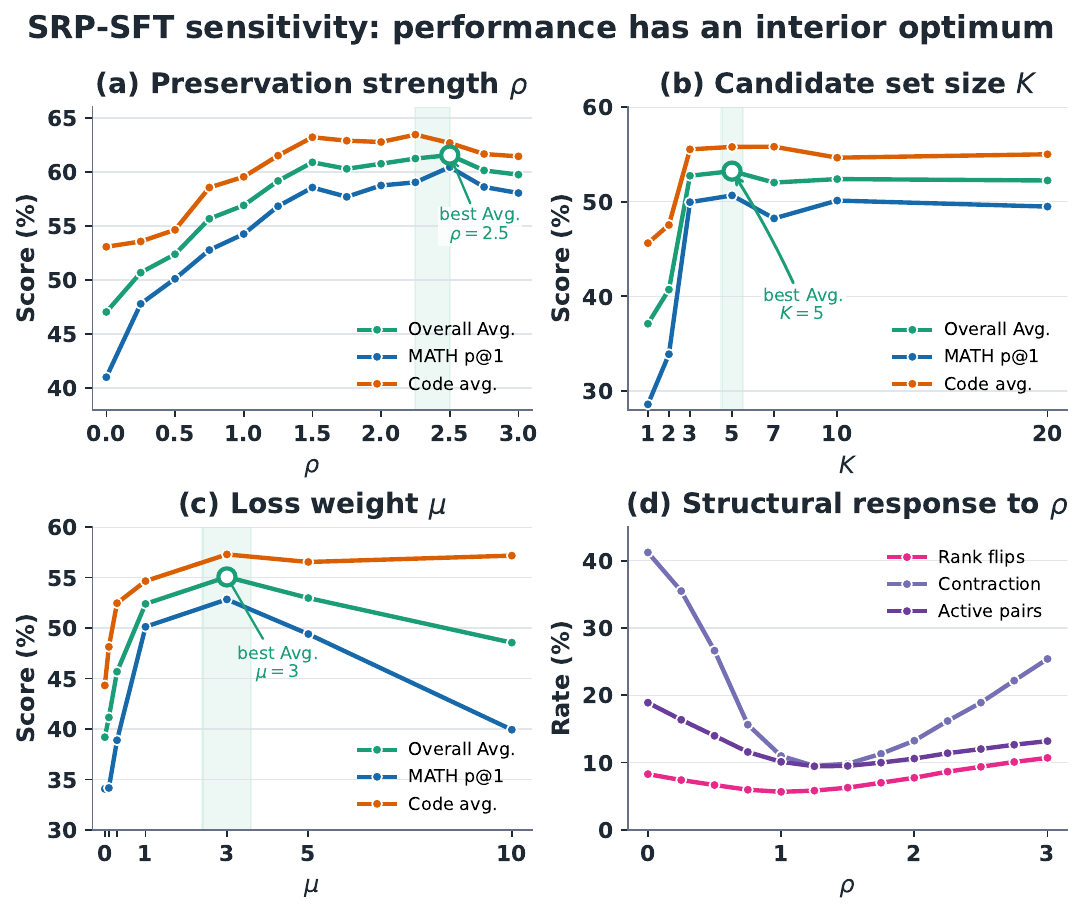}
\caption{Preservation demand and task utility need not move together.
Each panel varies one of $\rho$, $\mu$, and $K$. Bands mark the best score.
The score averages MATH pass@1 and mean code pass@1.}
    \label{fig:parameter_sensitivity_sweeps}
\end{figure}

\subsection{Can Global Scaling Recover the Gains?}
\label{sec:temperature}
For LP-SFT, we fit $\alpha=1.0993$ on 998 held-out examples,
without benchmark labels, to match SRP-SFT's mean full-vocabulary
entropy (Table~\ref{tab:tempmatch}). This single scalar changes
sharpness but leaves each token's candidate ranking unchanged.

\begin{table}[!tbp]
\centering
\scriptsize
\setlength{\tabcolsep}{1.1pt}
\renewcommand{\arraystretch}{1.10}
\begin{tabularx}{\columnwidth}{@{}l*{7}{Y}@{}}
\toprule
Method & $\alpha$ & \multicolumn{2}{c}{MATH} & \multicolumn{2}{c}{HE+} & \multicolumn{2}{c}{MBPP+} \\
\cmidrule(lr){3-4}\cmidrule(lr){5-6}\cmidrule(lr){7-8}
& & p@1 & p@4 & p@1 & p@4 & p@1 & p@4 \\
\midrule
CE & 1 & 27.98 & 54.91 & 44.18 & 69.56 & 44.66 & 65.79 \\
CE-TM & 1.081 & 28.98 & 55.87 & 45.88 & 70.44 & 45.73 & 65.33 \\
LP-SFT & 1 & 49.56 & 77.52 & 57.90 & 81.67 & 51.87 & 72.23 \\
LP-TM & 1.099 & 51.96 & 78.08 & 59.82 & 82.39 & 54.84 & 73.06 \\
SRP-SFT & 1 & 54.56 & 80.50 & 62.84 & 83.87 & 56.24 & 75.06 \\
\bottomrule
\end{tabularx}
\caption{Entropy-matched temperature control (seed 1234, percentages).
LP-TM and CE-TM are calibrated to the SRP-SFT held-out entropy.}
\label{tab:tempmatch}
\end{table}

Global scaling recovers less than half of SRP-SFT's gain, so the gain is
not a byproduct of a sharper output distribution. Entropy matching raises
LP-SFT's MATH pass@1 from 49.56 to 51.96 and its code mean from 65.91 to
67.53, closing 48\% and 45\% of the gap to SRP-SFT (54.56 and 69.50). CE
gains 1.00 MATH and 0.81 code points. At seed 42, the fitted LP scale is
1.0989 and the code mean is 67.90 (Table~\ref{tab:seed42_matched_available}).

\subsection{Transfer Across Backbones}
\label{sec:cross_backbone}
The six additional backbones are trained separately per domain:
mathematics SFT on 96{,}949 filtered OpenMathReasoning examples (758
updates) and code SFT on 95{,}966 filtered rStar-Coder examples (750
updates), with 4{,}096-token sequences, one epoch, and seed 1234.
MATH-500 and GSM8K evaluate the mathematics checkpoints, and HumanEval+
and MBPP+ evaluate the code checkpoints. Within each backbone and domain,
all methods share the data and budget.

The advantage of SRP-SFT over LP-SFT extends from UltraFeedback to
new architectures and domain-specific data and is steadiest on code.
SRP-SFT leads LP-SFT in 18 of 24 backbone--benchmark pairs,
including 11 of 12 code pairs; HumanEval+ rises on every backbone
($+0.55$ to $+4.09$). Mathematics is more backbone-dependent: SRP-SFT
leads MATH-500 on both Qwen bases, Ministral, and SmolLM3, but
trails on Gemma (61.59 versus 63.94) and Granite.
Figure~\ref{fig:cross_backbone_robustness} shows SRP-SFT minus the
CE/LP-SFT mean.
Table~\ref{tab:cross_backbone_delta_lp} gives SRP-minus-LP
differences.

\begin{figure}[!t]
    \centering
    \includegraphics[width=\columnwidth]{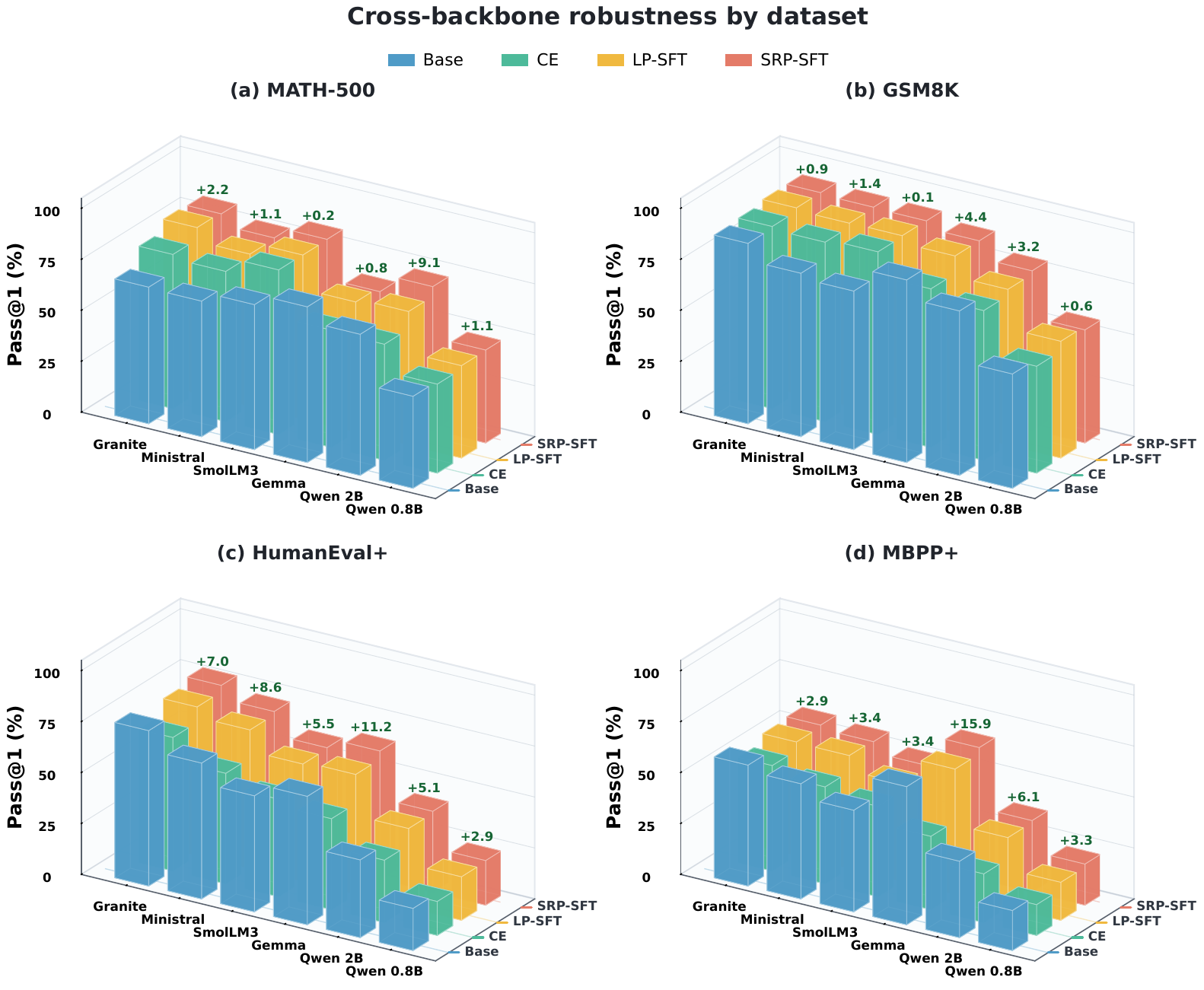}
    \caption{Additional-backbone pass@1 on MATH-500, GSM8K, HumanEval+,
    and MBPP+. Math and code use separate domain-trained checkpoints.
    $\Delta$ is SRP-SFT minus the mean of CE and LP-SFT.}
    \label{fig:cross_backbone_robustness}
\end{figure}

\subsection{Convergence Behavior}
\label{sec:convergence}

\begin{figure}[!tbp]
    \centering
    \includegraphics[width=\columnwidth]{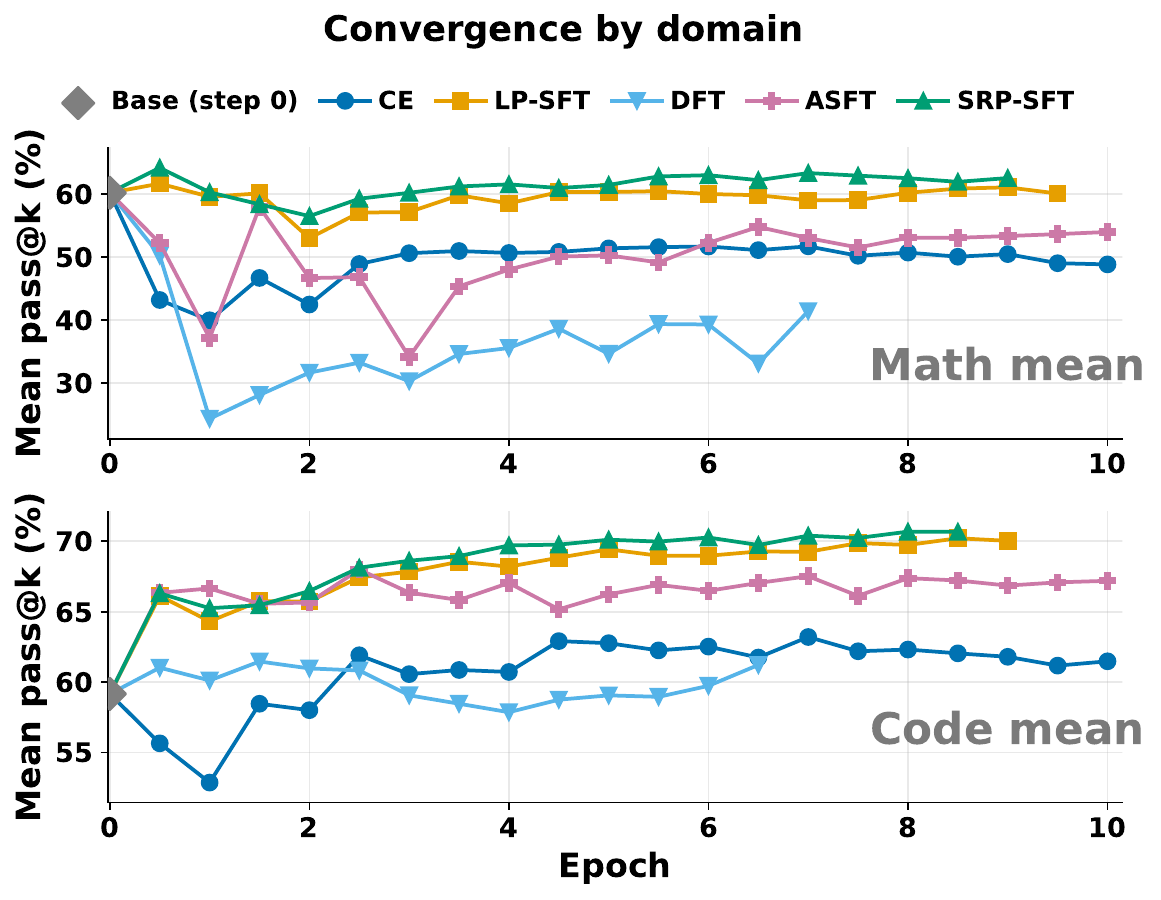}
    \caption{Ten-epoch trajectories with SRP $\rho=0.5$.
    MATH averages pass@1 and pass@4. Code averages pass@1 and pass@4 on both benchmarks.}
    \label{fig:convergence_means}
\end{figure}

One Qwen3-4B run continues for ten epochs, evaluated every half epoch. LP-SFT uses $K=10,\mu=1$, and SRP-SFT uses $K=10,\rho=0.5,\mu=1$. The MATH mean averages MATH-500 pass@1 and pass@4. The code mean averages HumanEval+ and MBPP+ pass@1 and pass@4. The step-0 Base anchor evaluates the untouched checkpoint. Its MATH-500 manifest and sampling protocol match Table~\ref{tab:main_benchmark}: the same data revision, all 500 test problems, the same prompt, temperature 0.6, top-$p$ 1, 16 samples, and a 4{,}096-token cap. The separate draw gives pass@1 48.43, pass@4 72.06, and mean 60.24. The main-table scores 48.11 and 71.23 average to 59.67. The extended-training curve uses this as its step-0 anchor. Its code mean, 59.17, uses 768-token generations in a 2{,}048-token context. Table~\ref{tab:code_benchmark} uses the 4{,}096/8{,}192 protocol, and its four Base scores average to 58.92.

The $\rho=0.5$ SRP run declines early, then keeps MATH near Base while
maintaining code gains later in training. It does not lead every early
checkpoint. Its last MATH and code measurements occur at epochs 10.0 and
9.5 (Tables~\ref{tab:convergence_math_mean} and
\ref{tab:convergence_code_mean}). The one-epoch headline uses $\rho=1$, unlike this run.

\section{Conclusion}

SRP-SFT is a soft hinge on selected non-target margins at fixed
teacher-forced prefixes. Matched seeds, loss screens, sweeps,
entropy matching, six backbones, and ten-epoch training show where this
margin target improves reasoning and code scores over CE and LP-SFT.

\section*{Limitations}
SRP-SFT is trained on models that already follow a chat template and already
have usable downstream ability. Where a base release supplies that template,
the study uses the base checkpoint. Where it does not, the study uses the
instruction-tuned release. Training and evaluation both rely on this chat
formatting, so the setup does not cover a weak model with no chat-template
training.

The trained models are at most 4B parameters, including the main study and
the additional backbones. No larger model is trained. The one-epoch schedule
and the reported math and code scores are measurements at this scale.

\FloatBarrier
\bibliography{references}

@inproceedings{cui2023ultrafeedback,
  title = {{UltraFeedback}: Boosting Language Models with Scaled {AI} Feedback},
  author = {Cui, Ganqu and Yuan, Lifan and Ding, Ning and Yao, Guanming and He, Bingxiang and Zhu, Wei and Ni, Yuan and Xie, Guotong and Xie, Ruobing and Lin, Yankai and Liu, Zhiyuan and Sun, Maosong},
  booktitle = {Proceedings of the 41st International Conference on Machine Learning},
  series = {Proceedings of Machine Learning Research},
  volume = {235},
  pages = {9722--9744},
  year = {2024},
  publisher = {PMLR},
  url = {https://proceedings.mlr.press/v235/cui24f.html}
}

@article{yang2025qwen3,
  title={Qwen3 technical report},
  author={Yang, An and Li, Anfeng and Yang, Baosong and Zhang, Beichen and Hui, Binyuan and Zheng, Bo and Yu, Bowen and Gao, Chang and Huang, Chengen and Lv, Chenxu and others},
  journal={arXiv preprint arXiv:2505.09388},
  year={2025},
  url = {https://arxiv.org/abs/2505.09388}
}

@inproceedings{wu2025generalization,
  title={On the generalization of {SFT}: A reinforcement learning perspective with reward rectification},
  author={Wu, Yongliang and Zhou, Yizhou and Ziheng, Zhou and Peng, Yingzhe and Ye, Xinyu and Hu, Xinting and Zhu, Wenbo and Qi, Lu and Yang, Ming-Hsuan and Yang, Xu},
  booktitle={The Fourteenth International Conference on Learning Representations},
  year={2026}
}

@article{diao2026entropy,
  title={Entropy-Adaptive Fine-Tuning: Resolving Confident Conflicts to Mitigate Forgetting},
  author={Diao, Muxi and Yang, Lele and Gong, Wuxuan and Zhang, Yutong and Yan, Zhonghao and Han, Yufei and Liang, Kongming and Xu, Weiran and Ma, Zhanyu},
  journal={arXiv preprint arXiv:2601.02151},
  year={2026},
  url = {https://arxiv.org/abs/2601.02151}
}

@article{lpsft,
  title = {{LP-SFT: Keeping Plausible Alternatives Alive in Supervised Fine-Tuning}},
  author = {Wang, Yueyang and Bi, Baolong and Lu, Shuo and Zhang, Jingyuan and Shi, Jiajun and Zhang, Yuxuan},
  journal = {arXiv preprint arXiv:2607.04733},
  year = {2026},
  url = {https://arxiv.org/abs/2607.04733v4}
}

@article{liu2023your,
  title={Is your code generated by chatgpt really correct? rigorous evaluation of large language models for code generation},
  author={Liu, Jiawei and Xia, Chunqiu Steven and Wang, Yuyao and Zhang, Lingming},
  journal={Advances in neural information processing systems},
  volume={36},
  pages={21558--21572},
  year={2023}
}

@article{hendrycks2021measuring,
  title={Measuring mathematical problem solving with the math dataset},
  author={Hendrycks, Dan and Burns, Collin and Kadavath, Saurav and Arora, Akul and Basart, Steven and Tang, Eric and Song, Dawn and Steinhardt, Jacob},
  journal={arXiv preprint arXiv:2103.03874},
  year={2021},
  url = {https://arxiv.org/abs/2103.03874}
}

@article{hendrycks2020measuring,
  title={Measuring massive multitask language understanding},
  author={Hendrycks, Dan and Burns, Collin and Basart, Steven and Zou, Andy and Mazeika, Mantas and Song, Dawn and Steinhardt, Jacob},
  journal={arXiv preprint arXiv:2009.03300},
  year={2020},
  url = {https://arxiv.org/abs/2009.03300}
}

@article{chen2021evaluating,
  title={Evaluating large language models trained on code},
  author={Chen, Mark and Tworek, Jerry and Jun, Heewoo and Yuan, Qiming and Pinto, Henrique Ponde De Oliveira and Kaplan, Jared and Edwards, Harri and Burda, Yuri and Joseph, Nicholas and Brockman, Greg and others},
  journal={arXiv preprint arXiv:2107.03374},
  year={2021},
  url = {https://arxiv.org/abs/2107.03374}
}

@inproceedings{li2025preserving,
  title={Preserving diversity in supervised fine-tuning of large language models},
  author={Li, Ziniu and Chen, Congliang and Xu, Tian and Qin, Zeyu and Xiao, Jiancong and Luo, Zhi-Quan and Sun, Ruoyu},
  booktitle={International Conference on Learning Representations},
  volume={2025},
  pages={66127--66154},
  year={2025}
}

@inproceedings{zhu2025anchored,
  title={Anchored Supervised Fine-Tuning},
  author={Zhu, He and Su, Junyou and Lai, Peng and Ma, Ren and Zhang, Wenjia and Yang, Linyi and Chen, Guanhua},
  booktitle={The Fourteenth International Conference on Learning Representations},
  year={2026}
}

@article{luo2025empirical,
  title={An empirical study of catastrophic forgetting in large language models during continual fine-tuning},
  author={Luo, Yun and Yang, Zhen and Meng, Fandong and Li, Yafu and Zhou, Jie and Zhang, Yue},
  journal={IEEE Transactions on Audio, Speech and Language Processing},
  year={2025},
  publisher={IEEE}
}

@article{hinton2015distilling,
  title={Distilling the knowledge in a neural network},
  author={Hinton, Geoffrey and Vinyals, Oriol and Dean, Jeff},
  journal={arXiv preprint arXiv:1503.02531},
  year={2015},
  url = {https://arxiv.org/abs/1503.02531}
}

@inproceedings{gu2024minillm,
  title={Minillm: Knowledge distillation of large language models},
  author={Gu, Yuxian and Dong, Li and Wei, Furu and Huang, Minlie},
  booktitle={International Conference on Learning Representations},
  volume={2024},
  pages={32694--32717},
  year={2024}
}

@inproceedings{agarwal2024policy,
  title={On-policy distillation of language models: Learning from self-generated mistakes},
  author={Agarwal, Rishabh and Vieillard, Nino and Zhou, Yongchao and Stanczyk, Piotr and Ramos Garea, Sabela and Geist, Matthieu and Bachem, Olivier},
  booktitle={International Conference on Learning Representations},
  volume={2024},
  pages={21246--21263},
  year={2024}
}

@book{theory-convex,
  author    = {Stephen Boyd and Lieven Vandenberghe},
  title     = {Convex Optimization},
  publisher = {Cambridge University Press},
  year      = {2004},
  url       = {https://stanford.edu/~boyd/cvxbook/}
}

@inproceedings{theory-ntk,
  author    = {Thang Doan and Mehdi Abbana Bennani and Bogdan Mazoure and Guillaume Rabusseau and Pierre Alquier},
  title     = {A Theoretical Analysis of Catastrophic Forgetting through the {NTK} Overlap Matrix},
  booktitle = {Proceedings of the 24th International Conference on Artificial Intelligence and Statistics},
  series    = {Proceedings of Machine Learning Research},
  volume    = {130},
  pages     = {1072--1080},
  publisher = {PMLR},
  year      = {2021},
  url       = {https://proceedings.mlr.press/v130/doan21a.html}
}

@inproceedings{theory-robustness,
  author    = {Matthias Hein and Maksym Andriushchenko},
  title     = {Formal Guarantees on the Robustness of a Classifier against Adversarial Manipulation},
  booktitle = {Advances in Neural Information Processing Systems},
  volume    = {30},
  year      = {2017},
  url       = {https://arxiv.org/abs/1705.08475}
}

@article{theory-hoeffding,
  author  = {Wassily Hoeffding},
  title   = {Probability Inequalities for Sums of Bounded Random Variables},
  journal = {Journal of the American Statistical Association},
  volume  = {58},
  number  = {301},
  pages   = {13--30},
  year    = {1963},
  doi     = {10.1080/01621459.1963.10500830},
  url     = {https://doi.org/10.1080/01621459.1963.10500830}
}

@inproceedings{theory-bild,
  author    = {Minchong Li and Feng Zhou and Xiaohui Song},
  title     = {{BiLD}: Bi-directional Logits Difference Loss for Large Language Model Distillation},
  booktitle = {Proceedings of the 31st International Conference on Computational Linguistics},
  pages     = {1168--1182},
  publisher = {Association for Computational Linguistics},
  year      = {2025},
  url       = {https://aclanthology.org/2025.coling-main.78/}
}

@inproceedings{theory-gem,
  author    = {David Lopez-Paz and Marc'Aurelio Ranzato},
  title     = {Gradient Episodic Memory for Continual Learning},
  booktitle = {Advances in Neural Information Processing Systems},
  volume    = {30},
  year      = {2017},
  url       = {https://arxiv.org/abs/1706.08840}
}

@article{theory-penalty,
  author  = {Hua Zhou and Kenneth Lange},
  title   = {Path Following in the Exact Penalty Method of Convex Programming},
  journal = {arXiv preprint arXiv:1201.3593},
  year    = {2012},
  url     = {https://arxiv.org/abs/1201.3593}
}

@inproceedings{burges2005ranknet,
  author    = {Christopher J. C. Burges and Tal Shaked and Erin Renshaw and Ari Lazier and Matt Deeds and Nicole Hamilton and Gregory N. Hullender},
  title     = {Learning to Rank Using Gradient Descent},
  booktitle = {Proceedings of the 22nd International Conference on Machine Learning},
  pages     = {89--96},
  year      = {2005},
  doi       = {10.1145/1102351.1102363},
  url       = {https://mlanthology.org/icml/2005/burges2005icml-learning/}
}

@inproceedings{cao2007listnet,
  author    = {Zhe Cao and Tao Qin and Tie-Yan Liu and Ming-Feng Tsai and Hang Li},
  title     = {Learning to Rank: From Pairwise Approach to Listwise Approach},
  booktitle = {Proceedings of the 24th International Conference on Machine Learning},
  pages     = {129--136},
  year      = {2007},
  doi       = {10.1145/1273496.1273513},
  url       = {https://mlanthology.org/icml/2007/cao2007icml-learning/}
}

@inproceedings{xia2008listmle,
  author    = {Fen Xia and Tie-Yan Liu and Jue Wang and Wensheng Zhang and Hang Li},
  title     = {Listwise Approach to Learning to Rank: Theory and Algorithm},
  booktitle = {Proceedings of the 25th International Conference on Machine Learning},
  pages     = {1192--1199},
  year      = {2008},
  doi       = {10.1145/1390156.1390306},
  url       = {https://mlanthology.org/icml/2008/xia2008icml-listwise/}
}

@article{spearman1904,
  author  = {Charles Spearman},
  title   = {The Proof and Measurement of Association between Two Things},
  journal = {The American Journal of Psychology},
  volume  = {15},
  number  = {1},
  pages   = {72--101},
  year    = {1904},
  doi     = {10.2307/1412159},
  url     = {https://www.jstor.org/stable/1412159}
}

@inproceedings{lfu,
  author    = {Yunhun Nam and Jaehyung Kim and Jongheon Jeong},
  title     = {Learning from the Undesirable: Robust Adaptation of Language Models Without Forgetting},
  booktitle = {Proceedings of the AAAI Conference on Artificial Intelligence},
  volume    = {40},
  pages     = {32537--32545},
  year      = {2026},
  doi       = {10.1609/aaai.v40i38.40530},
  url       = {https://ojs.aaai.org/index.php/AAAI/article/view/40530}
}

@article{marginmse,
  author  = {Sebastian Hofst{\"a}tter and Sophia Althammer and Michael Schr{\"o}der and Mete Sertkan and Allan Hanbury},
  title   = {Improving Efficient Neural Ranking Models with Cross-Architecture Knowledge Distillation},
  journal = {arXiv preprint arXiv:2010.02666},
  year    = {2020},
  url     = {https://arxiv.org/abs/2010.02666}
}

@inproceedings{rlkd,
  author    = {Tianyu Peng and Jiajun Zhang},
  title     = {Enhancing Knowledge Distillation of Large Language Models through Efficient Multi-Modal Distribution Alignment},
  booktitle = {Proceedings of the 31st International Conference on Computational Linguistics},
  pages     = {2478--2496},
  address   = {Abu Dhabi, UAE},
  publisher = {Association for Computational Linguistics},
  year      = {2025},
  url       = {https://aclanthology.org/2025.coling-main.169/}
}

@inproceedings{kendallkd,
  author    = {Yuchen Guan and Runxi Cheng and Kang Liu and Chun Yuan},
  title     = {Enhancing Logits Distillation with Plug\&Play Kendall's $\tau$ Ranking Loss},
  booktitle = {Proceedings of the 42nd International Conference on Machine Learning},
  series    = {Proceedings of Machine Learning Research},
  volume    = {267},
  pages     = {20537--20552},
  publisher = {PMLR},
  year      = {2025},
  url       = {https://proceedings.mlr.press/v267/guan25a.html}
}

@article{li2018lwf,
  title = {{Learning without Forgetting}},
  author = {Li, Zhizhong and Hoiem, Derek},
  journal = {IEEE Transactions on Pattern Analysis and Machine Intelligence},
  year = {2018},
  volume = {40},
  number = {12},
  pages = {2935--2947},
  doi = {10.1109/tpami.2017.2773081},
  url = {https://doi.org/10.1109/tpami.2017.2773081}
}

@article{kirkpatrick2017ewc,
  title = {{Overcoming catastrophic forgetting in neural networks}},
  author = {Kirkpatrick, James and Pascanu, Razvan and Rabinowitz, Neil and Veness, Joel and Desjardins, Guillaume and Rusu, Andrei A. and Milan, Kieran and Quan, John and Ramalho, Tiago and Grabska-Barwinska, Agnieszka and Hassabis, Demis and Clopath, Claudia and Kumaran, Dharshan and Hadsell, Raia},
  journal = {Proceedings of the National Academy of Sciences},
  year = {2017},
  volume = {114},
  number = {13},
  pages = {3521--3526},
  doi = {10.1073/pnas.1611835114},
  url = {https://doi.org/10.1073/pnas.1611835114}
}

@inproceedings{li2018l2sp,
  title = {{Explicit Inductive Bias for Transfer Learning with Convolutional Networks}},
  author = {Li, Xuhong and Grandvalet, Yves and Davoine, Franck},
  booktitle = {Proceedings of the 35th International Conference on Machine Learning},
  series = {Proceedings of Machine Learning Research},
  volume = {80},
  pages = {2825--2834},
  year = {2018},
  publisher = {PMLR},
  url = {https://proceedings.mlr.press/v80/li18a.html}
}

@article{jin2026rpsft,
  title = {{Rotation-Preserving Supervised Fine-Tuning}},
  author = {Jin, Hangzhan and Ni, Tianwei and Li, Lu and Bacon, Pierre-Luc and Hamdaqa, Mohammad and Precup, Doina},
  journal = {arXiv preprint arXiv:2605.10973},
  year = {2026},
  url = {https://arxiv.org/abs/2605.10973}
}

@inproceedings{zhao2022dkd,
  title = {{Decoupled Knowledge Distillation}},
  author = {Zhao, Borui and Cui, Quan and Song, Renjie and Qiu, Yiyu and Liang, Jiajun},
  booktitle = {Proceedings of the IEEE/CVF Conference on Computer Vision and Pattern Recognition},
  year = {2022},
  pages = {11953--11962},
  url = {https://openaccess.thecvf.com/content/CVPR2022/html/Zhao_Decoupled_Knowledge_Distillation_CVPR_2022_paper.html}
}

@inproceedings{huang2022dist,
  title = {{Knowledge Distillation from A Stronger Teacher}},
  author = {Huang, Tao and You, Shan and Wang, Fei and Qian, Chen and Xu, Chang},
  booktitle = {Advances in Neural Information Processing Systems},
  year = {2022},
  volume = {35},
  pages = {33716--33727},
  url = {https://proceedings.neurips.cc/paper_files/paper/2022/hash/da669dfd3c36c93905a17ddba01eef06-Abstract-Conference.html}
}

@inproceedings{reddi2021rankdistil,
  title = {{RankDistil: Knowledge Distillation for Ranking}},
  author = {Reddi, Sashank and Kumar Pasumarthi, Rama and Menon, Aditya and Singh Rawat, Ankit and Yu, Felix and Kim, Seungyeon and Veit, Andreas and Kumar, Sanjiv},
  booktitle = {Proceedings of the 24th International Conference on Artificial Intelligence and Statistics},
  series = {Proceedings of Machine Learning Research},
  volume = {130},
  pages = {2368--2376},
  year = {2021},
  publisher = {PMLR},
  url = {https://proceedings.mlr.press/v130/reddi21a.html}
}

@article{moshkov2025openmathreasoning,
  title = {{AIMO-2 Winning Solution: Building State-of-the-Art Mathematical Reasoning Models with OpenMathReasoning dataset}},
  author = {Moshkov, Ivan and Hanley, Darragh and Sorokin, Ivan and Toshniwal, Shubham and Henkel, Christof and Schifferer, Benedikt and Du, Wei and Gitman, Igor},
  journal = {arXiv preprint arXiv:2504.16891},
  year = {2025},
  url = {https://arxiv.org/abs/2504.16891}
}

@article{liu2025rstarcoder,
  title = {{rStar-Coder: Scaling Competitive Code Reasoning with a Large-Scale Verified Dataset}},
  author = {Liu, Yifei and Zhang, Li Lyna and Zhu, Yi and Dong, Bingcheng and Zhou, Xudong and Shang, Ning and Yang, Fan and Yang, Mao},
  journal = {arXiv preprint arXiv:2505.21297},
  year = {2025},
  url = {https://arxiv.org/abs/2505.21297}
}

@inproceedings{li2024gsmplus,
    title = "{GSM}-Plus: A Comprehensive Benchmark for Evaluating the Robustness of {LLM}s as Mathematical Problem Solvers",
    author = "Li, Qintong  and
      Cui, Leyang  and
      Zhao, Xueliang  and
      Kong, Lingpeng  and
      Bi, Wei",
    editor = "Ku, Lun-Wei  and
      Martins, Andre  and
      Srikumar, Vivek",
    booktitle = "Proceedings of the 62nd Annual Meeting of the Association for Computational Linguistics (Volume 1: Long Papers)",
    month = aug,
    year = "2024",
    address = "Bangkok, Thailand",
    publisher = "Association for Computational Linguistics",
    url = "https://aclanthology.org/2024.acl-long.163/",
    doi = "10.18653/v1/2024.acl-long.163",
    pages = "2961--2984"
}

@inproceedings{suzgun2023bbh,
    title = "Challenging {BIG}-Bench Tasks and Whether Chain-of-Thought Can Solve Them",
    author = {Suzgun, Mirac  and
      Scales, Nathan  and
      Sch{\"a}rli, Nathanael  and
      Gehrmann, Sebastian  and
      Tay, Yi  and
      Chung, Hyung Won  and
      Chowdhery, Aakanksha  and
      Le, Quoc  and
      Chi, Ed  and
      Zhou, Denny  and
      Wei, Jason},
    editor = "Rogers, Anna  and
      Boyd-Graber, Jordan  and
      Okazaki, Naoaki",
    booktitle = "Findings of the Association for Computational Linguistics: ACL 2023",
    month = jul,
    year = "2023",
    address = "Toronto, Canada",
    publisher = "Association for Computational Linguistics",
    url = "https://aclanthology.org/2023.findings-acl.824/",
    doi = "10.18653/v1/2023.findings-acl.824",
    pages = "13003--13051"
}

@inproceedings{zellers2019hellaswag,
    title = "{H}ella{S}wag: Can a Machine Really Finish Your Sentence?",
    author = "Zellers, Rowan  and
      Holtzman, Ari  and
      Bisk, Yonatan  and
      Farhadi, Ali  and
      Choi, Yejin",
    editor = "Korhonen, Anna  and
      Traum, David  and
      M{\`a}rquez, Llu{\'i}s",
    booktitle = "Proceedings of the 57th Annual Meeting of the Association for Computational Linguistics",
    month = jul,
    year = "2019",
    address = "Florence, Italy",
    publisher = "Association for Computational Linguistics",
    url = "https://aclanthology.org/P19-1472/",
    doi = "10.18653/v1/P19-1472",
    pages = "4791--4800"
}

@inproceedings{he2024olympiadbench,
    title = "{O}lympiad{B}ench: A Challenging Benchmark for Promoting {AGI} with Olympiad-Level Bilingual Multimodal Scientific Problems",
    author = "He, Chaoqun  and
      Luo, Renjie  and
      Bai, Yuzhuo  and
      Hu, Shengding  and
      Thai, Zhen  and
      Shen, Junhao  and
      Hu, Jinyi  and
      Han, Xu  and
      Huang, Yujie  and
      Zhang, Yuxiang  and
      Liu, Jie  and
      Qi, Lei  and
      Liu, Zhiyuan  and
      Sun, Maosong",
    editor = "Ku, Lun-Wei  and
      Martins, Andre  and
      Srikumar, Vivek",
    booktitle = "Proceedings of the 62nd Annual Meeting of the Association for Computational Linguistics (Volume 1: Long Papers)",
    month = aug,
    year = "2024",
    address = "Bangkok, Thailand",
    publisher = "Association for Computational Linguistics",
    url = "https://aclanthology.org/2024.acl-long.211/",
    doi = "10.18653/v1/2024.acl-long.211",
    pages = "3828--3850"
}

@article{liu2023logiqa2,
  title = {{LogiQA 2.0—An Improved Dataset for Logical Reasoning in Natural Language Understanding}},
  author = {Liu, Hanmeng and Liu, Jian and Cui, Leyang and Teng, Zhiyang and Duan, Nan and Zhou, Ming and Zhang, Yue},
  journal = {IEEE/ACM Transactions on Audio, Speech, and Language Processing},
  year = {2023},
  volume = {31},
  pages = {2947--2962},
  doi = {10.1109/taslp.2023.3293046},
  url = {https://doi.org/10.1109/taslp.2023.3293046}
}

@article{clark2018arc,
  title = {{Think you have Solved Question Answering? Try ARC, the AI2 Reasoning Challenge}},
  author = {Clark, Peter and Cowhey, Isaac and Etzioni, Oren and Khot, Tushar and Sabharwal, Ashish and Schoenick, Carissa and Tafjord, Oyvind},
  journal = {arXiv preprint arXiv:1803.05457},
  year = {2018},
  url = {https://arxiv.org/abs/1803.05457}
}

@article{austin2021mbpp,
  title = {{Program Synthesis with Large Language Models}},
  author = {Austin, Jacob and Odena, Augustus and Nye, Maxwell and Bosma, Maarten and Michalewski, Henryk and Dohan, David and Jiang, Ellen and Cai, Carrie and Terry, Michael and Le, Quoc and Sutton, Charles},
  journal = {arXiv preprint arXiv:2108.07732},
  year = {2021},
  url = {https://arxiv.org/abs/2108.07732}
}

@article{cobbe2021gsm8k,
  title = {{Training Verifiers to Solve Math Word Problems}},
  author = {Cobbe, Karl and Kosaraju, Vineet and Bavarian, Mohammad and Chen, Mark and Jun, Heewoo and Kaiser, Lukasz and Plappert, Matthias and Tworek, Jerry and Hilton, Jacob and Nakano, Reiichiro and Hesse, Christopher and Schulman, John},
  journal = {arXiv preprint arXiv:2110.14168},
  year = {2021},
  url = {https://arxiv.org/abs/2110.14168}
}

@misc{qwen2026qwen35,
    title  = {{Qwen3.5}: Towards Native Multimodal Agents},
    author = {{Qwen Team}},
    month  = {February},
    year   = {2026},
    url    = {https://qwen.ai/blog?id=qwen3.5}
}

@misc{ibm2026granite41,
  title = {{Granite-4.1-3B Model Card}},
  author = {{IBM Granite Team}},
  year = {2026},
  howpublished = {Hugging Face model card},
  url = {https://huggingface.co/ibm-granite/granite-4.1-3b}
}

@misc{mistral2025ministral3,
  title = {{Ministral 3 3B Instruct Model Card}},
  author = {{Mistral AI}},
  year = {2025},
  howpublished = {Hugging Face model card},
  url = {https://huggingface.co/mistralai/Ministral-3-3B-Instruct-2512}
}

@article{google2025gemma3,
  title = {{Gemma 3 Technical Report}},
  author = {{Gemma Team} and others},
  journal = {arXiv preprint arXiv:2503.19786},
  year = {2025},
  url = {https://arxiv.org/abs/2503.19786}
}

@misc{hf2025smollm3,
  title={{SmolLM3: smol, multilingual, long-context reasoner}},
  author={Bakouch, Elie and Ben Allal, Loubna and Lozhkov, Anton and Tazi, Nouamane and Tunstall, Lewis and Patiño, Carlos Miguel and Beeching, Edward and Roucher, Aymeric and Reedi, Aksel Joonas and Gallouédec, Quentin and Rasul, Kashif and Habib, Nathan and Fourrier, Clémentine and Kydlicek, Hynek and Penedo, Guilherme and Larcher, Hugo and Morlon, Mathieu and Srivastav, Vaibhav and Lochner, Joshua and Nguyen, Xuan-Son and Raffel, Colin and von Werra, Leandro and Wolf, Thomas},
  year={2025},
  howpublished={\url{https://huggingface.co/blog/smollm3}}
}

@inproceedings{kwon2023vllm,
  title={Efficient Memory Management for Large Language Model Serving with PagedAttention},
  author={Woosuk Kwon and Zhuohan Li and Siyuan Zhuang and Ying Sheng and Lianmin Zheng and Cody Hao Yu and Joseph E. Gonzalez and Hao Zhang and Ion Stoica},
  booktitle={Proceedings of the ACM SIGOPS 29th Symposium on Operating Systems Principles},
  year={2023},
  url={https://arxiv.org/abs/2309.06180}
}

@misc{gao2023evalharness,
  author       = {Gao, Leo and Tow, Jonathan and Abbasi, Baber and Biderman, Stella and Black, Sid and DiPofi, Anthony and Foster, Charles and Golding, Laurence and Hsu, Jeffrey and Le Noac'h, Alain and Li, Haonan and McDonell, Kyle and Muennighoff, Niklas and Ociepa, Chris and Phang, Jason and Reynolds, Laria and Schoelkopf, Hailey and Skowron, Aviya and Sutawika, Lintang and Tang, Eric and Thite, Anish and Wang, Ben and Wang, Kevin and Zou, Andy},
  title        = {A framework for few-shot language model evaluation},
  month        = 12,
  year         = 2023,
  publisher    = {Zenodo},
  version      = {v0.4.0},
  doi          = {10.5281/zenodo.10256836},
  url          = {https://zenodo.org/records/10256836}
}

@article{srp-csiszar1975,
    author = {Csisz{\'a}r, Imre},
    title = {{$I$}-Divergence Geometry of Probability Distributions and Minimization Problems},
    journal = {The Annals of Probability},
    volume = {3},
    number = {1},
    pages = {146--158},
    year = {1975},
    doi = {10.1214/aop/1176996454}
  }

@article{srp-stout2012,
    author = {Stout, Quentin F.},
    title = {Strict {$L_\infty$} Isotonic Regression},
    journal = {Journal of Optimization Theory and Applications},
    volume = {152},
    number = {1},
    pages = {121--135},
    year = {2012},
    doi = {10.1007/s10957-011-9865-8}
  }

@book{srp-shaked2007,
    author = {Shaked, Moshe and Shanthikumar, J. George},
    title = {Stochastic Orders},
    series = {Springer Series in Statistics},
    publisher = {Springer},
    address = {New York},
    year = {2007},
    doi = {10.1007/978-0-387-34675-5}
  }

\clearpage
\appendix
\setcounter{topnumber}{3}
\setcounter{bottomnumber}{2}
\setcounter{totalnumber}{5}
\setcounter{dbltopnumber}{2}
\renewcommand{\topfraction}{0.95}
\renewcommand{\bottomfraction}{0.90}
\renewcommand{\textfraction}{0.05}
\renewcommand{\floatpagefraction}{0.85}
\renewcommand{\dbltopfraction}{0.95}
\renewcommand{\dblfloatpagefraction}{0.85}
\raggedbottom
\makeatletter
\setlength{\@fptop}{0pt}
\setlength{\@fpbot}{0pt plus 1fil}
\setlength{\@dblfptop}{0pt}
\setlength{\@dblfpbot}{0pt plus 1fil}
\makeatother

\appendixcontents
\appsection{Additional Background and Method Details}
\label{app:background}
\paragraph{CE as a distribution update.}
At a response position, the student probabilities satisfy
$p_t^S(k)=\exp(z_{t,k}^S)/\sum_v\exp(z_{t,v}^S)$ and CE optimizes
$-\log p_t^S(y_t)$. In free-logit coordinates, a CE step changes a non-target margin by roughly
$-\eta(p_t^S(i)-p_t^S(j))$, so the higher-probability alternative is pushed
down more strongly. Parameter sharing can change this behavior across states.
Appendix~\ref{app:theory_parameter} gives the corresponding parameter-space
identity.

\paragraph{Directional versus distribution preservation.}
Local KL is zero at the reference candidate distribution. SRP-SFT instead
has a region of zero auxiliary loss, defined by
$\Delta^S_{ij,t}\ge\rho\Delta^T_{ij,t}$. Both permit common logit shifts.
SRP additionally permits nonconstant changes satisfying these inequalities.
Further expansion beyond the scaled boundary incurs no penalty, although
expansion relative to the teacher can still be penalized when $\rho>1$.

\paragraph{Gradient and cost.}
For one active hinge,
$\ell_{ij,t}=\rho\Delta^T_{ij,t}-(z^S_{t,i}-z^S_{t,j})$, so
$\partial\ell/\partial z^S_{t,i}=-1$ and
$\partial\ell/\partial z^S_{t,j}=+1$. Multiple active pairs accumulate these
local corrections. Reference logits can be cached before SFT. Subsequent
training uses only cached token IDs/logits and gathered student logits. Thus
SRP-SFT adds at most $K(K-1)/2$ simple pairwise terms per supervised position
without an extra reference forward pass during student training.

\appsection{Proofs and Theoretical Extensions}
\label{app:srp_proofs}

This appendix proves the KL threshold, the feasible-set and utility
characterization, and the finite-step repair result in
Section~\ref{sec:srp_theory}. It also develops the sparse-graph and
objective-budget extensions, the pairwise certificate, CE dynamics,
exact-penalty analysis, and coverage conditions.

\appsubsection{Exact KL Thresholds and Margin Preservation}
\label{app:srp_kl}

\paragraph{Notation.}
All probabilities in this subsection are normalized on the same candidate
set. For $a,b\in(0,1)$, define
\begin{equation}
 \operatorname{kl}(a\|b)
 =a\log\frac{a}{b}+(1-a)\log\frac{1-a}{1-b}.
 \label{eq:srp_binary_kl}
\end{equation}
Theorem~\ref{thm:srp_kl_threshold} is a finite-alphabet constrained-KL
calculation. KL projection is classical \citep{srp-csiszar1975}. The
orientation here is explicitly $\operatorname{KL}(q\|p)$ with $q$ fixed.
The calculation below uses this fixed-reference orientation.

\begin{proof}[Proof of Theorem~\ref{thm:srp_kl_threshold}]
Write $m=q_i+q_j$, $a=q_i/m=\sigma(d)$,
$s=p_i+p_j$, and $r=p_i/s$. If there are other candidates, grouping
$\{i,j\}$ and its complement gives the exact decomposition
\begin{equation}
 \begin{aligned}
 \operatorname{KL}(q\|p)
 &=\operatorname{kl}(m\|s)+m\operatorname{kl}(a\|r)\\
 &\quad +(1-m)\operatorname{KL}(q_{-ij}\|p_{-ij}),
 \end{aligned}
 \label{eq:srp_kl_chain}
\end{equation}
where $q_{-ij},p_{-ij}$ are the conditional distributions on the
complement. The constraint is $r\leq\sigma(b)<a$. Differentiation yields
\begin{equation}
 \frac{\partial}{\partial r}\operatorname{kl}(a\|r)
 =\frac{r-a}{r(1-r)}<0\qquad(0<r<a).
\end{equation}
Consequently the minimum conditional cost occurs at $r=\sigma(b)$.
The remaining nonnegative terms vanish simultaneously at $s=m$ and
$p_{-ij}=q_{-ij}$. This yields the claimed optimizer. For two candidates,
$m=s=1$ and the conditional Bernoulli term is the whole KL, so the same
formula holds without a complement. A strict inequality in the margin
constraint has the same infimum, approached at the stated boundary.
\end{proof}

\paragraph{Local contraction cost.}
Let $A(x)=\log(1+e^x)$. Direct substitution gives
\begin{equation}
 \operatorname{kl}(\sigma(d)\|\sigma(d-\delta))
 =A(d-\delta)-A(d)+\delta\sigma(d).
\end{equation}
Since $A''(d)=\sigma(d)(1-\sigma(d))$, Taylor expansion establishes the
quadratic term in the main text. This is the \emph{minimum} KL cost of the
prescribed contraction. An arbitrary student can pay more because it also
changes pair mass or other candidates. The unweighted SRP pair term at
$\rho=1$ is exactly $\delta$, independent of $m$. The position-level
contribution is $\delta/|\mathcal P|$.

\paragraph{Pair mass and gap.}
Fix $d>0$ and any $b<0$. On three candidates, let
\begin{equation}
 q=(m\sigma(d),m\sigma(-d),1-m)
\end{equation}
and use the optimizing student from the theorem. Its selected pair is
inverted, while its KL is
$m\operatorname{kl}(\sigma(d)\|\sigma(b))\to0$ as $m\downarrow0$.
Both candidates can belong to the selected set: this example can be the
entire locally normalized three-candidate set. Independently, at fixed
pair mass the cost of reaching a tie tends to zero as $d\downarrow0$.
For a \emph{fixed} $q$ and specified pair the threshold in
Eq.~\ref{eq:srp_kl_threshold} is strictly positive.

\paragraph{Zero SRP with diverging KL.}
For a nonuniform, strictly positive teacher distribution $q$, define
$p_i^{(c)}=q_i^c/\sum_jq_j^c$. All pair margins are multiplied by $c$,
so SRP is zero for $c\geq\max\{1,\rho\}$. Nevertheless,
\begin{equation}
 \operatorname{KL}(q\|p^{(c)})
 =(1-c)\sum_iq_i\log q_i+\log\sum_iq_i^c.
\end{equation}
Dividing by $c$ and sending $c\to\infty$ gives
$\log q_{\max}-\sum_iq_i\log q_i>0$, so
$\operatorname{KL}(q\|p^{(c)})$ diverges as $c\to\infty$ while SRP stays zero.

\appsubsection{Order-Cone Geometry and Conditional Utility}
\label{app:srp_geometry}

Fix one prefix and suppress its index. Let $\mathcal A$ have $k$ tokens.
Write $t_i=z_i^T$, $s_i=z_i^S$, and $x_i=s_i-\rho t_i$ for $i\in\mathcal A$.
The teacher defines the strict order $i\succ j$ whenever $t_i>t_j$. Equal
teacher logits are incomparable. Define the corresponding order cone
\begin{equation}
 \mathcal O=\{x:x_i\geq x_j\text{ whenever }i\succ j\}.
\end{equation}
Then $\mathcal C_\rho=\rho t+\mathcal O$ and
$v_{ij}=[x_j-x_i]_+$. This is an order-restricted approximation problem,
related to classical $L_\infty$ isotonic regression \citep{srp-stout2012}.

\begin{proof}[Proof of Eq.~\ref{eq:srp_repair_distance}]
For any $x^*\in\mathcal O$ and $i\succ j$,
\begin{equation}
 x_j-x_i\leq |x_j-x_j^*|+|x_i-x_i^*|
 \leq2\|x-x^*\|_\infty.
\end{equation}
Taking the largest positive violation proves the lower bound $V/2$.
For the matching upper bound, set
\begin{equation}
 \begin{aligned}
 \overline x_i&=\max\{x_j:j=i\text{ or }i\succ j\},\\
 \underline x_i&=\min\{x_j:j=i\text{ or }j\succ i\},\\
 x_i^*&=(\overline x_i+\underline x_i)/2.
 \end{aligned}
 \label{eq:srp_envelopes}
\end{equation}
If $i\succ j$, transitivity makes the index set for $\overline x_i$ contain that
for $\overline x_j$, while that for $\underline x_i$ is contained in that for $\underline x_j$.
Thus $\overline x_i\geq\overline x_j$, $\underline x_i\geq\underline x_j$, and $x^*\in\mathcal O$.
Moreover, $0\leq\overline x_i-x_i\leq V$ and $0\leq x_i-\underline x_i\leq V$, giving
$|x_i^*-x_i|\leq V/2$. Hence $s^*=\rho t+x^*$ is feasible and has the
required distance. If there are no strict pairs, $\overline x_i=\underline x_i=x_i$ and both
sides of Eq.~\ref{eq:srp_repair_distance} are zero. Teacher ties leave the
distance at zero without a within-tie order.
\end{proof}

\paragraph{Zero-violation utility ordering.}
Let $\widetilde p=\operatorname{softmax}(s^*)$ for any
$s^*\in\mathcal C_\rho$ and let $q^{(\rho)}=\operatorname{softmax}(\rho t)$.
Write
$r_i=\widetilde p_i/q_i^{(\rho)}$. The feasibility inequalities make
$r_i$ nondecreasing with teacher preference across distinct teacher levels.
For a utility $U$ as in Theorem~\ref{thm:srp_geometry_utility}, the
likelihood-ratio ordering implication can be written as a covariance
\citep{srp-shaked2007}:
\begin{equation}
 \begin{aligned}
 \mathbb E_{\widetilde p}U-\mathbb E_{q^{(\rho)}}U
 &=\operatorname{Cov}_{q^{(\rho)}}(r,U)\\
 &=\frac12\sum_{i,j}q_i^{(\rho)}q_j^{(\rho)}
       (r_i-r_j)(U_i-U_j)\geq0.
 \end{aligned}
 \label{eq:srp_utility_order}
\end{equation}
The final inequality follows because both $r$ and $U$ are nondecreasing in
teacher preference. Pairs tied under the teacher have equal utility and
contribute zero. Applying the same covariance argument to
$q^{(\rho)}$ and $q$ shows that
$\mathbb E_{q^{(\rho)}}U\geq\mathbb E_qU$ for $\rho\geq1$, since
$q^{(\rho)}_i/q_i$ is proportional to
$\exp((\rho-1)t_i)$. The reference in Eq.~\ref{eq:srp_utility_bound} is the flattened
$q^{(\rho)}$.

\begin{srplemma}[Softmax sensitivity to a bounded logit range]
\label{lem:srp_softmax_tv}
Let $p=\operatorname{softmax}(s)$,
$p'=\operatorname{softmax}(s+e)$, and
$D=\max_i e_i-\min_i e_i$. Then
\begin{equation}
 \operatorname{TV}(p,p')\leq\tanh(D/4).
 \label{eq:srp_softmax_tv}
\end{equation}
In particular, $\|e\|_\infty\leq\varepsilon$ implies
$\operatorname{TV}(p,p')\leq\tanh(\varepsilon/2)$.
\end{srplemma}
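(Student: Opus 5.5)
The plan is to reduce the total-variation distance to a one-dimensional extremal problem. We use the identity $\operatorname{TV}(p,p')=\max_{B}\,(p'(B)-p(B))$, with the maximum taken over subsets $B$ of the candidate set.

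Adding a constant to $e$ changes neither $p'$ nor $D$. We may therefore shift $e$ so that it takes values in $[-D/2,D/2]$.

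Fix a set $B$ and write $\alpha=p(B)\in[0,1]$. Then
\[
p'(B)=\frac{\sum_{i\in B}p_ie^{e_i}}{\sum_{i}p_ie^{e_i}} .
\]
This expression is increasing in each $e_i$ with $i\in B$ and decreasing in each $e_i$ with $i\notin B$. It is therefore maximized by setting $e_i=D/2$ on $B$ and $e_i=-D/2$ off $B$. This gives
\[
p'(B)-p(B)\leq\frac{\alpha e^{D}}{\alpha e^{D}+1-\alpha}-\alpha .
\]

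Next we maximize the right-hand side over $\alpha\in(0,1)$. Write $\lambda=e^{D}$ and
\[
g(\alpha)=\frac{\alpha\lambda}{\alpha\lambda+1-\alpha}-\alpha=\frac{\alpha(1-\alpha)(\lambda-1)}{1+\alpha(\lambda-1)} .
\]
Setting the derivative to zero should give the maximizer $\alpha^*=1/(1+\sqrt\lambda)$. Substituting this value yields
\[
g(\alpha^*)=\frac{\sqrt\lambda-1}{\sqrt\lambda+1}=\tanh(D/4).
\]
This calculus step is the only place that needs care. To avoid slips, I would verify it by rewriting $g$ in terms of the log-odds $u=\log\frac{\alpha}{1-\alpha}$. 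In that variable,
\[
g=\sigma(u+D)-\sigma(u),
\]
and by the symmetry of $\sigma'$ this difference is maximized at $u=-D/2$. The maximum value is $\sigma(D/2)-\sigma(-D/2)=\tanh(D/4)$. This confirms the bound in Eq.~\ref{eq:srp_softmax_tv}.

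For the second claim, note that $\|e\|_\infty\leq\varepsilon$ implies $D\leq2\varepsilon$. Since $\tanh$ is monotone, we get $\operatorname{TV}(p,p')\leq\tanh(\varepsilon/2)$.

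The main obstacle is the reduction to two values of $e$, which must be justified. The monotonicity of $p'(B)$ in each coordinate of $e$ handles this cleanly, because the quotient is an increasing function of the $B$-mass and a decreasing function of the complement mass. The sup over $B$ then gives TV directly, so no further argument is required.
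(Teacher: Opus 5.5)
Your proof is correct and takes essentially the same route as the paper. For each event $B$ you bound $p'(B)\leq\sigma(\operatorname{logit}(p(B))+D)$, maximize $\sigma(u+D)-\sigma(u)$ at $u=-D/2$ to obtain $\tanh(D/4)$, and take the supremum over $B$; the only difference is cosmetic. You justify the per-event bound by a coordinatewise monotonicity (corner) argument after centering $e$, while the paper observes that the odds ratio of $p'(B)$ to $p(B)$ is a ratio of conditional means of $e^{e_i}$ on $B$ and $B^c$, and hence lies in $[e^{-D},e^{D}]$.
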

\begin{proof}
For a nontrivial event $B$ with $a=p(B)$, the ratio of the odds of $p'(B)$
to those of $p(B)$ is the ratio of the conditional means of $e^{e_i}$ on
$B$ and $B^c$. It lies in $[e^{-D},e^D]$, hence
\begin{equation}
 \sigma(\operatorname{logit}(a)-D)
 \leq p'(B)\leq\sigma(\operatorname{logit}(a)+D).
\end{equation}
Maximizing either possible difference over $a\in(0,1)$ gives
$\tanh(D/4)$: for the upper difference its maximizer is
$a=\sigma(-D/2)$. Taking the supremum over $B$ proves the claim.
The bound is attained by two probabilities with opposite logit changes
of total range $D$.
\end{proof}

\begin{proof}[Proof of Eq.~\ref{eq:srp_utility_bound} and tightness]
Use the feasible $s^*$ constructed in Eq.~\ref{eq:srp_envelopes}.
The correction $s^*-s$ has infinity norm at most $V/2$ and range at most
$V$. Lemma~\ref{lem:srp_softmax_tv} and $0\leq U_i\leq1$ give
\begin{equation}
 \mathbb E_pU\geq\mathbb E_{\widetilde p}U-\tanh(V/4)
 \geq\mathbb E_{q^{(\rho)}}U-\tanh(V/4).
\end{equation}
For tightness, choose two candidates, any $\rho>0$, and a desired
$V>0$. Set $\rho t=(V/4,-V/4)$, $s=(-V/4,V/4)$, and $U=(1,0)$.
The unique selected hinge equals $V$, and
\begin{equation}
 \begin{aligned}
 \mathbb E_{q^{(\rho)}}U-\mathbb E_pU
 &=\sigma(V/2)-\sigma(-V/2)\\
 &=\tanh(V/4).
 \end{aligned}
\end{equation}
Thus the uniform error term cannot be improved as a function of $V$
alone. At $V=0$ the bound follows by continuity.
\end{proof}

\paragraph{A common decoding temperature.}
For $\tau>0$, apply the theorem to $s/\tau$ and $t/\tau$. With
$p_\tau=\operatorname{softmax}(s/\tau)$ and
$q^{(\rho)}_\tau=\operatorname{softmax}(\rho t/\tau)$, the error term
becomes $\tanh(V/(4\tau))$.

\paragraph{A probability-weighted refinement.}
The same pairwise identity gives a second, sometimes sharper bound:
\begin{equation}
 \begin{aligned}
 \mathbb E_{q^{(\rho)}}U-\mathbb E_pU
 &\leq B(p,q^{(\rho)},v),\\
 B(p,q^{(\rho)},v)
 &:=\sum_{i\succ j}q_i^{(\rho)}p_j(1-e^{-v_{ij}}).
 \end{aligned}
 \label{eq:srp_weighted_utility}
\end{equation}
Indeed, for a violating pair,
$q_i^{(\rho)}p_j-p_iq_j^{(\rho)}
=q_i^{(\rho)}p_j(1-e^{-v_{ij}})$. Nonviolating pairs have nonpositive
contributions, and $0\leq U_i-U_j\leq1$. One may take the minimum of
$B$ and $\tanh(V/4)$ in the utility bound.

\paragraph{Imperfect utility ordering.}
Let $\mathcal U_T$ be the set of $[0,1]$-valued utilities satisfying the
teacher order and tie condition, and for an arbitrary utility $U\in[0,1]^k$
define $\zeta_T(U)=\inf_{\widetilde U\in\mathcal U_T}
\|U-\widetilde U\|_\infty$. The set is compact, so the infimum is attained.
For $\rho\geq1$,
\begin{equation}
 \mathbb E_pU\geq\mathbb E_qU
 -\min\{\tanh(V/4),B\}-2\zeta_T(U).
 \label{eq:srp_unreliable_utility}
\end{equation}
This follows by applying the ordered-utility bound to $\widetilde U$ and
bounding each expectation error by $\zeta_T(U)$.
A uniform utility bound without such a condition is impossible: a
zero-violation sharpening can put more probability on a teacher-preferred
but incorrect candidate.

\paragraph{Mean hinge and worst-pair distance.}
For $M=|\mathcal P|>0$,
$\mathcal L_{\mathrm{SRP},t}\leq V\leq M\mathcal L_{\mathrm{SRP},t}$.
Together with Eq.~\ref{eq:srp_repair_distance}, this yields
\begin{equation}
 \frac{\mathcal L_{\mathrm{SRP},t}}{2}
 \leq\operatorname{dist}_\infty(s,\mathcal C_\rho)
 \leq\frac{M\mathcal L_{\mathrm{SRP},t}}{2}.
\end{equation}
Bounding $M\leq K(K-1)/2$ introduces a factor that depends on the candidate count.

\appsubsection{Full and Adjacent Pair Graphs}
\label{app:srp_graph}
For distinct teacher logits, relabel the candidates so that
$t_1>\cdots>t_k$. Write
$a_r=[x_{r+1}-x_r]_+$, $r=1,\ldots,k-1$, and let $R_{\mathrm{adj}}$
and $R_{\mathrm{full}}$ be the respective \emph{mean} hinges.
Then
\begin{equation}
 v_{ij}\leq\sum_{r=i}^{j-1}a_r,\qquad i<j,
 \label{eq:srp_path_residual}
\end{equation}
and therefore
\begin{equation}
 \begin{aligned}
 R_{\mathrm{full}}
 &\leq\frac{1}{\binom{k}{2}}\sum_{r=1}^{k-1}r(k-r)a_r,\\
 R_{\mathrm{adj}}&\leq\frac{k}{2}R_{\mathrm{full}}.
 \end{aligned}
 \label{eq:srp_graph_bounds}
\end{equation}
In particular, the two penalties have the \emph{same zero set}. Away from
zero their means differ, as the examples below show.
\begin{proof}
The identity $x_j-x_i=\sum_{r=i}^{j-1}(x_{r+1}-x_r)$ and
$[\sum_r b_r]_+\leq\sum_r[b_r]_+$ prove
Eq.~\ref{eq:srp_path_residual}. Each adjacent edge $r$ belongs to
$r(k-r)$ full-pair paths, giving the first bound. The full sum includes
all adjacent terms, giving the second after accounting for the two
different averaging denominators. Zero adjacent residuals imply every
full constraint by telescoping. The reverse follows by subset inclusion.
\end{proof}
For example, with $k=3$ and $x=(1,0,1)$, the adjacent and full means are
$1/2$ and $1/3$. With $x=(0,1,2)$ they are $1$ and $4/3$. Thus neither
mean dominates the other pointwise. Moreover,
$V\leq\sum_ra_r=(k-1)R_{\mathrm{adj}}$, which supplies a valid
worst-pair bound for this sparse graph.
For teacher ties, arbitrarily sorting tied neighbors and then skipping them
can change the zero set. The order graph
connects every token across consecutive \emph{distinct teacher levels}
(or otherwise retain paths for every strict comparison).

\appsubsection{Finite-Step Repair with Shared Parameters}
\label{app:srp_repair}

Fix a finite collection of state--pair triples, their weights, and their
reference gaps. The collection and weights stay fixed inside each inequality.
Let
\begin{equation}
 \begin{aligned}
 b_a&=\rho d_a^T-d_a^S(\theta),\\
 I_a&=\mathbf1[b_a>0],\qquad
 r=\sum_aw_aI_a\nabla d_a^S(\theta),\\
 L_R&=\sum_aw_aL_a.
 \end{aligned}
\end{equation}
The convention $I_a=0$ at equality matches the implementation's zero
hinge subgradient. For an arbitrary actual update $h$, set
$u_a=-\langle\nabla d_a^S,h\rangle$ and define
\begin{equation}
 C(h)=\sum_aw_a\bigl([b_a+u_a]_+-[b_a]_+-I_au_a\bigr).
 \label{eq:srp_crossing}
\end{equation}
Convexity of the scalar hinge implies $C(h)\geq0$. Its summands are zero
whenever the line segment from $b_a$ to $b_a+u_a$ stays in the same affine
hinge region. At $b_a=0$, a move into the positive region contributes its
full positive linearized residual.

\begin{proof}[Proof of Proposition~\ref{prop:srp_aggregate_repair}]
The Lipschitz-gradient assumption yields
\begin{equation}
 \begin{aligned}
 d_a^S(\theta+h)&=d_a^S(\theta)
     +\langle\nabla d_a^S,h\rangle+e_a,\\
 |e_a|&\leq\tfrac12L_a\|h\|_2^2.
 \end{aligned}
\end{equation}
The scalar hinge is 1-Lipschitz, hence
\begin{equation}
 [b_a+u_a-e_a]_+\leq[b_a+u_a]_++\tfrac12L_a\|h\|_2^2.
\end{equation}
Multiplying by $w_a$, summing, and using Eq.~\ref{eq:srp_crossing} gives
\begin{equation}
 R(\theta+h)-R(\theta)
 \leq-\langle r,h\rangle+C(h)+\tfrac12L_R\|h\|_2^2.
 \label{eq:srp_arbitrary_step}
\end{equation}
Substitute $h=-\eta(\nabla f-\mu r)$ and set $C_\eta=C(h)$ to obtain
Eq.~\ref{eq:srp_finite_repair}.
\end{proof}

\paragraph{A checkable sufficient condition.}
For the SGD step and $r\ne0$, aggregate repair is certified whenever
\begin{equation}
 \mu\|r\|_2^2>
 \langle r,\nabla f\rangle+C_\eta/\eta
             +\tfrac12L_R\eta\|\nabla f-\mu r\|_2^2.
 \label{eq:srp_repair_condition}
\end{equation}
For a fixed point with no hinge at equality, sufficiently small steps
have $C_\eta=0$. Then
$\mu>\langle r,\nabla f\rangle/\|r\|_2^2$ is the leading-order repair
condition, and strict inequality ensures an actual decrease for sufficiently
small $\eta$.

\paragraph{Why the Gram geometry matters.}
Writing $G_{ab}=\langle\nabla d_a^S,\nabla d_b^S\rangle$,
\begin{equation}
 \|r\|_2^2=\sum_{a,b}w_aw_bI_aI_bG_{ab}.
\end{equation}
Although this Gram matrix is positive semidefinite, individual entries can
be negative. Distinct active pairs can therefore cancel in shared
parameter space. This is the same interference issue emphasized by
parameter-space forgetting analyses \citep{theory-gem,theory-ntk}.

There is a useful contrast in one-prefix \emph{free-logit} coordinates.
Order the $k$ candidates by the teacher (breaking ties only for this
auxiliary argument), set $c_i=(k+1)/2-i$, and let
$W_A=\sum_{a:I_a=1}w_a$. Every active edge $i\succ j$ has
$\langle c,e_i-e_j\rangle=j-i\geq1$. Thus
\begin{equation}
 \|r\|_2\geq
 \frac{W_A}{\sqrt{k(k^2-1)/12}}\qquad(k\geq2).
 \label{eq:srp_logit_non_cancellation}
\end{equation}
This proves that nonempty teacher-oriented active sets cannot cancel
completely in these logit coordinates.

\paragraph{AdamW and changing minibatches.}
Equation~\ref{eq:srp_arbitrary_step} accepts the actual AdamW displacement,
including momentum, preconditioning, clipping, and weight decay.

\appsubsection{From an Objective Budget to Sorting Risk}
\label{app:srp_train_to_risk}

The original residual certificate is useful for evaluating a trained
checkpoint. The following result controls residuals from an objective budget.

\begin{srpproposition}[Reference-comparator bound]
\label{prop:srp_comparator}
Let $\mu>0$ and let $f,R$ be evaluated on the same fixed weighted training
collection. If a checkpoint satisfies
\begin{equation}
 (f+\mu R)(\theta)\leq(f+\mu R)(\theta_0)+\xi,
 \qquad\xi\geq0,
 \label{eq:srp_objective_budget}
\end{equation}
then
\begin{equation}
 R(\theta)\leq R(\theta_0)
                 +\frac{f(\theta_0)-f(\theta)+\xi}{\mu}.
 \label{eq:srp_comparator}
\end{equation}
If $d_a^S(\theta_0)=d_a^T$ on every selected pair, then
\begin{equation}
 R(\theta_0)=(\rho-1)_+\sum_aw_ad_a^T.
 \label{eq:srp_initial_debt}
\end{equation}
\end{srpproposition}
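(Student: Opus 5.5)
The first claim is pure algebra on the objective budget. Since $\mu>0$, I would expand Eq.~\ref{eq:srp_objective_budget} as
\[
f(\theta)+\mu R(\theta)\leq f(\theta_0)+\mu R(\theta_0)+\xi,
\]
move $f(\theta)$ to the right-hand side, and divide by $\mu$. This gives Eq.~\ref{eq:srp_comparator} directly. No property of $f$ or $R$ beyond finiteness is used, because both are evaluated on the same fixed weighted collection. In particular, the bound does not need $f(\theta)\leq f(\theta_0)$: if CE went up, the numerator shrinks and the bound only tightens.

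For Eq.~\ref{eq:srp_initial_debt}, I would use $R(\theta_0)=\sum_a w_a v_a(\theta_0)$ with $v_a=[\rho d_a^T-d_a^S]_+$ from Eq.~\ref{eq:theory_violation}. Substituting $d_a^S(\theta_0)=d_a^T$ gives
\[
v_a(\theta_0)=[(\rho-1)d_a^T]_+ .
\]
Every selected triple is teacher-oriented, so $d_a^T>0$, and positive homogeneity of the hinge gives $[(\rho-1)d_a^T]_+=(\rho-1)_+d_a^T$. Multiplying by $w_a$ and summing yields the claim.

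I would then note two consequences that connect to the main-text statement. For $0<\rho\leq1$ the initial debt vanishes, so $R(\theta)\leq(f(\theta_0)-f(\theta)+\xi)/\mu$. The budget hypothesis in that form, $(f+\mu R)(\theta)\leq f(\theta_0)+\xi$, coincides with Eq.~\ref{eq:srp_objective_budget} because $R(\theta_0)=0$.

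There is no real obstacle here. The only points that need care are the sign conditions: $\mu>0$ for the division, and $d_a^T>0$ to pull the factor out of the hinge. If ties had been admitted, the factoring would still hold because those terms vanish, but the pair set of Eq.~\ref{eq:pair_set} excludes them anyway.
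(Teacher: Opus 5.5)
Your proposal is correct and follows the paper's proof: rearrange the objective budget and divide by $\mu>0$, then use $d_a^T>0$ to get $v_a(\theta_0)=[(\rho-1)d_a^T]_+=(\rho-1)_+d_a^T$ and sum with the weights $w_a$. Your remarks about an increased CE tightening the bound and about the $\rho\leq1$ case are also consistent with the paper's discussion after the proposition.
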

\begin{proof}
Rearrange Eq.~\ref{eq:srp_objective_budget}. At matched initialization,
$v_a(\theta_0)=[(\rho-1)d_a^T]_+=(\rho-1)_+d_a^T$ because $d_a^T>0$.
Summing proves Eq.~\ref{eq:srp_initial_debt}.
\end{proof}
For $0\leq\rho\leq1$ the initialization is feasible. For $\rho>1$ there
is an explicit sharpening debt. When the stored reference
and the initialization's evaluated logits differ numerically, the general
bound uses the measured $R(\theta_0)$.

\paragraph{Margin-distribution calibration.}
Let $Z=\sum_aw_a>0$, let $\mathsf D$ assign mass $w_a/Z$ to each pair,
and write $\bar v=R(\theta)/Z$. Define
$F_T(\gamma)=\Pr_{\mathsf D}(d_a^T<\gamma)$.
For any $\rho\gamma>2\epsilon$, the residual certificate gives
\begin{equation}
 \Pr_{\mathsf D}(d_a^{S,e}\leq0)
 \leq F_T(\gamma)+\frac{\bar v}{\rho\gamma-2\epsilon}.
 \label{eq:srp_budget_risk}
\end{equation}
This counts ties as failures. The full proof is given with the retained
certificate in Appendix~\ref{app:theory_rate}. No pair independence is
needed on a fixed collection.

For a concrete rate, suppose $\epsilon=0$, $\rho>0$, and
$F_T(\gamma)\leq C_T\gamma^\beta$ for every $\gamma>0$, where
$C_T>0$ and $\beta>0$. Minimizing the right-hand side yields
\begin{equation}
 \Pr_{\mathsf D}(d_a^S\leq0)
 \leq c_\beta C_T^{1/(\beta+1)}
       (\bar v/\rho)^{\beta/(\beta+1)},
 \label{eq:srp_margin_tail_rate}
\end{equation}
where $c_\beta=(\beta+1)\beta^{-\beta/(\beta+1)}$. The bound can be
clipped at one. For $\bar v>0$, the optimizer is
$\gamma_*=(\bar v/(\rho C_T\beta))^{1/(\beta+1)}$.
For $\bar v=0$ take the limit $\gamma\downarrow0$.
With matched initialization, $0<\rho\leq1$, and CE $f\geq0$,
Eq.~\ref{eq:srp_comparator} implies
$\bar v\leq(f(\theta_0)+\xi)/(\mu Z)$.
Under fixed constants this gives a conditional
$O(\mu^{-\beta/(\beta+1)})$ sorting-risk bound.

A different prefix distribution is treated in
Appendix~\ref{app:theory_coverage}.

\appsubsection{Proof of Proposition~\ref{prop:theory_contraction}}
\label{app:theory_contraction}
\label{sec:theory_dynamics}

\begin{srpproposition}[Free-logit CE dynamics]
\label{prop:theory_contraction}
Fix one state and label $y$, and regard $z$ itself as the optimization
variable. Under $\dot z=-\nabla_z[-\log p_y]$, every pair $i,j\ne y$ satisfies
\begin{align}
 \dot d_{ij}&=-(p_i-p_j),\\
 \frac{d}{dt}d_{ij}^{\,2}&=-2d_{ij}(p_i-p_j)\leq0.
 \label{eq:theory_contraction}
\end{align}
The inequality is strict whenever $d_{ij}\ne0$, and a nonzero margin cannot
change sign at finite time. A direct discrete logit-gradient step also
preserves its sign and decreases its magnitude when $0<\eta\leq2$.
\end{srpproposition}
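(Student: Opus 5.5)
The plan is to compute the CE gradient in free-logit coordinates and then read off both the continuous and discrete claims. Since $-\log p_y = -z_y+\log\sum_v e^{z_v}$, its gradient is $\nabla_z[-\log p_y]=p-e_y$. The flow is therefore $\dot z = e_y - p$. For $i,j\ne y$, the $e_y$ contribution cancels in the difference. This gives $\dot d_{ij}=\dot z_i-\dot z_j=-(p_i-p_j)$, and then $\frac{d}{dt}d_{ij}^2=2d_{ij}\dot d_{ij}=-2d_{ij}(p_i-p_j)$.

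For the sign, I use that $p_i-p_j=p_j(e^{d_{ij}}-1)$ has the same sign as $d_{ij}$, since $p_j>0$ for finite logits. Hence $d_{ij}(p_i-p_j)\ge0$, with equality iff $d_{ij}=0$, which gives the strict inequality.

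To rule out a finite-time sign change, I will write the flow as $\dot d=-g(t)d$, where
\[
g(t)=\frac{p_i-p_j}{d_{ij}}=p_j\,\frac{e^{d}-1}{d}>0
\]
for $d\neq 0$, extended by continuity to $g=p_j$ at $d=0$. Along the solution this $g$ is continuous in $t$. Integrating gives
\[
d(t)=d(0)\exp\!\Bigl(-\int_0^t g\Bigr),
\]
which never vanishes at finite time. The step needing care is global existence and continuity of $g$. The right-hand side $e_y-p$ is smooth and bounded, so solutions exist for all time and $g$ is continuous along the trajectory. An equivalent route is uniqueness: $d\equiv0$ is invariant for the scalar linear ODE $\dot d=-g(t)d$, so a trajectory starting from a nonzero margin cannot reach $0$.

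For the discrete step $z^+=z-\eta(p-e_y)$, the update is $d^+=d-\eta(p_i-p_j)$. Assume $d>0$; the case $d<0$ is symmetric. I need $0<\eta(p_i-p_j)<2d$ for the magnitude to decrease, and $\eta(p_i-p_j)< d$ for the sign to be preserved. The sign-preservation requirement is the stronger one and is where $\eta\le2$ enters. The key bound is that $p_i-p_j\le \tfrac12 d$ whenever $p_i+p_j\le1$. To prove it, write $p_i-p_j=m\tanh(d/2)$ with $m=p_i+p_j\le1$, and note $\tanh(d/2)\le d/2$. Then $\eta(p_i-p_j)\le \eta d/2\le d$.

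The one remaining concern is equality: with $\eta=2$, could $d^+=0$? That would require $m=1$ and $\tanh(d/2)=d/2$. The second condition is impossible for $d>0$, so the inequality is strict and the sign is preserved. Positivity of $p_i-p_j$ then gives $0<d^+<d$, i.e., a strict decrease in magnitude. The case $d=0$ is trivial, since then $p_i=p_j$ and $d^+=0$.
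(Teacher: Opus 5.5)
Your proof is correct and follows essentially the same route as the paper. Both use the gradient $p-e_y$, the integrating-factor argument $d(t)=d(0)\exp(-\int_0^t g)$, where your $g=p_j(e^{d}-1)/d$ is the same function as the paper's $c=(p_i+p_j)\tanh(d/2)/d$, and the discrete bound via $p_i-p_j=(p_i+p_j)\tanh(d/2)$ with $|\tanh x|<|x|$; your remark on global existence of the flow is a small addition the paper leaves implicit.
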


\begin{proof}
For $f(z)=-z_y+\log\sum_k e^{z_k}$,
$\partial f/\partial z_k=p_k-\mathbf1[k=y]$. Hence, for $i,j\ne y$,
$\dot d_{ij}=-(p_i-p_j)$. Moreover,
\begin{equation}
 p_i-p_j=(p_i+p_j)\tanh(d_{ij}/2).
 \label{eq:proof_tanh}
\end{equation}
The right-hand side has the same sign as $d_{ij}$, proving the stated
squared-margin derivative and its strict sign when $d_{ij}\ne0$.

For the no-crossing statement define, along the trajectory,
\begin{equation}
 c(t)=
 \begin{cases}
 (p_i+p_j)\tanh(d_{ij}/2)/d_{ij},&d_{ij}\ne0,\\
 (p_i+p_j)/2,&d_{ij}=0.
 \end{cases}
\end{equation}
This is continuous and satisfies $0<c(t)\leq1/2$ at finite logits, since
$|\tanh(x)|\leq|x|$. The margin solves $\dot d_{ij}=-c(t)d_{ij}$, so
\begin{equation}
 d_{ij}(t)=d_{ij}(0)
     \exp\!\left(-\int_0^t c(u)\,du\right).
\end{equation}
The exponential is positive at finite time. An initial tie remains a tie.
An initially strict order keeps its sign.

For a discrete logit-gradient step,
$d_{ij}^{+}=d_{ij}-\eta(p_i-p_j)=(1-\eta c)d_{ij}$, where $c$ is evaluated
before the update. For $d_{ij}\ne0$, the strict inequality
$|\tanh(d_{ij}/2)|<|d_{ij}|/2$ gives $0<c<1/2$. Thus
$0<1-\eta c<1$ when $0<\eta\leq2$. This proves both sign preservation and
strict magnitude reduction. The zero-margin case is immediate.
\end{proof}

The bound $\eta\leq2$ belongs to the unscaled, unit-temperature,
\emph{free-logit} objective.

\appsubsection{Proof of Proposition~\ref{prop:theory_parameter}}
\label{app:theory_parameter}

\begin{srpproposition}[Finite-step parameter-space identity]
\label{prop:theory_parameter}
Let $d_a(\theta)$ have an $L_a$-Lipschitz gradient along the segment from
$\theta$ to $\theta+h$. Then
\begin{equation}
 \begin{aligned}
 d_a(\theta+h)
 &=d_a(\theta)+\langle\nabla d_a(\theta),h\rangle+r_a,\\
 |r_a|&\leq\tfrac12L_a\|h\|_2^2.
 \end{aligned}
 \label{eq:theory_parameter}
\end{equation}
For an SGD step $h=-\eta g$ on a training state $u$,
$g=\nabla_\theta\mathcal L_{\mathrm{CE}}(u)$, the first-order change is
$-\eta I_{u\to a}$, where
\begin{equation}
 I_{u\to a}=\langle\nabla_\theta d_a(s),g\rangle.
 \label{eq:theory_interference}
\end{equation}
The interference term $I_{u\to a}$ can take either sign.
For AdamW, $h$ is the actual preconditioned update.
\end{srpproposition}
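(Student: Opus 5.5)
The identity in Eq.~\ref{eq:theory_parameter} is Taylor's theorem with integral remainder, applied to the scalar function $\phi(\tau)=d_a(\theta+\tau h)$ on $[0,1]$. I would write
\begin{equation}
 d_a(\theta+h)-d_a(\theta)=\int_0^1\langle\nabla d_a(\theta+\tau h),h\rangle\,d\tau.
\end{equation}
This holds because $\phi$ is continuously differentiable: $\nabla d_a$ is Lipschitz, hence continuous, along the segment. Subtracting $\langle\nabla d_a(\theta),h\rangle$ gives the remainder
\begin{equation}
 r_a=\int_0^1\langle\nabla d_a(\theta+\tau h)-\nabla d_a(\theta),h\rangle\,d\tau.
\end{equation}
Cauchy--Schwarz and the Lipschitz assumption bound the integrand by $L_a\tau\|h\|_2^2$. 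Integrating $\tau$ over $[0,1]$ gives $|r_a|\leq\tfrac12L_a\|h\|_2^2$. This is the same descent-lemma bound already used in the proof of Proposition~\ref{prop:srp_aggregate_repair}, so I will present it in that form.

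For the SGD statement, I substitute $h=-\eta g$ with $g=\nabla_\theta\mathcal L_{\mathrm{CE}}(u)$. The linear term becomes $-\eta\langle\nabla_\theta d_a(\theta),g\rangle=-\eta I_{u\to a}$. I will note that the ``$s$'' in Eq.~\ref{eq:theory_interference} should be read as the current parameter $\theta$, and that the remainder is $O(\eta^2)$ by the bound above.

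For the sign claim, I would give a minimal example rather than a general argument. Take a linear softmax head $z=W\phi(u)$ with free weights $W$, a training state $u$ with label $y$, and an evaluated pair $a=(i,j)$ at a state with features $\phi'$. Then $\nabla_W d_a=(e_i-e_j)\phi'^\top$ and $g=(p-e_y)\phi(u)^\top$. This gives
\begin{equation}
 I_{u\to a}=(p_i-p_j)\langle\phi',\phi(u)\rangle
\end{equation}
for $i,j\neq y$. The first factor is fixed by the training state and can have either sign. Flipping the sign of $\phi'$ then flips $I_{u\to a}$, so the interference can be positive or negative. When $\phi'=\phi(u)$, this recovers the free-logit contraction of Proposition~\ref{prop:theory_contraction}, which is a useful consistency check.

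Finally, for AdamW, nothing in the first part used the form of $h$. The identity therefore holds verbatim with $h$ equal to the actual preconditioned displacement, and the linear term becomes $\langle\nabla d_a,h\rangle$ in place of $-\eta I_{u\to a}$. The main obstacle is only interpretive. The Lipschitz constant must hold along the actual segment, which is a local assumption, and I would state this explicitly rather than assume global smoothness of a network.
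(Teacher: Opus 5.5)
Your proof is correct and follows the paper's own route: write $d_a(\theta+h)-d_a(\theta)$ as an integral along the segment by the fundamental theorem of calculus, bound the remainder with Cauchy--Schwarz and the Lipschitz condition to get $|r_a|\leq\tfrac12L_a\|h\|_2^2$, substitute $h=-\eta g$, and note that nothing depends on the form of $h$, which covers AdamW. The only difference is in the sign claim, where the paper writes the general kernel form $I_{u\to a}=(e_i-e_j)^\top J_sJ_u^\top(p_u-e_{y_u})$ with $J_s=\partial z(s;\theta)/\partial\theta$; your linear head is the special case where $J_sJ_u^\top$ is $\langle\phi',\phi(u)\rangle$ times the identity, so your explicit witness gives more than the paper's bare assertion, and this form also shows that $s$ denotes the evaluated prefix of the triple $a$ rather than the parameter, though your formula is unaffected.
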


\begin{proof}
By the fundamental theorem of calculus,
\begin{equation}
 \begin{aligned}
 &d_a(\theta+h)-d_a(\theta)\\
 &\qquad=\int_0^1\langle\nabla d_a(\theta+t h),h\rangle\,dt.
 \end{aligned}
\end{equation}
Subtracting the first-order term yields
\begin{equation}
 \begin{aligned}
 r_a&=\int_0^1\langle\nabla d_a(\theta+t h),h\rangle\,dt\\
 &\quad-\langle\nabla d_a(\theta),h\rangle.
 \end{aligned}
\end{equation}
The Lipschitz condition and Cauchy--Schwarz give
$|r_a|\leq\int_0^1 L_a t\|h\|_2^2\,dt=L_a\|h\|_2^2/2$.
Substituting $h=-\eta g$ establishes the SGD identity.

To expose the geometry, let $J_s=\partial z(s;\theta)/\partial\theta$ and
$J_u=\partial z(u;\theta)/\partial\theta$. If the training label at $u$ is
$y_u$, then
\begin{equation}
 I_{u\to a}
 =(e_i-e_j)^\top J_sJ_u^\top(p_u-e_{y_u}),
 \label{eq:proof_kernel}
\end{equation}
where $e_k$ is a vocabulary coordinate vector. This expression can take
either sign.
\end{proof}

\paragraph{Same-state parameter-space expansion.}
Take $y=1$, $(i,j)=(2,3)$, and
$p=(0.5,0.3,0.2)$, so $d_{23}=\log(3/2)>0$. Consider the linear
parameterization $z=J\theta$ with $J=\operatorname{diag}(1,1,\sqrt{10})$,
initialized to realize this $p$. Under parameter-space CE gradient flow,
\begin{equation}
 \dot z=-JJ^\top(p-e_1)=(0.5,-0.3,-2),
\end{equation}
and therefore $\dot d_{23}=1.7>0$. Same-state parameter-space CE
can expand a non-target margin.

\paragraph{Finite-step sufficient conditions.}
For $h=-\eta g$, actual contraction is guaranteed if
$I_{u\to a}>\eta L_a\|g\|_2^2/2$. For a currently positive margin,
\begin{equation}
 \eta I_{u\to a}
 >d_a(\theta)+\tfrac12\eta^2L_a\|g\|_2^2
\end{equation}
is sufficient for inversion.

\paragraph{Several auxiliary pairs can interact.}
Away from hinge boundaries, an SGD update using the penalty has the form
\begin{equation}
 h=-\eta\nabla f
     +\eta\mu\sum_{a:g_a>0}w_a\nabla d_a^S.
\end{equation}
Its first-order effect on another margin $d_b^S$ contains the terms
$\langle\nabla d_b^S,\nabla d_a^S\rangle$, which may be negative.

\appsubsection{Proof of Lemma~\ref{thm:theory_stability}}
\label{app:theory_stability}
\label{sec:theory_certificate}

\begin{srplemma}[Residual-aware pairwise stability]
\label{thm:theory_stability}
For any selected pair and $\rho\geq0$,
\begin{equation}
 \frac{p_i^S}{p_j^S}\geq e^{-v_a}
 \left(\frac{p_i^T}{p_j^T}\right)^\rho.
 \label{eq:theory_odds}
\end{equation}
Under any logit perturbation $e$ with $\|e\|_\infty\leq\epsilon$,
\begin{equation}
 d_a^{S,e}\geq\rho d_a^T-v_a-2\epsilon.
 \label{eq:theory_robust}
\end{equation}
Hence $\rho d_a^T-v_a>2\epsilon$ guarantees strict order preservation.
\end{srplemma}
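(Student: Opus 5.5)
The plan is to unwind the definition of the violation $v_a=[\rho d_a^T-d_a^S]_+$ from Eq.~\ref{eq:theory_violation} and then translate it between margin and odds coordinates using Eq.~\ref{eq:margin_def}.

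First, the hinge always dominates its argument, so $v_a\geq\rho d_a^T-d_a^S$. Rearranged, this reads $d_a^S\geq\rho d_a^T-v_a$, and it holds for every $\rho\geq0$ whether or not the pair is active. By Eq.~\ref{eq:margin_def}, $d_a^S=\log(p_i^S/p_j^S)$ and $d_a^T=\log(p_i^T/p_j^T)$, with the same formula for the locally renormalized distributions on $\mathcal A$, since ratios are unchanged. Exponentiating the inequality gives Eq.~\ref{eq:theory_odds} directly. I would add a remark that when $v_a=0$ this recovers the zero-loss statement $p_i^S/p_j^S\geq(p_i^T/p_j^T)^\rho$.

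For the perturbation claim, I would write the perturbed student logits as $z^S+e$. The perturbed margin is then $d_a^{S,e}=d_a^S+e_i-e_j$. Since $|e_i-e_j|\leq2\|e\|_\infty\leq2\epsilon$, the first step gives $d_a^{S,e}\geq\rho d_a^T-v_a-2\epsilon$, which is Eq.~\ref{eq:theory_robust}. This mirrors the triangle-inequality step already used in the proof of Eq.~\ref{eq:srp_repair_distance}. If $\rho d_a^T-v_a>2\epsilon$, the right-hand side is strictly positive, so $d_a^{S,e}>0$. Because $d_a^T>0$ by the construction of $\mathcal P_t$, the perturbed student strictly preserves the teacher's orientation of the pair.

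The argument has no real obstacle. The only point needing care is to state which object is perturbed, namely the student logits with the teacher margin held fixed. Perturbing both models by $\epsilon$ each would change the constant, giving $2\epsilon$ from the student plus $2\rho\epsilon$ from the teacher. I would also note that the bound is sharp: take $e_i=-\epsilon$, $e_j=+\epsilon$, and an active pair, and the inequality holds with equality.
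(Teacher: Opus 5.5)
Your proposal is correct and follows the paper's proof essentially step for step: rearrange $v_a\geq\rho d_a^T-d_a^S$, exponentiate via the unit-temperature log-odds identity, and bound $e_i-e_j$ below by $-2\epsilon$ for the perturbed margin. Your sharpness remark (take $e_i=-\epsilon$, $e_j=+\epsilon$ at any pair with $d_a^S\leq\rho d_a^T$) is also valid, and it parallels the paper's exact-radius computation for the tie boundary.
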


For an already positive student margin, its exact $\ell_\infty$ distance
to the pairwise tie boundary is $d_a^S/2$
\citep{theory-robustness}.

\begin{proof}
By definition $v_a\geq\rho d_a^T-d_a^S$, so
$d_a^S\geq\rho d_a^T-v_a$. At unit temperature,
$\log(p_i^M/p_j^M)=z_i^M-z_j^M=d_a^M$ for either model $M$. Exponentiating
the margin inequality proves Eq.~\ref{eq:theory_odds}.

Under the perturbation,
\begin{align}
 d_a^{S,e}
 &=d_a^S+e_i-e_j\\
 &\geq d_a^S-|e_i|-|e_j|\\
 &\geq\rho d_a^T-v_a-2\epsilon.
\end{align}
The strict positivity condition follows immediately. It is strict because
equality can permit a tie.

For the exact radius, any perturbation of norm less than $d_a^S/2$ leaves
a positive margin. Conversely, setting
$e_i=-d_a^S/2$, $e_j=d_a^S/2$, and all other coordinates to zero reaches a
tie with norm $d_a^S/2$. Therefore this is exactly the distance to the
closed tie-or-reversal set. The infimum radius for a strict reversal is
the same, although it is not attained by the tie perturbation itself.
\end{proof}

\paragraph{Parameter-space extension.}
Suppose a pair-margin function is $G_a$-Lipschitz with respect to a
parameter norm on a neighborhood of the evaluated student. Then
\begin{equation}
 d_a^S(\theta+h)\geq\rho d_a^T-v_a-G_a\|h\|.
\end{equation}
Thus $G_a\|h\|<\rho d_a^T-v_a$ is sufficient for local order preservation,
provided the entire perturbation remains in that neighborhood.

\appsubsection{Proof of Corollary~\ref{cor:theory_rate}}
\label{app:theory_rate}

\begin{srpcorollary}[From average violation to reversal rate]
\label{cor:theory_rate}
Let $\mathsf D$ be any fixed distribution over selected state--pair triples.
For $\rho>0$, $\gamma>0$, and $\rho\gamma>2\epsilon$, define
$\bar v_{\mathsf D}=\mathbb E_{\mathsf D}v_a$. Under any perturbation obeying
the bound in Lemma~\ref{thm:theory_stability},
\begin{equation}
 \Pr_{\mathsf D}(d_a^{S,e}\leq0)
 \leq \Pr_{\mathsf D}(d_a^T<\gamma)
       +\frac{\bar v_{\mathsf D}}{\rho\gamma-2\epsilon}.
 \label{eq:theory_rate}
\end{equation}
\end{srpcorollary}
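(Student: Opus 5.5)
The plan is a union bound over two events, driven by the deterministic pairwise certificate in Lemma~\ref{thm:theory_stability}. First I will fix a selected state--pair triple $a$ and a perturbation $e$ with $\|e\|_\infty\le\epsilon$. By Eq.~\ref{eq:theory_robust}, $d_a^{S,e}\le 0$ implies
\[
\rho d_a^T-v_a-2\epsilon\le 0,
\qquad\text{that is,}\qquad
v_a\ge\rho d_a^T-2\epsilon .
\]
So the failure event is contained in $\{v_a\ge\rho d_a^T-2\epsilon\}$, and it suffices to bound the $\mathsf D$-probability of this event.

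Next I split according to whether the teacher margin is small, $\{d_a^T<\gamma\}$, or large, $\{d_a^T\ge\gamma\}$. On the first event I use the trivial bound, which contributes $\Pr_{\mathsf D}(d_a^T<\gamma)$. On the second event, $\rho>0$ gives $\rho d_a^T-2\epsilon\ge\rho\gamma-2\epsilon>0$. A failure there therefore forces $v_a\ge\rho\gamma-2\epsilon$, which is a strictly positive threshold.

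For that positive threshold I apply Markov's inequality to the nonnegative variable $v_a$ under $\mathsf D$. This gives $\Pr_{\mathsf D}(v_a\ge\rho\gamma-2\epsilon)\le\bar v_{\mathsf D}/(\rho\gamma-2\epsilon)$. Adding the two contributions yields Eq.~\ref{eq:theory_rate}.

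The only point requiring care is the ordering of quantifiers for the perturbation. The perturbation may depend on the state, and Lemma~\ref{thm:theory_stability} is applied pointwise to each triple, so this is allowed. No independence across pairs is needed, because only Markov's inequality and a union bound are used. I will also note that ties, $d_a^{S,e}=0$, are correctly counted as failures, since the implication from Eq.~\ref{eq:theory_robust} uses a non-strict inequality. I do not expect a genuine obstacle; the step most worth checking is that $\rho\gamma>2\epsilon$ keeps the Markov threshold strictly positive.
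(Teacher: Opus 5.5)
Your proposal is correct and is essentially the paper's proof: with $c=\rho\gamma-2\epsilon>0$, the paper uses Lemma~\ref{thm:theory_stability} to show that a failure on $\{d_a^T\geq\gamma\}$ forces $v_a\geq c$. It then writes this as the pointwise domination $\mathbf{1}[d_a^{S,e}\leq0]\leq\mathbf{1}[d_a^T<\gamma]+v_a/c$ and takes expectations under $\mathsf D$, which is exactly your split followed by Markov's inequality, with the same remarks that no independence is needed and that ties count as failures.
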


\begin{proof}
Let $c=\rho\gamma-2\epsilon>0$. On the event
$\{d_a^T\geq\gamma,\ d_a^{S,e}\leq0\}$,
Lemma~\ref{thm:theory_stability} gives
\begin{equation}
 0\geq\rho d_a^T-v_a-2\epsilon
 \quad\Longrightarrow\quad v_a\geq c.
\end{equation}
Consequently the following pointwise domination holds:
\begin{equation}
 \mathbf1[d_a^{S,e}\leq0]
 \leq\mathbf1[d_a^T<\gamma]+v_a/c.
\end{equation}
Taking expectations yields Eq.~\ref{eq:theory_rate}. No independence among
tokens or among pairs is required for this deterministic expectation
inequality. The upper bound may be clipped at one.
\end{proof}

The result counts ties as failures and therefore also bounds the strict
rank-inversion event used by a flip diagnostic. The bound uses $\rho>0$.

\appsubsection{Weighted Penalties: Exactness and Approximate Feasibility}
\label{app:theory_penalty}

The following specialization of exact-penalty reasoning states explicitly
what is required for a soft hinge penalty to recover a constrained optimum
\citep{theory-penalty,theory-convex}. Let $\mathcal C$ be a finite constraint
index set, $w_a>0$, and
\begin{align}
 g_a(\theta)&=\rho d_a^T-d_a^S(\theta),\\
 R(\theta)&=\sum_{a\in\mathcal C}w_a[g_a(\theta)]_+,\\
 J_\mu(\theta)&=f(\theta)+\mu R(\theta).
\end{align}

\begin{srpproposition}[Conditional weighted exactness]
\label{prop:theory_exact_penalty}
Assume there exist a feasible $\theta^\star$ and finite multipliers
$\lambda_a^\star\geq0$ satisfying
$\lambda_a^\star g_a(\theta^\star)=0$ and the global lower-bound condition
\begin{equation}
 f(\theta)+\sum_a\lambda_a^\star g_a(\theta)
 \geq f(\theta^\star)\qquad\text{for all }\theta.
 \label{eq:proof_dual_condition}
\end{equation}
If $\mu w_a>\lambda_a^\star$ for every $a$, the global minimizers of
$J_\mu$ are exactly the global solutions of
$\min_\theta f(\theta)$ subject to all $g_a(\theta)\leq0$.
\end{srpproposition}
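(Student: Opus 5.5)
The argument is the standard exact-penalty one, using the assumed multipliers as a global certificate. Write $P$ for the constrained problem $\min f$ subject to $g_a\le 0$, and let $F$ be its feasible set. The plan has three steps. First show that $\theta^\star$ solves $P$ and minimizes $J_\mu$. Then show that every global solution of $P$ minimizes $J_\mu$. Finally show, using the strict inequality $\mu w_a>\lambda_a^\star$, that every global minimizer of $J_\mu$ is feasible, hence solves $P$.

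\emph{Step 1.} For any $\theta$ we have $\lambda_a^\star g_a(\theta)\le\lambda_a^\star[g_a(\theta)]_+\le \mu w_a[g_a(\theta)]_+$, since $\lambda_a^\star\ge0$ and $\mu w_a>\lambda_a^\star\ge0$. Summing and combining with the lower-bound condition in Eq.~\ref{eq:proof_dual_condition} gives
\begin{equation}
J_\mu(\theta)\ge f(\theta)+\sum_a\lambda_a^\star g_a(\theta)\ge f(\theta^\star)=J_\mu(\theta^\star),
\end{equation}
where the last equality holds because $\theta^\star$ is feasible, so $R(\theta^\star)=0$. Hence $\theta^\star$ is a global minimizer of $J_\mu$, with value $f^\star:=f(\theta^\star)$. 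For feasible $\theta$, $J_\mu=f$, so the same chain gives $f(\theta)\ge f^\star$. Thus $\theta^\star$ also solves $P$, and the optimal value of $P$ is $f^\star=\min J_\mu$.

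\emph{Step 2.} Any global solution $\hat\theta$ of $P$ is feasible with $f(\hat\theta)=f^\star$. Therefore $J_\mu(\hat\theta)=f^\star=\min J_\mu$, so $\hat\theta$ minimizes $J_\mu$.

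\emph{Step 3 (the delicate direction).} Let $\bar\theta$ minimize $J_\mu$, so $J_\mu(\bar\theta)=f^\star$. Rewrite the first inequality in the chain as an exact gap:
\begin{equation}
J_\mu(\bar\theta)-\Bigl(f(\bar\theta)+\sum_a\lambda_a^\star g_a(\bar\theta)\Bigr)=\sum_a\Bigl[(\mu w_a-\lambda_a^\star)[g_a]_+ +\lambda_a^\star[g_a]_-\Bigr]\ge \min_a(\mu w_a-\lambda_a^\star)\,\sum_a[g_a(\bar\theta)]_+ .
\end{equation}
The left side is at most $f^\star-f^\star=0$ by the dual condition, so $\sum_a[g_a(\bar\theta)]_+=0$. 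Here the strictness $\mu w_a>\lambda_a^\star$ and the finiteness of $\mathcal C$ are used, since they make the minimum coefficient positive. Hence $\bar\theta$ is feasible, $R(\bar\theta)=0$, and $f(\bar\theta)=J_\mu(\bar\theta)=f^\star$. So $\bar\theta$ is a global solution of $P$.

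The only real obstacle is Step 3. There one must keep the slack term $(\mu w_a-\lambda_a^\star)[g_a]_+$ explicitly rather than discarding it, as Step 1 does. This is exactly where the strict inequality is needed. With equality, infeasible minimizers could exist, so I would remark on that briefly. No convexity or constraint qualification is needed, because the global dual condition already supplies them.
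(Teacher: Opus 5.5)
Your proof is correct and follows essentially the same route as the paper. The paper also uses the identity $\mu w[g]_+-\lambda g=(\mu w-\lambda)[g]_++\lambda[-g]_+$ with the global dual condition to obtain $J_\mu(\theta)\ge f(\theta^\star)+\sum_a(\mu w_a-\lambda_a^\star)[g_a(\theta)]_+$, from which feasibility of penalty minimizers and the converse follow; your Step~1 additionally makes explicit that $\theta^\star$ itself solves the constrained problem, which the paper uses implicitly when it asserts that every constrained minimizer has value $f(\theta^\star)$.
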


\begin{proof}
For every real $g$ and nonnegative $\lambda$,
\begin{equation}
 \mu w[g]_+-\lambda g
 =(\mu w-\lambda)[g]_++\lambda[-g]_+.
\end{equation}
Combining this identity with Eq.~\ref{eq:proof_dual_condition} yields
\begin{equation}
 J_\mu(\theta)
 \geq f(\theta^\star)
       +\sum_a(\mu w_a-\lambda_a^\star)[g_a(\theta)]_+.
\end{equation}
At $\theta^\star$, feasibility gives $J_\mu(\theta^\star)=f(\theta^\star)$.
If any constraint is violated, the strict coefficient inequality makes
$J_\mu(\theta)>f(\theta^\star)$, so every global penalty minimizer is
feasible. On feasible points $J_\mu=f$. Thus every penalty minimizer is a
constrained minimizer. Conversely, every constrained minimizer has value
$f(\theta^\star)$ and zero penalty, hence minimizes $J_\mu$.
\end{proof}

\appsubsection{State Coverage, Normalization, and Independent-Holdout Certificates}
\label{app:theory_coverage}

\paragraph{Matching the averaging measure.}
For one sequence with $T$ supervised positions, each nonempty pair set has
weight $w_{(t,i,j)}=1/(T|\mathcal P_t|)$ in the auxiliary part of
Eq.~\ref{eq:total_loss}. The total weight is
$Z=T^{-1}\sum_t\mathbf1[|\mathcal P_t|>0]$. If $Z>0$, normalizing these
weights gives a distribution $\mathsf D$ over selected triples and
\begin{equation}
 \mathbb E_{\mathsf D}v_a
 =\frac{1}{Z T}\sum_t\mathcal L_{\mathrm{SRP},t}.
\end{equation}
When every position has a nonempty pair set, $Z=1$. Mixtures of sequences
must likewise reproduce the actual position/sequence weights used by the
objective. Empty pair sets contribute no auxiliary term.

\paragraph{Distribution shift.}
Let $\mathsf D_{\mathrm{src}}$ and $\mathsf D_{\mathrm{old}}$ be distributions
on a common space of state--pair triples, each oriented using the same
reference rule. Define
$\operatorname{TV}(P,Q)=\sup_B|P(B)-Q(B)|$.

\begin{srpcorollary}[Coverage-dependent transfer]
\label{cor:theory_transfer}
Under the conditions of Corollary~\ref{cor:theory_rate},
\begin{align}
 \Pr_{\mathsf D_{\mathrm{old}}}(d_a^{S,e}\leq0)
 &\leq\Pr_{\mathsf D_{\mathrm{src}}}(d_a^T<\gamma)
       +\frac{\bar v_{\mathsf D_{\mathrm{src}}}}{\rho\gamma-2\epsilon}
       \nonumber\\
 &\quad+\operatorname{TV}
       (\mathsf D_{\mathrm{old}},\mathsf D_{\mathrm{src}}).
 \label{eq:proof_transfer}
\end{align}
Alternatively, if $\mathsf D_{\mathrm{old}}\ll\mathsf D_{\mathrm{src}}$ and
$d\mathsf D_{\mathrm{old}}/d\mathsf D_{\mathrm{src}}\leq C$ almost surely,
then
\begin{equation}
 \Pr_{\mathsf D_{\mathrm{old}}}(d_a^{S,e}\leq0)
 \leq\Pr_{\mathsf D_{\mathrm{old}}}(d_a^T<\gamma)
       +\frac{C\bar v_{\mathsf D_{\mathrm{src}}}}{\rho\gamma-2\epsilon}.
\end{equation}
\end{srpcorollary}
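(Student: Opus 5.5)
The plan is to reduce both inequalities to Corollary~\ref{cor:theory_rate}, applied with $\mathsf D=\mathsf D_{\mathrm{src}}$, and then transfer the failure event from $\mathsf D_{\mathrm{src}}$ to $\mathsf D_{\mathrm{old}}$.

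\textbf{Setup.} Let $E=\{a:d_a^{S,e}\leq0\}$ be the failure event. It is defined on the common space of state--pair triples, and both distributions use the same reference orientation, so $d_a^T$, $v_a$ and $E$ mean the same thing under either measure. Corollary~\ref{cor:theory_rate} gives
\begin{equation}
 \Pr_{\mathsf D_{\mathrm{src}}}(E)\leq\Pr_{\mathsf D_{\mathrm{src}}}(d_a^T<\gamma)+\frac{\bar v_{\mathsf D_{\mathrm{src}}}}{\rho\gamma-2\epsilon}.
\end{equation}

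\textbf{First bound.} Use the definition $\operatorname{TV}(P,Q)=\sup_B|P(B)-Q(B)|$ with $B=E$. This gives $\Pr_{\mathsf D_{\mathrm{old}}}(E)\leq\Pr_{\mathsf D_{\mathrm{src}}}(E)+\operatorname{TV}(\mathsf D_{\mathrm{old}},\mathsf D_{\mathrm{src}})$, and substituting the display above yields Eq.~\ref{eq:proof_transfer}. The one point to check is measurability of $E$. This holds because $d_a^{S,e}$ is a function of the triple together with a fixed perturbation, or a measurable perturbation map.

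\textbf{Second bound.} Here I do not pass through $\mathsf D_{\mathrm{src}}$ for the margin term. Instead I reuse the pointwise domination from the proof of Corollary~\ref{cor:theory_rate}:
\begin{equation}
 \mathbf 1[d_a^{S,e}\leq0]\leq\mathbf 1[d_a^T<\gamma]+\frac{v_a}{c},\qquad c=\rho\gamma-2\epsilon>0.
\end{equation}
This follows from Lemma~\ref{thm:theory_stability} and holds for every triple. Taking the expectation under $\mathsf D_{\mathrm{old}}$ gives $\Pr_{\mathsf D_{\mathrm{old}}}(E)\leq\Pr_{\mathsf D_{\mathrm{old}}}(d_a^T<\gamma)+\mathbb E_{\mathsf D_{\mathrm{old}}}v_a/c$. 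Since $v_a\geq0$ and the density ratio is at most $C$, a change of measure gives $\mathbb E_{\mathsf D_{\mathrm{old}}}v_a=\mathbb E_{\mathsf D_{\mathrm{src}}}[v_a\,d\mathsf D_{\mathrm{old}}/d\mathsf D_{\mathrm{src}}]\leq C\bar v_{\mathsf D_{\mathrm{src}}}$. Absolute continuity is exactly what makes the Radon--Nikodym derivative available here.

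\textbf{Main obstacle.} There is no deep step; the care is in bookkeeping. The perturbation $e$ must be the same object under both distributions, and the event must be well defined on the common space. The second bound must keep the margin tail under $\mathsf D_{\mathrm{old}}$ rather than $\mathsf D_{\mathrm{src}}$. Both bounds may be clipped at one.
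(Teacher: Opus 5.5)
Your proposal is correct and follows the paper's proof. For the first claim, you bound the fixed failure event's probability by total variation and then apply Corollary~\ref{cor:theory_rate} under $\mathsf D_{\mathrm{src}}$; for the second, you apply Corollary~\ref{cor:theory_rate} under $\mathsf D_{\mathrm{old}}$ (re-derived from the pointwise domination) and use the change-of-measure bound $\mathbb E_{\mathsf D_{\mathrm{old}}}v_a\leq C\,\bar v_{\mathsf D_{\mathrm{src}}}$.
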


\begin{proof}
For the fixed failure event, its probabilities under two distributions
differ by at most their total variation. Apply
Corollary~\ref{cor:theory_rate} to the source distribution. For the second
claim, change of measure and nonnegativity imply
$\mathbb E_{\mathrm{old}}v_a\leq C\mathbb E_{\mathrm{src}}v_a$.
Apply Corollary~\ref{cor:theory_rate} to the old distribution.
\end{proof}

\begin{srpcorollary}[A bounded independent-holdout certificate]
\label{cor:theory_holdout}
Fix the trained student, reference, $\rho,\gamma,\epsilon$ independently of
an i.i.d. Holdout $a_1,\ldots,a_n\sim\mathsf D$, with
$c=\rho\gamma-2\epsilon>0$. Define
\begin{equation}
 h(a)=
 \begin{cases}
 1,&d_a^T<\gamma,\\
 \min\{1,v_a/c\},&d_a^T\geq\gamma.
 \end{cases}
\end{equation}
For any $0<\delta<1$, with probability at least $1-\delta$ over the holdout,
\begin{equation}
 \Pr_{\mathsf D}(d_a^{S,e}\leq0)
 \leq\frac1n\sum_{r=1}^n h(a_r)
      +\sqrt{\frac{\log(1/\delta)}{2n}}.
 \label{eq:proof_holdout}
\end{equation}
The right-hand side can be clipped at one. It bounds every admissible
logit perturbation.
\end{srpcorollary}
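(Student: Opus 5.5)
The plan is to combine the pointwise domination already used in Corollary~\ref{cor:theory_rate} with Hoeffding's inequality applied to the bounded surrogate $h$. First I would establish that, for every selected triple $a$ and every perturbation $e$ with $\|e\|_\infty\leq\epsilon$,
\begin{equation}
 \mathbf1[d_a^{S,e}\leq0]\leq h(a).
\end{equation}
If $d_a^T<\gamma$, then $h(a)=1$ and the bound is trivial. If $d_a^T\geq\gamma$ and $d_a^{S,e}\leq0$, then Lemma~\ref{thm:theory_stability} gives $0\geq\rho d_a^T-v_a-2\epsilon\geq\rho\gamma-v_a-2\epsilon$, so $v_a\geq c$ and $\min\{1,v_a/c\}=1$. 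Otherwise the indicator is zero and $h(a)\geq0$. Because $h$ does not depend on $e$, a single surrogate controls every admissible perturbation at once; this is exactly why the final statement holds uniformly over perturbations.

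Taking expectations gives $\Pr_{\mathsf D}(d_a^{S,e}\leq0)\leq\mathbb E_{\mathsf D}h(a)$. It then remains to bound the population mean $\mathbb E_{\mathsf D}h$ by the empirical mean on the holdout. The student, the reference, and $\rho,\gamma,\epsilon$ are fixed independently of the holdout, so $h$ is a fixed measurable function with values in $[0,1]$. The clipping at one is what guarantees this boundedness, since the raw ratio $v_a/c$ could be unbounded. The values $h(a_1),\ldots,h(a_n)$ are therefore i.i.d.\ $[0,1]$-valued random variables with mean $\mathbb E_{\mathsf D}h$.

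Applying the one-sided Hoeffding inequality \citep{theory-hoeffding} gives
\begin{equation}
 \Pr\Bigl(\mathbb E_{\mathsf D}h-\tfrac1n\textstyle\sum_r h(a_r)\geq t\Bigr)\leq e^{-2nt^2}.
\end{equation}
Setting $t=\sqrt{\log(1/\delta)/(2n)}$ makes the right-hand side equal to $\delta$. Chaining this with the domination step yields Eq.~\ref{eq:proof_holdout} with probability at least $1-\delta$, and clipping at one is harmless because the left-hand side is a probability.

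The main point needing care is the independence and fixedness requirement. The guarantee is valid only if $h$ is chosen before the holdout is seen. If $\gamma$ were tuned on the same sample, a union bound over a grid of $\gamma$ values would be needed instead. I would remark on this explicitly rather than attempt to handle data-dependent choices.
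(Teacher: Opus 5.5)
Your proposal is correct and follows the paper's proof essentially step for step: the pointwise domination $\mathbf{1}[d_a^{S,e}\leq 0]\leq h(a)$ is inherited from Corollary~\ref{cor:theory_rate}, and a one-sided Hoeffding bound for the $[0,1]$-valued $h(a_r)$ is then applied with $t=\sqrt{\log(1/\delta)/(2n)}$. Your remarks on uniformity over perturbations, since $h$ does not depend on $e$, and on a union bound for thresholds chosen from the data also match the paper's accompanying comments.
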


\begin{proof}
The pointwise argument for Corollary~\ref{cor:theory_rate} shows
$\mathbf1[d_a^{S,e}\leq0]\leq h(a)$: below the margin cutoff $h=1$.
Otherwise any possible failure forces $v_a\geq c$, so again $h=1$.
Since $h\in[0,1]$, Hoeffding's concentration inequality for bounded variables
\citep{theory-hoeffding} gives
\begin{equation}
 \Pr\!\left(\mathbb Eh-\frac1n\sum_r h(a_r)>u\right)
 \leq e^{-2nu^2}.
\end{equation}
Set $u=\sqrt{\log(1/\delta)/(2n)}$ and use
$\Pr(d_a^{S,e}\leq0)\leq\mathbb Eh$.
\end{proof}

Pairs from the same example are averaged within the example, and $n$ is the
number of independent examples. The example-level averages lie in $[0,1]$,
so the same proof applies to the induced example-weighted measure. For a
prespecified family of $M$ model/threshold choices, a union bound replaces
$\log(1/\delta)$ by $\log(M/\delta)$.

\appsubsection{Finite-Precision References and Ties}
\label{app:theory_precision}

The formal guarantees can be interpreted directly with the stored reference
logits as the definition of $T$. Relating them to a higher-precision
reference requires an additional numerical-error bound. Suppose
$|\widetilde z_k^T-z_k^T|\leq\epsilon_T$ on the relevant coordinates and
the penalty uses $\widetilde d_a^T$ with measured residual
$\widetilde v_a=[\rho\widetilde d_a^T-d_a^S]_+$. Then
\begin{equation}
 d_a^{S,e}
 \geq\rho d_a^T-\widetilde v_a-2\rho\epsilon_T-2\epsilon.
 \label{eq:proof_precision}
\end{equation}
Indeed, $\widetilde d_a^T\geq d_a^T-2\epsilon_T$ and the previous residual
and perturbation inequalities apply. A stored positive margin certifies the
same orientation for the higher-precision reference when
$\widetilde d_a^T>2\epsilon_T$. The implementation caches finite-precision
logits and excludes exact ties. Candidate membership can change near
top-$K$ ties.

\paragraph{Margin demand.}
The hinge threshold $\rho$ is a margin-demand level: values below one allow
contraction, $\rho=1$ targets the reference gap, and values above one demand
selective expansion.

\appsection{SRP-SFT Training Algorithm}
\label{app:algorithm}

Algorithm~\ref{alg:srp-sft} restates the procedure of
Section~\ref{sec:method}. The frozen reference is queried once. Subsequent
student updates use only cached alternative-token identities and logits,
together with a standard teacher-forced student forward. Empty pair sets
contribute no auxiliary loss. The headline SRP-SFT run uses $K=10$, $\rho=1$,
and $\mu=1$. The sensitivity grid also evaluates $\rho>1$ as a stronger
margin-demand setting.

\begin{algorithm}[!htbp]
\caption{Selective Rank-Preserving SFT (SRP-SFT)}
\label{alg:srp-sft}
\small
\begin{algorithmic}[1]
\Require SFT dataset $\mathcal{D}=\{(x,y)\}$. Pretrained model $T$
\Require $K$. $\rho\geq 0$. $\mu\geq 0$
\Ensure fine-tuned student $S$
\State Initialize $S\leftarrow T$ and freeze $T$
\For{each $(x,y)\in\mathcal{D}$}
  \Statex \hspace{\algorithmicindent}\textit{// offline reference cache}
  \For{each supervised response position $t$}
    \State $z_t^{T}\leftarrow T(x,y_{<t})$
    \State $\mathcal{A}_t\leftarrow
      \operatorname{TopK}_{k\in\mathcal{V}\setminus\{y_t\}} z_{t,k}^{T}$
    \State Cache $\{(k,z_{t,k}^{T}):k\in\mathcal{A}_t\}$
  \EndFor
\EndFor
\For{each minibatch $\mathcal{B}\subset\mathcal{D}$}
  \State $\mathcal{L}\leftarrow 0$. $n\leftarrow 0$
  \For{each $(x,y)\in\mathcal{B}$}
    \State Compute $(z_t^{S})_t$ with one forward of $S$
    \For{each supervised response position $t$}
      \State $\mathcal{L}_{\mathrm{CE},t}\leftarrow -\log p_t^{S}(y_t)$
      \State $\mathcal{P}_t\leftarrow
        \{(i,j):i,j\in\mathcal{A}_t,\ i\neq j,$
      \Statex $\Delta_{ij,t}^{T}>0\}$
      \If{$\mathcal{P}_t=\emptyset$}
        \State $\mathcal{L}_{\mathrm{SRP},t}\leftarrow 0$
      \Else
        \State $\mathcal{L}_{\mathrm{SRP},t}\leftarrow
          \dfrac{1}{|\mathcal{P}_t|}
          \sum_{(i,j)\in\mathcal{P}_t}$
        \Statex $\bigl[\rho\Delta_{ij,t}^{T}-\Delta_{ij,t}^{S}\bigr]_{+}$
      \EndIf
      \State $\mathcal{L}\leftarrow
        \mathcal{L}+\mathcal{L}_{\mathrm{CE},t}$
      \Statex $+\mu\mathcal{L}_{\mathrm{SRP},t}$. $n\leftarrow n+1$
    \EndFor
  \EndFor
  \State Update $S$ with $\nabla_{\theta}(\mathcal{L}/n)$
\EndFor
\State \Return $S$
\end{algorithmic}
\end{algorithm}

The two loops separate offline reference construction from student
optimization. The cache fixes the candidate identities and reference margins.
The student margins are recomputed at each update. An empty pair set leaves
the supervised CE term in place and sets the auxiliary term to zero.
Values $\rho>1$ in the sensitivity screen require the student margin to
exceed the reference margin.

\paragraph{$K=1$ sanity check.}
After target removal, a one-token candidate set cannot form an ordered pair:
$\mathcal{P}_t=\varnothing$, so $\mathcal{L}_{\mathrm{SRP},t}=0$ and the
objective is exactly CE. We ran a deterministic same-logits audit with the
target both present and absent from the cache. In both cases the pair count
and auxiliary loss were zero, and the maximum objective and gradient
difference from CE was zero to machine precision. The trained $K=1$ row in
Table~\ref{tab:sensitivity} is a separate 478-step run. That table
compares it with the headline CE checkpoint.

A separate controlled training audit now matches CE and $K=1$ for eight
optimizer steps on eight ranks with identical initialization, data order,
RNG state, optimizer, and BF16 precision. We use deterministic SDPA math
attention and disable communication overlap. Every microbatch loss and
optimizer-step gradient norm agrees exactly (maximum difference zero), and
initial and post-update parameter hashes agree on every rank, including the
final step. This identity holds in the stated deterministic configuration.

\appsection{Detailed Experimental Configuration}
\label{app:configuration}

\appsubsection{Coefficient Protocols and Aggregates}
\label{app:main_protocols}
All fine-tuned methods share the training implementation, software and
hardware environment, initialization, data, per-seed data order,
optimizer settings, and numerical precision. The common budget is 478
updates with a 478-step cosine schedule and 15 warmup steps, micro-batch
size one per GPU, 16 accumulation steps, and effective batch size 128.
Method-specific losses are specified below. SRP-SFT uses
$K=10,\rho=1,\mu=1$ at all three main-study seeds.

The main comparison fixes the coefficients in advance and uses them at
every seed. LP-SFT uses $\mu=1$. SRP-SFT uses $K=10$, $\rho=1$, and
$\mu=1$. At $\rho=1$ the requested margin equals the teacher margin.
The value $\mu=1$ is the numerical coefficient on each auxiliary.
The SRP sensitivity study varies one coefficient at a time. The headline
remains $K=10,\rho=1,\mu=1$.

The reasoning mean averages twelve metrics. The code mean averages
HumanEval+ and MBPP+ pass@1/pass@4. Each aggregate uses one checkpoint
and requires every constituent metric. Ranking-loss comparisons and sensitivity studies retain their
separately stated aggregate definitions and configurations. Beyond the
main study, six backbones use separate mathematics checkpoints trained
on OpenMathReasoning \citep{moshkov2025openmathreasoning} and code
checkpoints trained on rStar-Coder \citep{liu2025rstarcoder}
(Appendix~\ref{app:cross_backbone_tables}). These domain-specific studies
are evaluated separately from the UltraFeedback protocol.

\appsubsection{Training Data and Preprocessing}
We initialize all fine-tuned models from Qwen3-4B-Base using HuggingFace
revision
\texttt{906bfd4b4dc7f14ee432\allowbreak0094d8b41684abff8539}.

Training uses the \texttt{train\_sft} split of
\nolinkurl{HuggingFaceH4/ultrafeedback_binarized}.

The original split contains 61,135 rows. We apply right truncation to a maximum
sequence length of 2,048 tokens, after which 61,095 rows retain a valid
assistant target. ChatML formatting is used and the training loss supervises
the final assistant response. Distributed padding and complete gradient
accumulation windows yield 61,184 processed examples per epoch.

\appsubsection{Optimization}
The main experiment trains for one epoch, corresponding to 478 optimizer
steps. We use an effective batch size of 128 and AdamW with learning rate
$2\times10^{-5}$,
$\beta_1=0.9$,
$\beta_2=0.95$,
$\epsilon=10^{-8}$,
and no weight decay.

The learning rate follows a 478-step cosine schedule with 15 warmup
steps for all main-table fine-tuned methods. Training uses
bfloat16 precision, gradient checkpointing, and gradient-norm clipping at 1.0.

The primary screen uses training seed 1234. Independent seed-42 and
seed-3407 checks are reported separately from the headline values.

Experiments run with DeepSpeed ZeRO-2 on eight A800-80GB GPUs. The per-device
micro-batch size is 1 with 16 accumulation steps, giving effective
batch size 128. All compared methods use the same training implementation,
software and hardware environment, data-loader configuration, and common
training hyperparameters. This shared setup applies to CE, LP-SFT,
SRP-SFT, GEM, DFT, ASFT, and EAFT.
Method-specific loss configurations are detailed below.

\appsubsection{Loss Configurations}
We compare plain CE with GEM, DFT, ASFT, EAFT, LP-SFT, and SRP-SFT.
All fine-tuned methods share the same initial checkpoint and supervised
examples. The frozen Base is the untouched evaluation reference.
The ranking-loss screen replaces the auxiliary objective while retaining
the same initialization, data order, optimizer, and one-epoch budget.

\paragraph{Cross-Entropy.}
The CE baseline optimizes the standard causal language-modeling loss on the
supervised assistant response without an additional preservation objective.

\paragraph{GEM.}
GEM uses the official diversity-preserving token loss with $\beta=0.7$ and
linear importance weights \citep{li2025preserving}.

\paragraph{DFT.}
DFT reweights each supervised token by the student's detached target
probability and otherwise uses the same CE reduction
\citep{wu2025generalization}. It introduces no extra hyperparameters.

\paragraph{ASFT.}
ASFT combines the DFT token weight with a frozen-base forward KL term
$\mathrm{KL}(\pi_{\mathrm{base}}\,\|\,\pi_{\mathrm{student}})$ at
$\lambda=0.05$ \citep{zhu2025anchored}.

\paragraph{EAFT.}
EAFT gates the token CE loss by the student's Top-20 predictive entropy,
normalized by $\log 20$, with exponent $\alpha=1$
\citep{diao2026entropy}. High-entropy tokens retain stronger supervision.
Low-entropy confident-conflict tokens are down-weighted.

\paragraph{LP-SFT.}
For LP-SFT, the frozen base model supplies the top-10 non-label alternative
tokens at each valid response position. A locally normalized forward-KL
regularizer is applied over these alternatives. The LP-SFT baseline in the
main tables uses $\mu=1$.

\paragraph{SRP-SFT.}
SRP-SFT constructs all ordered teacher-preferred token pairs among the top-10
non-target alternatives. The headline SRP-SFT configuration uses the
margin-preservation loss with $K=10$, target multiplier $\rho=1$, and
auxiliary loss weight $\mu=1$. This is the teacher-margin setting used
in every main-table seed. The sensitivity screen uses its own $\rho=0.5$
reference. The headline coefficients remain $K=10$, $\rho=1$, and $\mu=1$.

The regularizer is active only when the student margin falls below the
reference-scaled target margin. Further expansion beyond that target
incurs no penalty. For $\rho>1$, a margin larger than the teacher's can
still violate the scaled lower bound.

For preservation-based methods, reference logits are generated once using
bfloat16 inference and cached as float16 values for subsequent training.

\appsubsection{Evaluation Protocol}

\paragraph{MATH-500.}
MATH-500, a 500-problem subset of MATH \citep{hendrycks2021measuring},
is evaluated with temperature 0.6, top-$p$ 1.0, 16 generated samples
per problem, and a maximum of 4,096 generated tokens. Final answers are
extracted from boxed expressions and checked using
\texttt{math-verify} version 0.9.0. We report pass@1 and pass@4.

\paragraph{AIME and GSM-Plus.}
AIME 2021--2026 is evaluated on 180 pinned problems with temperature 0.6,
top-$p$ 1.0, 32 samples per problem, and the same boxed-answer verifier.
The problem sources are the Pandores AIME archive for 2021--2025 and
MathArena for 2026.\footnote{\url{https://huggingface.co/datasets/Pandores/aime-1983-2025}.
\url{https://huggingface.co/datasets/MathArena/aime_2026}.}
The
GSM-Plus mini follow-up \citep{li2024gsmplus} uses the frozen lm-evaluation-harness task with
five-shot prompting. Both strict-match and flexible-extract exact-match
scores are reported.

\paragraph{Broad reasoning screen.}
The step-478 checkpoint is additionally evaluated on LogiQA2
\citep{liu2023logiqa2}, BBH logical deduction \citep{suzgun2023bbh},
ARC-Challenge \citep{clark2018arc}, and HellaSwag \citep{zellers2019hellaswag}
under the same seed-1234 screen.

Table~\ref{tab:main_benchmark} reports LogiQA2 accuracy, BBH exact match for
the five-object split only (BBH-5), and normalized accuracy
(\texttt{acc\_norm}) for ARC-Challenge and HellaSwag. BBH-5 contains 250 questions. The three/five/seven-object diagnostic
suite contains 750 requests in total. The LP-SFT entry follows
the baseline configuration specified in the loss-configuration appendix above.
For BBH, we extract the first explicit \texttt{the answer is (A--G)} marker
from each raw generation (case-insensitive, with an optional colon). Outputs
without a marker are scored incorrect.

\paragraph{MMLU.}
MMLU \citep{hendrycks2020measuring} is evaluated with lm-evaluation-harness 5-shot likelihood scoring
($14{,}042$ questions, seed 1234), using the same step-478 checkpoints as
the rest of Table~\ref{tab:main_benchmark}. We report the official aggregate
accuracy.

\paragraph{Phy-Oly.}
Phy-Oly is the pinned OlympiadBench \citep{he2024olympiadbench} split \texttt{OE\_TO\_physics\_en\_COMP}
($n=236$): English, text-only, competition-source physics problems, excluding
multimodal items, CEE/Gaokao, and theorem-proof questions. All methods use
temperature $0$, greedy decoding, one sample, and at most $4{,}096$ new tokens.
Answers are scored with the official OlympiadBench \texttt{AutoScoringJudge}.
The same extractor and judge are shared across methods.

\paragraph{HumanEval+ and MBPP+.}
HumanEval \citep{chen2021evaluating} and MBPP \citep{austin2021mbpp}
are evaluated with the augmented EvalPlus tests \citep{liu2023your},
yielding HumanEval+ and MBPP+ under EvalPlus version 0.3.1. Generation
uses temperature 0.8, top-$p=1$, and 20 samples per task on 164 HumanEval+
and 378 MBPP+ tasks. All eight seed-1234 checkpoints use
\texttt{max\_tokens=4096} and \texttt{max\_model\_len=8192} throughout.
The dataset, sampling budget, extractor, and evaluator are shared. A sample
is correct only if both \texttt{base\_status} and \texttt{plus\_status} are
\texttt{pass}. For each task with $n=20$ samples and $c$ correct samples,
we compute the unbiased estimate
\begin{equation}
\widehat{\mathrm{pass@}k}=1-\frac{\binom{n-c}{k}}{\binom{n}{k}},
\end{equation}
with $\binom{n-c}{k}=0$ when $n-c<k$, then average over tasks.
Table~\ref{tab:code_benchmark} reports pass@1 and pass@4.
Table~\ref{tab:code_long_cap_full} also reports pass@16. All estimates use
all 20 samples through this estimator.
Table~\ref{tab:code_benchmark} contains all eight checkpoints evaluated
under this shared protocol. The separate seed-42 and seed-3407 panels also
use the 4096/8192 caps. Loss-form and sensitivity studies use the same
evaluation protocol.

\paragraph{Generation backend.}
The MATH and code evaluations use vLLM 0.11.0.
The supplementary broad-reasoning runs use vLLM 0.12.0.
Manifests pin the per-task settings, caps, and generation seed 1234.

\paragraph{Held-out diagnostics.}
Held-out token cross-entropy and local KL are computed relative to the frozen
base model. At each valid response position we restrict to the reference
top-$K$ non-target alternatives and form teacher-preferred pairs
$(i,j)$ with $\Delta^T_{ij}>0$ (typically 8--45 pairs, median 35). Rank
flips, margin contraction, and margin expansion are pair-averaged
indicators and hinges on this set: contraction is
$\frac{1}{|\mathcal{P}_t|}\sum_{(i,j)\in\mathcal{P}_t}[\Delta^T_{ij}-\Delta^S_{ij}]_+$.
Flip and violation are fractions. Contr. is the mean positive logit-margin
contraction in logit units. Whenever we report a percentage
reduction, it is a relative reduction in this magnitude. The tabulated diagnostics use 998 examples and 280{,}908
valid response positions. The mechanism figure uses 265{,}751 positions.
On a log axis the $10^{-2}$ decade therefore looks sparse: CE and LP-SFT
place almost no tokens there, while SRP-SFT's preserved tail is a
minority of positions whose discrete pair averages form faint horizontal
bands.

\paragraph{Cache provenance.}
\label{app:cache_provenance}
Both caches are built from
\texttt{HuggingFaceH4/ultrafeedback\_binarized} at revision
\texttt{3949bf5f8c17c394422ccfab0c31ea9c20bdeb85}, with the Qwen3-4B-Base
chat template, a 2{,}048-token limit, and teacher top-10.
Training uses \texttt{train\_sft}: 61{,}135 rows, of which 61{,}095 retain a
supervised target position. Among the retained training rows, 11 prompt
identifiers occur twice and 3 message sequences are exact duplicates.
The held-out cache uses \texttt{test\_sft}. That split has 1{,}000 rows.
Rows 158 and 374 fill the 2{,}048-token context and leave no supervised
position, so the cache has 998 rows. Those rows have distinct
indices, prompt identifiers, and message sequences. No retained
prompt identifier or message sequence occurs in the retained training cache.
The 998-example statistics in Section~\ref{sec:mechanism} use this cache.

\appsubsection{Software and Hardware}
All compared methods train in the same software environment: Python 3.12,
PyTorch 2.6.0+cu124, Transformers 4.56.2, Datasets 3.2.0, and
DeepSpeed 0.16.9.

Evaluation uses PyTorch 2.9.0+cu128, vLLM \citep{kwon2023vllm} for
generation, and lm-evaluation-harness \citep{gao2023evalharness} for the
likelihood and broad-reasoning tasks described above.

All compared methods use the same hardware configuration of eight
A800-80GB GPUs.

\appsubsection{Reporting Scope}
Tables~\ref{tab:main_benchmark} and~\ref{tab:code_benchmark} report the
seed-1234 checkpoints.
GEM, DFT, ASFT, and EAFT follow the same one-epoch protocol.
Additional CE, LP-SFT, and SRP-SFT
seed checks use the same protocol.
The seed-1234, seed-42, and seed-3407 reasoning and code means are summarized in
Table~\ref{tab:two_seed_summary}.

The matched EvalPlus evaluation is complete for Base, CE, LP-SFT,
and SRP-SFT, with 20 samples per task. The temperature-control comparison
uses the same pass@$k$ estimator, including CE-TM MBPP+.
The manifests match dataset hashes, generation seed, sample count, and caps.
The fitted logit scale is applied before sampling. For EvalPlus, a sample
counts as correct only when both base and additional tests pass.

\begin{table}[!htbp]
\centering
\scriptsize
\setlength{\tabcolsep}{1.4pt}
\renewcommand{\arraystretch}{1.08}
\begin{tabularx}{\columnwidth}{@{}l*{6}{Y}@{}}
\toprule
Method & \multicolumn{3}{c}{HumanEval+} & \multicolumn{3}{c}{MBPP+} \\
\cmidrule(lr){2-4}\cmidrule(lr){5-7}
& p@1 & p@4 & p@16 & p@1 & p@4 & p@16 \\
\midrule
Base$^{\dagger}$ & 45.49 & 71.56 & 87.81 & 46.12 & \secbest{72.50} & \secbest{84.13} \\
CE & 44.18 & 69.56 & 84.10 & 44.66 & 65.79 & 76.73 \\
LP-SFT & \secbest{57.90} & \secbest{81.67} & \secbest{90.50} & \secbest{51.87} & 72.23 & 81.07 \\
\textbf{SRP-SFT} & \textbf{62.84} & \textbf{83.87} & \textbf{91.76} & \textbf{56.24} & \textbf{75.06} & \textbf{84.64} \\
\bottomrule
\end{tabularx}
\caption{Matched code results for Base, CE, LP-SFT, and SRP-SFT
(seed 1234, percentages), adding pass@16 to Table~\ref{tab:code_benchmark}.
All entries use unbiased estimates from 20 samples per task.}
\label{tab:code_long_cap_full}
\end{table}

\FloatBarrier
\appsubsection{Equal Benchmark Weighting}
\label{app:benchmark_equal_mean}
The headline reasoning mean gives each metric equal weight. As a robustness
check, we first average the two metrics within MATH-500, AIME, and GSM-Plus,
then equally average those three scores with LogiQA2, BBH-5, ARC-C,
HellaSwag, MMLU, and Phy-Oly (nine benchmarks). Values below use the same
unrounded seed-1234 measurements.

\begin{table}[!htbp]
\centering
\footnotesize
\begin{tabularx}{\columnwidth}{@{}lYY@{}}
\toprule
Method & Metric mean & Benchmark mean \\
\midrule
Base$^{\dagger}$ & 42.24 & 43.07 \\
CE & 35.99 & 38.27 \\
GEM & 35.01 & 37.44 \\
DFT & 35.25 & 35.87 \\
ASFT & 37.64 & 39.39 \\
EAFT & 34.91 & 37.38 \\
LP-SFT & 43.50 & 43.75 \\
\textbf{SRP-SFT} & 45.11 & 45.13 \\
\bottomrule
\end{tabularx}
\caption{Equal metric versus equal benchmark weighting for the
seed-1234 reasoning screen (percentages). Both aggregates require all
twelve metrics.}
\label{tab:benchmark_equal_mean}
\end{table}

\appsection{Independent-Seed Robustness Checks}
\label{app:seed42}

The independent-seed comparison uses Qwen3-4B-Base, UltraFeedback,
478-step cosine schedules, 15 warmup steps, and effective batch size 128.
SRP-SFT uses $K=10$, $\rho=1$, $\mu=1$ throughout.
Tables~\ref{tab:seed42_reasoning} and~\ref{tab:seed42_code} report all
benchmark columns and methods.

\begin{table}[!htbp]
\centering
\footnotesize
\begin{tabularx}{\columnwidth}{@{}lYY@{}}
\toprule
Method & Reasoning & Code \\
\midrule
CE & $35.96\pm0.10$ & $55.66\pm1.18$ \\
LP-SFT & $43.10\pm0.46$ & $66.17\pm0.27$ \\
\textbf{SRP-SFT} & $44.96\pm0.55$ & $69.48\pm0.02$ \\
\bottomrule
\end{tabularx}
\caption{Mean $\pm$ sample standard deviation across training seeds
1234, 42, and 3407 (percentages). Code
averages four metrics within each seed before computing the statistics.}
\label{tab:three_seed_variation}
\end{table}

Table~\ref{tab:three_seed_variation} treats the training seed as the sample
unit ($n=3$, sample SD with denominator $n-1$). SRP-SFT has the highest
code mean at every seed, with a three-seed mean of 69.48 and SD of 0.02.
Its reasoning mean is $44.96\pm0.55$. The CE and LP-SFT reasoning
means are $35.96\pm0.10$ and $43.10\pm0.46$.

\paragraph{Seed-42 headline results.}
SRP-SFT has MATH pass@1/pass@4 of 59.50/81.60,
a reasoning mean of 45.41, and a code mean of 69.47. LP-SFT has MATH pass@1/pass@4 of 49.45/77.00
and a code mean of 66.14. The LP entropy calibration uses $\alpha=1.0989$.
The per-seed benchmark panels below provide the rows used for the
three-seed comparison. The aggregate summary is reported for the reasoning
and code means above, whose checkpoint-level values are fully listed.

The additional seed-2026 SRP run uses $\rho=0.5$, with MATH
pass@1/pass@4 of 50.00/77.65 and HE+/MBPP+ pass@1/pass@4 of
59.54/82.62 and 51.16/72.74 (code mean 66.52). It is a standalone
configuration-specific result without paired CE/LP runs, and is excluded
from the headline-$\rho=1$ three-seed summary.

\begin{table}[!htbp]
\centering
\scriptsize
\setlength{\tabcolsep}{1.1pt}
\begin{tabularx}{\columnwidth}{@{}l*{5}{Y}@{}}
\toprule
Method & HE+ p@1 & p@4 & MBPP+ p@1 & p@4 & Code mean \\
\midrule
CE & 42.74 & 66.61 & 43.06 & 64.92 & 54.33 \\
LP-SFT & 58.29 & 82.35 & 51.68 & 72.24 & 66.14 \\
GEM & 20.15 & 43.99 & 22.12 & 47.63 & 33.47 \\
DFT & 61.74 & 68.60 & 54.40 & 60.44 & 61.30 \\
SRP ($\rho=1$) & 62.80 & 84.39 & 56.19 & 74.49 & 69.47 \\
LP-TM (target $\rho=1$) & 62.16 & 83.29 & 53.77 & 72.37 & 67.90 \\
\bottomrule
\end{tabularx}
\caption{Seed-42 code results (percentages).
The seed-42 LP-TM code evaluation uses $\alpha=1.0989$.}
\label{tab:seed42_matched_available}
\end{table}


\begin{table*}[!t]
\centering
\scriptsize
\setlength{\tabcolsep}{1.2pt}
\renewcommand{\arraystretch}{1.10}
\begin{tabularx}{\textwidth}{@{}>{\raggedright\arraybackslash}p{1.20cm}*{13}{Y}@{}}
\toprule
Method
& \multicolumn{2}{c}{MATH-500}
& \multicolumn{2}{c}{AIME}
& \multicolumn{2}{c}{GSM-Plus mini}
& LogiQA2
& BBH-5
& ARC-C
& \shortstack{Hella-\\Swag}
& MMLU
& Phy-Oly
& Mean \\
\cmidrule(lr){2-3}
\cmidrule(lr){4-5}
\cmidrule(lr){6-7}
& p@1
& p@4
& p@1
& p@4
& Strict
& Flexible
& Acc.
& EM
& Norm.
& Norm.
& 5-shot
& Acc.
& \\
\midrule
\multicolumn{14}{c}{Training seed 42} \\
\midrule
CE & 31.72 & 58.62 & 0.59 & 2.16 & 45.42 & 45.08 & 42.75 & 2.80 & 53.41 & 71.87 & 69.07 & 8.90 & 36.03 \\
GEM & 20.65 & 47.71 & 0.43 & 1.61 & 47.71 & 48.00 & 44.34 & 8.00 & 53.33 & 72.94 & 69.29 & 4.66 & 34.89 \\
DFT & 31.51 & 42.05 & 0.56 & 1.17 & 50.67 & 52.25 & 31.81 & 0.80 & 51.02 & 65.91 & 66.73 & 5.93 & 33.37 \\
ASFT & 45.50 & 68.85 & 2.73 & 6.94 & 52.92 & 54.12 & 37.15 & 15.60 & 53.58 & 70.75 & 67.80 & 10.17 & 40.51 \\
EAFT & 28.30 & 56.44 & 0.99 & 3.38 & 45.38 & 36.00 & 41.67 & 11.60 & 54.18 & 72.16 & 69.89 & 9.75 & 35.81 \\
LP-SFT & 49.45 & 77.00 & 2.92 & 8.20 & 57.88 & 58.54 & 44.40 & 12.40 & 54.52 & 73.16 & 71.44 & 8.47 & 43.20 \\
SRP-SFT & 59.50 & 81.60 & 4.60 & 11.49 & 58.38 & 59.08 & 46.69 & 14.40 & 54.61 & 73.12 & 72.18 & 9.32 & 45.41 \\
\midrule
\multicolumn{14}{c}{Training seed 3407} \\
\midrule
CE & 30.31 & 57.45 & 0.61 & 2.25 & 46.88 & 47.50 & 40.52 & 2.80 & 53.16 & 71.76 & 70.18 & 6.78 & 35.85 \\
GEM & 21.02 & 48.10 & 0.57 & 2.14 & 46.46 & 46.67 & 45.17 & 11.60 & 54.18 & 72.75 & 70.75 & 6.78 & 35.52 \\
DFT & 48.36 & 60.96 & 1.37 & 3.53 & 45.54 & 46.96 & 34.41 & 0.80 & 49.74 & 65.15 & 64.77 & 7.63 & 35.77 \\
ASFT & 32.19 & 51.22 & 1.82 & 4.85 & 50.33 & 52.88 & 39.89 & 16.40 & 54.35 & 70.81 & 67.55 & 7.20 & 37.46 \\
EAFT & 31.46 & 58.16 & 0.78 & 2.84 & 46.42 & 34.67 & 41.92 & 12.40 & 52.30 & 71.78 & 70.72 & 7.20 & 35.89 \\
LP-SFT & 45.61 & 75.78 & 2.81 & 7.96 & 57.25 & 58.04 & 41.73 & 16.40 & 52.90 & 73.32 & 71.35 & 8.05 & 42.60 \\
SRP-SFT & 53.76 & 79.55 & 4.17 & 10.77 & 57.50 & 58.63 & 43.70 & 15.20 & 53.84 & 73.28 & 72.49 & 9.32 & 44.35 \\
\bottomrule
\end{tabularx}
\caption{Independent seed-42 and seed-3407 reasoning results (percentages).
Each mean averages all twelve metrics from the same checkpoint.
Both SRP-SFT rows use $K=10$, $\rho=1$, $\mu=1$.}
\label{tab:seed42_reasoning}
\end{table*}

\begin{table}[!t]
\centering
\footnotesize
\setlength{\tabcolsep}{2.0pt}
\renewcommand{\arraystretch}{1.10}
\begin{tabularx}{\columnwidth}{@{}l*{5}{Y}@{}}
\toprule
Method & \multicolumn{2}{c}{HumanEval+} & \multicolumn{2}{c}{MBPP+} & Mean \\
\cmidrule(lr){2-3}\cmidrule(lr){4-5}
& pass@1 & pass@4 & pass@1 & pass@4 & \\
\midrule
\multicolumn{6}{c}{Training seed 42} \\
\midrule
CE & 42.74 & 66.61 & 43.06 & 64.92 & 54.33 \\
GEM & 20.00 & 44.48 & 22.91 & 48.31 & 33.93 \\
DFT & 61.86 & 68.59 & 54.48 & 60.49 & 61.35 \\
ASFT & 63.02 & 76.25 & 57.78 & 68.45 & 66.37 \\
EAFT & 46.55 & 71.48 & 45.32 & 65.93 & 57.32 \\
LP-SFT & 58.29 & 82.35 & 51.68 & 72.24 & 66.14 \\
SRP-SFT & 62.80 & 84.39 & 56.19 & 74.49 & 69.47 \\
\midrule
\multicolumn{6}{c}{Training seed 3407} \\
\midrule
CE & 44.57 & 69.79 & 45.62 & 66.41 & 56.60 \\
GEM & 21.31 & 45.93 & 26.93 & 53.33 & 36.88 \\
DFT & 58.38 & 65.59 & 52.53 & 57.03 & 58.38 \\
ASFT & 64.36 & 77.51 & 59.59 & 70.01 & 67.87 \\
EAFT & 47.59 & 71.70 & 45.67 & 65.64 & 57.65 \\
LP-SFT & 59.45 & 83.17 & 51.23 & 71.94 & 66.45 \\
SRP-SFT & 63.35 & 85.06 & 55.34 & 74.16 & 69.48 \\
\bottomrule
\end{tabularx}
\caption{Independent-seed EvalPlus results (percentages).
Mean averages HumanEval+ and MBPP+ pass@1/pass@4 using all 20 samples per
task. All panels use generation seed 1234,
temperature 0.8, top-$p$ 1, and token/context caps 4096/8192.}
\label{tab:seed42_code}
\end{table}

At seed 42, the code means are 69.47 for SRP-SFT and 66.14
for LP-SFT. At seed 3407, they are 69.48 and 66.45. SRP-SFT has
higher MATH pass@1, twelve-metric reasoning mean, and code mean than CE
and LP-SFT at all three seeds. SRP reasoning means are 45.11, 45.41, and
44.35 at seeds 1234, 42, and 3407. The CE means are 35.99, 36.03,
and 35.85, and the LP-SFT means are 43.50, 43.20, and 42.60.

The CPU-side evaluation covers all seven methods and both code datasets.

\appsection{Cross-Backbone High-Quality Robustness}
\label{app:cross_backbone_tables}

As a data-quality and architecture robustness check, we repeated the
Base/CE/LP-SFT/SRP-SFT comparison on six backbones using separate
filtered corpora: OpenMathReasoning \citep{moshkov2025openmathreasoning}
for mathematics SFT and rStar-Coder \citep{liu2025rstarcoder} for code SFT.
These corpora replace UltraFeedback in this robustness study.
Each backbone has separate mathematics-trained and code-trained checkpoints.
We use Qwen base checkpoints because these releases
provide the chat template required by our SFT pipeline. Granite, Ministral,
Gemma, and SmolLM3 use their instruction-tuned releases, which supply that
template.
The released backbone families are Qwen3.5 \citep{qwen2026qwen35},
Granite 4.1 \citep{ibm2026granite41}, Ministral 3 \citep{mistral2025ministral3},
Gemma 3 \citep{google2025gemma3}, and SmolLM3 \citep{hf2025smollm3}.
GSM8K evaluates the mathematics-trained checkpoints \citep{cobbe2021gsm8k}.

\appsubsection{Reproduction Protocol}
\label{app:cross_backbone_protocol}

\begin{table}[!htbp]
\centering
\scriptsize
\setlength{\tabcolsep}{3pt}
\begin{tabularx}{\columnwidth}{@{}Xrr@{}}
\toprule
Setting & Mathematics & Code \\
\midrule
Retained training examples & 96{,}949 & 95{,}966 \\
Example presentations per epoch & 97{,}024 & 96{,}000 \\
Sequence length (tokens) & 4{,}096 & 4{,}096 \\
Effective batch size & 128 & 128 \\
Epochs & 1 & 1 \\
Optimizer updates / cosine horizon & 758 & 750 \\
Linear warmup updates & 23 & 23 \\
Evaluated checkpoint step & 758 & 750 \\
Training seed & 1234 & 1234 \\
\bottomrule
\end{tabularx}
\caption{Cross-backbone training budgets, shared by CE, LP-SFT, and
SRP-SFT for every backbone. Example presentations include deterministic
sampler padding to complete gradient-accumulation windows.}
\label{tab:cross_backbone_training_protocol}
\end{table}

\paragraph{Training subsets.}
For mathematics, we scan the \texttt{cot} split of
\texttt{nvidia/OpenMathReasoning} in sorted Parquet-file and row order,
retaining the first eligible solution for each unique problem after
lowercasing and whitespace normalization. We require a nonempty expected
answer, a problem of at least 20 characters, a solution of at least 80
characters, and at most 12{,}000 characters for problem plus solution.
We exclude normalized exact matches to MATH-500 test prompts and substring
matches against test prompts of at least 80 characters. The recorded scan
contains 3{,}201{,}061 rows and retains 96{,}949 examples, including the
removal of 25 prompt-overlap rows. This filter removes MATH-500 prompt overlap.

For code, we use \texttt{microsoft/rStar-Coder}, scanning
\texttt{seed\_sft} before \texttt{synthetic\_sft} in deterministic file/row
order. Seed examples must have both \texttt{verified} and
\texttt{is\_passed} set. Questions must contain at least 80 characters,
responses at least 120, and their combined length must not exceed
16{,}000 characters. We retain one example per normalized question and
exclude any question sharing a 16-token sequence with a pinned HumanEval+
or MBPP+ prompt, tokenized by the regular expression
\texttt{[a-z0-9\_]+} after lowercasing. The resulting 95{,}966 examples
comprise 3{,}629 seed and 92{,}337 synthetic examples. 17 overlap rows
are excluded. Synthetic examples are kept under the length and overlap
filters. Each domain's resulting JSONL is shared by all six
backbones and all three fine-tuning methods.

\paragraph{Tokenization and reference cache.}
Each example is a user question followed by an assistant solution. We use
the checkpoint's native chat template, disable optional thinking mode,
truncate the rendered sequence on the right at 4{,}096 tokens, and supervise
only assistant-response tokens, including the end-of-turn token when retained.
Each example is a separate sequence. All retained rows have supervised tokens
for all six tokenizers. Table~\ref{tab:cross_backbone_reproduction_models}
gives the resulting supervised-token totals before sampler padding.
For each backbone/domain pair, its frozen initialization supplies one cache
shared by CE, LP-SFT, and SRP-SFT. LP uses temperature one and $\mu=1$.
SRP uses all eligible teacher-preferred pairs, the hinge penalty,
$K=10$, $\rho=1$, and $\mu=1$. These coefficients are fixed across backbones.

\begin{table*}[!t]
\centering
\scriptsize
\setlength{\tabcolsep}{3pt}
\begin{tabularx}{\textwidth}{@{}>{\raggedright\arraybackslash}Xlrr@{}}
\toprule
Initialization (repository ID) & Revision & Math tokens & Code tokens \\
\midrule
\texttt{Qwen/Qwen3.5-2B-Base} & \texttt{b1485b2fa6df} & 306{,}157{,}988 & 234{,}957{,}303 \\
\texttt{Qwen/Qwen3.5-0.8B-Base} & \texttt{dc7cdfe2ee41} & 306{,}157{,}988 & 234{,}957{,}303 \\
\texttt{ibm-granite/granite-4.1-3b} & \texttt{c0650403e44e} & 294{,}685{,}290 & 224{,}987{,}190 \\
\texttt{mistralai/Ministral-3-3B-Instruct-2512-BF16} & \texttt{b6d637bef239} & 283{,}617{,}839 & 220{,}836{,}597 \\
\texttt{google/gemma-3-4b-it} & \texttt{093f9f388b31} & 305{,}993{,}476 & 240{,}191{,}774 \\
\texttt{HuggingFaceTB/SmolLM3-3B} & \texttt{a07cc9a04f16} & 292{,}833{,}133 & 224{,}365{,}722 \\
\bottomrule
\end{tabularx}
\caption{Pinned cross-backbone initializations (12-character revision prefixes)
and supervised-token counts after tokenization and truncation. Counts cover
one copy of each retained example. Token budgets vary with the tokenizer.}
\label{tab:cross_backbone_reproduction_models}
\end{table*}

\paragraph{Optimization and checkpoint selection.}
All 36 fine-tuning runs (six backbones, two domains, three methods) use
eight A800-80GB GPUs, BF16, and DeepSpeed ZeRO-2. We update the text-model
parameters. Vision modules and multimodal projectors, when present, remain
frozen. Each GPU uses micro-batch one and 16 accumulation steps, giving
an effective batch of 128. AdamW uses peak learning rate $2\times10^{-5}$,
$(\beta_1,\beta_2)=(0.9,0.95)$, $\epsilon=10^{-8}$, zero weight decay,
and gradient clipping at 1.0. A linear warmup over 23 updates is followed
by cosine decay to zero over the domain's full one-epoch horizon.
Training and sampler seeds are 1234, with two data-loader workers.
The sampler repeats examples to complete accumulation windows:
97{,}024 example presentations yield 758 math updates, and 96{,}000
yield 750 code updates. We evaluate the final checkpoint at those steps.
The Base row uses the corresponding untouched initialization.

\paragraph{Evaluation settings.}
All four arms use native chat formatting, generation seed 1234, and
disabled optional thinking mode. Mathematics checkpoints are evaluated
on MATH-500 (500 test questions), GSM8K (1{,}319 test questions), and
AIME 2021--2025 (150 questions). MATH-500 and GSM8K use 16 samples per
question, temperature 0.6, top-$p=1$, a 4{,}096-token generation cap,
and an 8{,}192-token context limit. AIME uses 32 samples at the same
temperature and top-$p$, with caps of 8{,}192 generated and 12{,}288
context tokens. The data revisions are \texttt{6e4ed1a2a79a} for
\texttt{HuggingFaceH4/MATH-500}, \texttt{740312add88f} for
\texttt{openai/gsm8k} (\texttt{main}, test), and \texttt{292f1a1b4f04}
for \texttt{Pandores/aime-1983-2025}, using the recorded
\texttt{repaired-v1} AIME data restricted to 2021--2025.
Code checkpoints use 20 samples on each of 164 HumanEval+ and 378 MBPP+
tasks, temperature 0.8, top-$p=0.95$, and generation/context caps of
4{,}096/8{,}192 tokens. This top-$p$ is specific to the cross-backbone
study. EvalPlus 0.3.1 requires both base and additional tests to pass.
Pass@$k$ uses the unbiased estimator given above. The mathematics and
code scores therefore come from separate final checkpoints.

\paragraph{Software and audit records.}
The cross-backbone environment locks record Python 3.12.3,
PyTorch \texttt{2.8.0a0+5228986c39.nv25.5} with CUDA 12.9,
Transformers 5.17.0, DeepSpeed 0.16.9, Datasets 4.1.1, and
Accelerate 1.10.1. The generation launcher first attempts vLLM with
tensor parallelism eight and falls back to eight independent Transformers
workers when a backbone is unsupported. We retain the per-job commands
and backend logs. Dataset manifests record source-file SHA-256 hashes,
filter settings, and retained counts. Cache manifests record tokenization
statistics. Each run records the
resolved optimizer and scheduler settings, and each evaluation has a
generation manifest and raw outputs. These records separate the
cross-backbone protocol from the Qwen3-4B main-study environment.

\appsubsection{Cross-Backbone Results}

These runs use 4{,}096-token sequences, one epoch, $K=10$, and $\rho=1$. They
are therefore a separate robustness study from the 2{,}048-token main screen.
OpenMathReasoning is the mathematics training
domain for the reasoning tables, and the code tables use rStar-Coder.
The reported scores measure in-domain adaptation after domain-specific SFT.
We report the four pre-specified
Base/CE/LP-SFT/SRP-SFT arms, all of which are complete in the reported
six-backbone comparison.

The AIME tables use two groups of three backbones.
Tables~\ref{tab:cross_backbone_reasoning_three_a}
and~\ref{tab:cross_backbone_reasoning_three_b} report GSM8K and MATH-500
checks for the first and second groups, respectively. The code tables retain
separate base-model and instruction-tuned groups. All of these tables occupy
one column. Model names and evaluation metrics identify each comparison.

Figure~\ref{fig:cross_backbone_robustness} reports the difference between
SRP-SFT and the arithmetic mean of CE and LP-SFT for the same backbone
and dataset. The reasoning tables report
pass@16 from the corresponding GSM8K and MATH-500 evaluations.

\begin{table}[!htbp]
\centering
\footnotesize
\setlength{\tabcolsep}{2.0pt}
\renewcommand{\arraystretch}{1.10}
\begin{tabularx}{\columnwidth}{@{}l*{3}{Y}@{}}
\toprule
Method & \shortstack{Qwen3.5-\\2B\\Base} & \shortstack{Qwen3.5-\\0.8B\\Base} & \shortstack{Granite-\\4.1-3B\\Instruct} \\
\midrule
Base & 10.94 & 1.65 & 6.54 \\
CE & 5.42 & 1.77 & 17.38 \\
LP-SFT & 6.83 & 1.58 & 19.46 \\
SRP-SFT & 11.31 & 1.77 & 19.31 \\
\bottomrule
\end{tabularx}
\caption{AIME 2021--2025 pass@1 results for Qwen3.5-2B Base,
Qwen3.5-0.8B Base, and Granite-4.1-3B Instruct.}
\label{tab:cross_backbone_aime_base}
\end{table}

\begin{table}[!htbp]
\centering
\footnotesize
\setlength{\tabcolsep}{2.0pt}
\renewcommand{\arraystretch}{1.10}
\begin{tabularx}{\columnwidth}{@{}l*{3}{Y}@{}}
\toprule
Method & \shortstack{Ministral-\\3-3B\\Instruct} & \shortstack{Gemma-\\3-4B\\it} & \shortstack{SmolLM3-\\3B\\Instruct} \\
\midrule
Base & 18.42 & 9.62 & 8.67 \\
CE & 17.48 & 5.58 & 30.31 \\
LP-SFT & 20.88 & 9.52 & 30.06 \\
SRP-SFT & 21.46 & 8.33 & 29.23 \\
\bottomrule
\end{tabularx}
\caption{AIME 2021--2025 pass@1 results for Ministral-3-3B Instruct,
Gemma-3-4B it, and SmolLM3-3B Instruct.}
\label{tab:cross_backbone_aime_instruct}
\end{table}

\paragraph{Instruction-tuned comparison.}
Across Tables~\ref{tab:cross_backbone_aime_base}
and~\ref{tab:cross_backbone_aime_instruct}, SRP-SFT leads on Ministral, CE leads on SmolLM3,
LP-SFT leads on Granite, and the untouched Base leads on Gemma. These
comparisons use the complete set of pre-specified arms for every backbone.

\paragraph{Independent reasoning checks.}
Table~\ref{tab:cross_backbone_reasoning_three_a} groups Qwen3.5-2B Base,
Qwen3.5-0.8B Base, and Granite-4.1-3B Instruct.
Table~\ref{tab:cross_backbone_reasoning_three_b} contains Ministral-3-3B
Instruct, Gemma-3-4B it, and SmolLM3-3B Instruct. Both tables use the same
backbone grouping and report GSM8K and MATH-500 pass@1/pass@16.
AIME is reported separately in Tables~\ref{tab:cross_backbone_aime_base}
and~\ref{tab:cross_backbone_aime_instruct}.

\begin{table}[!htbp]
\centering
\footnotesize
\setlength{\tabcolsep}{2.0pt}
\renewcommand{\arraystretch}{1.08}
\begin{tabularx}{\columnwidth}{@{}l*{4}{Y}@{}}
\toprule
Method & \multicolumn{2}{c}{GSM8K} & \multicolumn{2}{c}{MATH-500} \\
\cmidrule(lr){2-3}\cmidrule(lr){4-5}
& pass@1 & pass@16 & pass@1 & pass@16 \\
\midrule
\multicolumn{5}{@{}l}{\textit{Qwen3.5-2B Base}} \\
\addlinespace[1pt]
Base & 80.70 & 96.51 & 69.25 & 89.40 \\
CE & 73.46 & 94.69 & 56.95 & 88.60 \\
LP-SFT & 76.46 & 95.75 & 65.59 & 91.80 \\
SRP-SFT & 78.13 & 96.21 & 70.39 & 92.40 \\
\midrule
\multicolumn{5}{@{}l}{\textit{Qwen3.5-0.8B Base}} \\
\addlinespace[1pt]
Base & 56.11 & 91.58 & 45.16 & 77.00 \\
CE & 52.48 & 90.07 & 43.70 & 80.40 \\
LP-SFT & 57.32 & 90.45 & 45.35 & 79.80 \\
SRP-SFT & 55.49 & 90.83 & 45.60 & 78.60 \\
\midrule
\multicolumn{5}{@{}l}{\textit{Granite-4.1-3B Instruct}} \\
\addlinespace[1pt]
Base & 88.53 & 94.31 & 67.04 & 91.00 \\
CE & 89.40 & 97.42 & 75.67 & 93.80 \\
LP-SFT & 91.12 & 96.44 & 81.45 & 95.20 \\
SRP-SFT & 91.16 & 96.06 & 80.75 & 96.00 \\
\bottomrule
\end{tabularx}
\caption{Reasoning checks for Qwen3.5-2B Base, Qwen3.5-0.8B Base, and
Granite-4.1-3B Instruct (percentages). Columns report GSM8K and MATH-500
pass@1/pass@16.}
\label{tab:cross_backbone_reasoning_three_a}
\end{table}

\paragraph{Adaptation depends on the benchmark.}
For Qwen3.5-2B, SRP-SFT exceeds the untouched Base on both MATH-500 metrics but
not on GSM8K. Granite also separates the two tasks: SRP-SFT has the highest
GSM8K pass@1 and MATH-500 pass@16, while LP-SFT leads MATH-500 pass@1. Keeping
all four metrics visible avoids reducing these distinct outcomes to a single
model ranking.

\begin{table}[!htbp]
\centering
\footnotesize
\setlength{\tabcolsep}{2.0pt}
\renewcommand{\arraystretch}{1.08}
\begin{tabularx}{\columnwidth}{@{}l*{4}{Y}@{}}
\toprule
Method & \multicolumn{2}{c}{GSM8K} & \multicolumn{2}{c}{MATH-500} \\
\cmidrule(lr){2-3}\cmidrule(lr){4-5}
& pass@1 & pass@16 & pass@1 & pass@16 \\
\midrule
\multicolumn{5}{@{}l}{\textit{Ministral-3-3B Instruct}} \\
\addlinespace[1pt]
Base & 80.20 & 97.35 & 66.53 & 93.80 \\
CE & 88.03 & 96.97 & 73.69 & 91.60 \\
LP-SFT & 89.93 & 97.27 & 74.76 & 91.80 \\
SRP-SFT & 90.36 & 97.42 & 75.30 & 91.20 \\
\midrule
\multicolumn{5}{@{}l}{\textit{Gemma-3-4B it}} \\
\addlinespace[1pt]
Base & 89.58 & 96.06 & 76.39 & 91.80 \\
CE & 77.79 & 96.89 & 57.55 & 85.00 \\
LP-SFT & 86.52 & 97.27 & 63.94 & 92.00 \\
SRP-SFT & 86.52 & 96.36 & 61.59 & 91.80 \\
\midrule
\multicolumn{5}{@{}l}{\textit{SmolLM3-3B Instruct}} \\
\addlinespace[1pt]
Base & 77.79 & 96.97 & 71.12 & 92.20 \\
CE & 89.63 & 96.44 & 80.76 & 93.60 \\
LP-SFT & 90.13 & 97.04 & 80.64 & 93.80 \\
SRP-SFT & 90.01 & 97.42 & 80.92 & 93.80 \\
\bottomrule
\end{tabularx}
\caption{Reasoning checks for Ministral-3-3B Instruct, Gemma-3-4B it, and
SmolLM3-3B Instruct (percentages). Columns report GSM8K and MATH-500
pass@1/pass@16.}
\label{tab:cross_backbone_reasoning_three_b}
\end{table}

\paragraph{Sampling comparisons.}
For Ministral, SRP-SFT leads GSM8K pass@1/pass@16 and MATH-500 pass@1,
while the untouched Base leads MATH-500 pass@16. On SmolLM3, SRP-SFT leads
MATH-500 pass@1 and ties LP-SFT at pass@16, whereas LP-SFT leads GSM8K
pass@1. Gemma shows a different trade-off: the untouched Base leads GSM8K
and MATH-500 pass@1, while LP-SFT leads MATH-500 pass@16.

We also evaluate code generation after domain-specific SFT on rStar-Coder
with 4{,}096-token sequences. Values below are EvalPlus pass@1 percentages.

\begin{table}[!htbp]
\centering
\footnotesize
\setlength{\tabcolsep}{1.7pt}
\renewcommand{\arraystretch}{1.10}
\begin{tabularx}{\columnwidth}{@{}l*{4}{Y}@{}}
\toprule
& \multicolumn{2}{c}{\shortstack{Qwen3.5-2B\\Base}}
& \multicolumn{2}{c}{\shortstack{Qwen3.5-0.8B\\Base}} \\
\cmidrule(lr){2-3}\cmidrule(lr){4-5}
Method & \shortstack{Human\\Eval+} & MBPP+ & \shortstack{Human\\Eval+} & MBPP+ \\
\midrule
Base & 38.17 & 37.51 & 20.67 & 19.64 \\
CE & 30.61 & 23.86 & 16.65 & 15.09 \\
LP-SFT & 38.75 & 34.27 & 21.34 & 18.64 \\
SRP-SFT & 39.76 & 35.21 & 21.89 & 20.17 \\
\bottomrule
\end{tabularx}
\caption{rStar-Coder adaptation on the two base-model backbones
(EvalPlus pass@1 percentages).}
\label{tab:cross_backbone_code_base}
\end{table}

\begin{table}[!htbp]
\centering
\footnotesize
\setlength{\tabcolsep}{3.0pt}
\renewcommand{\arraystretch}{1.08}
\begin{tabularx}{\columnwidth}{@{}l*{3}{Y}@{}}
\toprule
Method & HumanEval+ & MBPP+ & Mean \\
\midrule
\multicolumn{4}{@{}l}{\textit{Granite-4.1-3B Instruct}} \\
\addlinespace[1pt]
Base & 76.25 & 59.33 & 67.79 \\
CE & 65.49 & 51.71 & 58.60 \\
LP-SFT & 73.05 & 55.87 & 64.46 \\
SRP-SFT & 76.22 & 56.73 & 66.48 \\
\midrule
\multicolumn{4}{@{}l}{\textit{Ministral-3-3B Instruct}} \\
\addlinespace[1pt]
Base & 66.68 & 56.52 & 61.60 \\
CE & 54.24 & 47.57 & 50.91 \\
LP-SFT & 68.11 & 55.54 & 61.83 \\
SRP-SFT & 69.73 & 54.93 & 62.33 \\
\midrule
\multicolumn{4}{@{}l}{\textit{Gemma-3-4B it}} \\
\addlinespace[1pt]
Base & 63.02 & 67.78 & 65.40 \\
CE & 44.63 & 35.98 & 40.31 \\
LP-SFT & 58.93 & 61.57 & 60.25 \\
SRP-SFT & 63.02 & 64.70 & 63.86 \\
\midrule
\multicolumn{4}{@{}l}{\textit{SmolLM3-3B Instruct}} \\
\addlinespace[1pt]
Base & 56.95 & 49.87 & 53.41 \\
CE & 47.90 & 44.37 & 46.14 \\
LP-SFT & 57.71 & 48.19 & 52.95 \\
SRP-SFT & 58.29 & 49.66 & 53.98 \\
\bottomrule
\end{tabularx}
\caption{rStar-Coder adaptation on four instruction-tuned backbones
(EvalPlus pass@1 percentages). Mean averages HumanEval+ and MBPP+.}
\label{tab:cross_backbone_code_instruct}
\end{table}

\paragraph{Scope of the code comparison.}
Granite's SRP-SFT HumanEval+ score, 76.22, is close to the untouched Base's
76.25, whereas its MBPP+ score remains lower. Ministral also shows different
orderings across the two tasks: SRP-SFT leads HumanEval+, while the Base
leads MBPP+. Gemma and SmolLM3 also have complete two-task comparisons: SRP-SFT
matches the Gemma Base on HumanEval+ and trails it on MBPP+. On SmolLM3,
SRP-SFT exceeds Base on HumanEval+ but remains slightly below Base on MBPP+.

\begin{table}[!htbp]
\centering
\scriptsize
\setlength{\tabcolsep}{1.5pt}
\begin{tabularx}{\columnwidth}{@{}l*{4}{Y}@{}}
\toprule
Backbone & MATH & GSM8K & HE+ & MBPP+ \\
\midrule
Granite & -0.70 & +0.04 & +3.17 & +0.86 \\
SmolLM3 & +0.28 & -0.12 & +0.58 & +1.47 \\
Ministral & +0.54 & +0.43 & +1.62 & -0.61 \\
Gemma & -2.35 & +0.00 & +4.09 & +3.13 \\
Qwen 2B & +4.80 & +1.67 & +1.01 & +0.94 \\
Qwen 0.8B & +0.25 & -1.83 & +0.55 & +1.53 \\
\bottomrule
\end{tabularx}
\caption{Direct cross-backbone $\Delta_{\mathrm{LP}}=\mathrm{SRP}-\mathrm{LP}$
in pass@1 percentage points, using the same source data as
Figure~\ref{fig:cross_backbone_robustness}. Negative values are retained.}
\label{tab:cross_backbone_delta_lp}
\end{table}

\appsection{Parameter Sensitivity and Ablation}
\label{sec:sensitivity_ablation}
\label{app:sensitivity_ablation}

The parameter study separates three choices in SRP-SFT: the margin-demand
level $\rho$, the loss weight $\mu$, and the number of teacher
alternatives $K$. We use the held-out diagnostic for the structural measures
and report MATH-500 and EvalPlus scores under the same fixed evaluation
protocol. Figure~\ref{fig:parameter_sensitivity_sweeps} makes the response
curves visible across all reported arms. The completed scores peak at
$\rho=2.5$ and $\mu=3$. Among the
seven $K$ arms ($K\in\{1,2,3,5,7,10,20\}$), $K=5$ is highest. The
$K=20$ arm is included in the tabulated sweep with matching training,
MATH-500, EvalPlus, and held-out diagnostic measurements.

The sensitivity and ablation grid was evaluated on the held-out
diagnostic. ``MATH'' is MATH-500 pass@1, ``Code'' is the mean of HumanEval+
and MBPP+ pass@1, and ``Avg.'' is the arithmetic mean of these two domain
scores. The compact sensitivity table retains the main performance and
directional-preservation measures. Redundant per-dataset and auxiliary
diagnostics are omitted for readability. Flip and Violation are fractions.
Contr. is the mean positive logit-margin contraction in logit units.
The sensitivity reference setting is $\rho=0.5$, $\mu=1$, and $K=10$.
The headline SRP-SFT checkpoint uses $\rho=1$, $\mu=1$, and
$K=10$ as disclosed above. In Table~\ref{tab:sensitivity}, the
$\rho=1$ task scores are that headline evaluation: MATH pass@1 matches
the main table, and Code avg.\ is the mean of its HumanEval+ and MBPP+
pass@1. The $\mu=0$ row is likewise the headline CE checkpoint. A zero
auxiliary weight leaves only the target-token cross-entropy, on the same
478-step schedule as the other cells. Every remaining row is the
checkpoint trained for that sensitivity cell. The $\rho=0.5$ reference is
one of those checkpoints and is reused by the loss screen below.

To make the one-factor design explicit, the $\rho$ sweep varies $\rho$ while
holding $K=10$ and $\mu=1$. The $\mu$ sweep varies $\mu$ while holding
$K=10$ and $\rho=0.5$. And the $K$ sweep varies $K$ while holding
$\rho=0.5$ and $\mu=1$. Training settings other than the varied
coefficient stay fixed. $K=1$ is one such cell: its candidate set contains
one non-target token, so it forms no ordered pair and the auxiliary term
is zero. Held-out flip and contraction are $0.1541$ and $0.6007$, next to
$0.1540$ and $0.5995$ for headline CE. MATH pass@1 is $28.58$, next to
$27.98$. With deterministic attention and a locked RNG state, CE and
$K=1$ agree exactly over eight optimizer steps
(Appendix~\ref{app:algorithm}).

\begin{table}[!htbp]
\centering
\scriptsize
\setlength{\tabcolsep}{1.5pt}
\renewcommand{\arraystretch}{1.08}
\begin{tabularx}{\columnwidth}{@{}ll*{5}{Y}@{}}
\toprule
Grid & Value & \shortstack{MATH\\p@1 $\uparrow$}
& \shortstack{Code\\avg. $\uparrow$} & Avg. $\uparrow$
& Flip $\downarrow$ & Contr. $\downarrow$ \\
\midrule
$\rho$ & 0.00 & 41.02 & 53.08 & 47.05 & .0828 & .4123 \\
       & 0.25 & 47.80 & 53.57 & 50.69 & .0740 & .3549 \\
       & 0.50 & 50.12 & 54.65 & 52.39 & .0666 & .2663 \\
       & 0.75 & 52.78 & 58.56 & 55.67 & .0596 & .1563 \\
       & 1.00 & 54.56 & 59.54 & 57.05 & .0565 & .1098 \\
       & 1.25 & 56.83 & 61.51 & 59.17 & .0583 & .0950 \\
       & 1.50 & 58.56 & 63.21 & 60.89 & .0627 & .0982 \\
       & 1.75 & 57.70 & 62.89 & 60.29 & .0700 & .1130 \\
       & 2.00 & 58.75 & 62.77 & 60.76 & .0774 & .1325 \\
       & 2.25 & 59.04 & 63.45 & 61.24 & .0865 & .1618 \\
       & 2.50 & 60.45 & 62.68 & 61.56 & .0937 & .1891 \\
       & 2.75 & 58.61 & 61.66 & 60.14 & .1009 & .2218 \\
       & 3.00 & 58.05 & 61.44 & 59.75 & .1070 & .2540 \\
\midrule
$\mu$ & 0.00 & 27.98 & 44.42 & 36.20 & .1540 & .5995 \\
       & 0.10 & 34.15 & 48.13 & 41.14 & .1160 & .4061 \\
       & 0.30 & 38.88 & 52.46 & 45.67 & .0971 & .3513 \\
       & 1.00 & 50.12 & 54.65 & 52.39 & .0666 & .2663 \\
       & 3.00 & 52.83 & 57.29 & 55.06 & .0453 & .2131 \\
       & 5.00 & 49.40 & 56.54 & 52.97 & .0397 & .1998 \\
       & 10.00 & 39.92 & 57.17 & 48.55 & .0356 & .1908 \\
\midrule
$K$ & 1  & 28.58 & 45.62 & 37.10 & .1541 & .6007 \\
     & 2  & 33.88 & 47.54 & 40.71 & .1359 & .4843 \\
     & 3  & 49.95 & 55.53 & 52.74 & .0829 & .3196 \\
     & 5  & 50.66 & 55.79 & 53.23 & .0718 & .2833 \\
     & 7  & 48.23 & 55.81 & 52.02 & .0683 & .2720 \\
     & 10 & 50.12 & 54.65 & 52.39 & .0666 & .2663 \\
     & 20 & 49.48 & 55.01 & 52.24 & .0668 & .2678 \\
\bottomrule
\end{tabularx}
\caption{One-factor sensitivity of $\rho$, $\mu$, and $K$. MATH is MATH-500
pass@1. Code avg. Averages HumanEval+ and MBPP+ pass@1. Avg. Averages the two
domains. Flip and Contr. are lower-is-better diagnostics.}
\label{tab:sensitivity}
\end{table}

\paragraph{Reading the sweeps.}
Each block in Table~\ref{tab:sensitivity} varies one control around the
stated sensitivity reference setting. Increasing $\mu$ past the highest
observed overall setting continues to reduce Flip.

Figure~\ref{fig:parameter_sensitivity_frontier} summarizes the
performance--geometry trade-off. The $\rho=2.5$ point has the highest
observed overall score (61.56), while $\rho=1.5$ has score 60.89 and lower
contraction (0.0982). Relative to $\rho=1$, $\rho=2.5$ improves Avg.
From 57.05 to 61.56 while increasing Flip from 0.0565 to 0.0937 and
contraction from 0.1098 to 0.1891. Neither measured preservation nor performance is
monotone in $\rho$. The $K$ sweep covers seven
values, including $K=20$.

\begin{figure}[!htbp]
\centering
\includegraphics[width=\columnwidth]{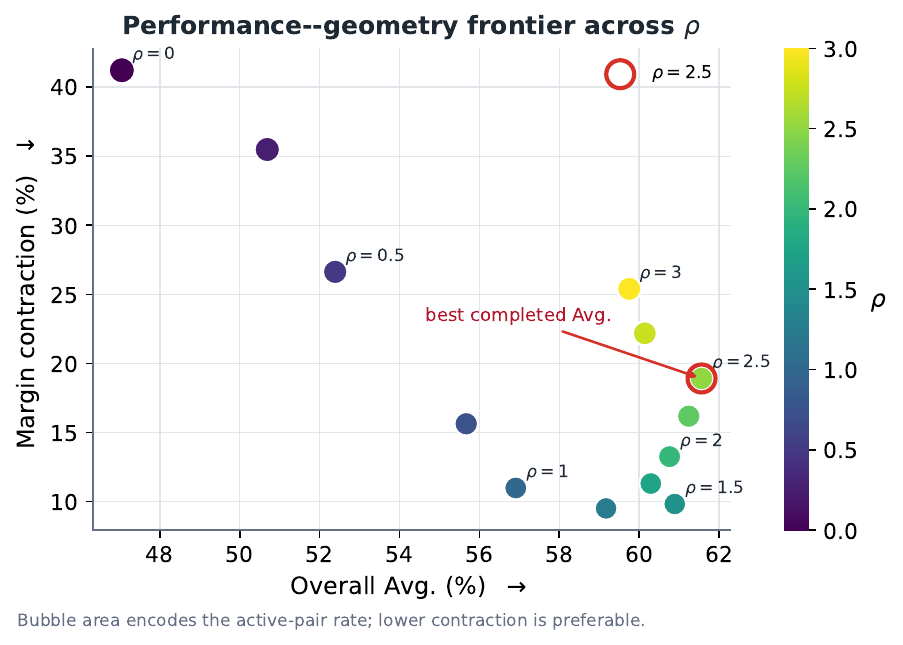}
\caption{Performance--geometry frontier over the reported $\rho$ sweep.
Bubble area is the active-pair rate. Lower contraction is preferable.}
\label{fig:parameter_sensitivity_frontier}
\end{figure}

\begin{table}[!htbp]
\centering
\scriptsize
\setlength{\tabcolsep}{1.5pt}
\renewcommand{\arraystretch}{1.10}
\begin{tabularx}{\columnwidth}{@{}>{\raggedright\arraybackslash}p{0.28\columnwidth}*{5}{Y}@{}}
\toprule
Variant & \shortstack{UF\\NLL} & \shortstack{Math/\\Code} & Flip & Contr. & KL \\
\midrule
CE & .917 & 38.94 & .1540 & .5995 & .3592 \\
$\rho=0$ & .919 & 49.06 & .0828 & .4123 & .0797 \\
Include-GT & .922 & 52.58 & .0705 & .2750 & .0564 \\
$\rho=.5$ (sensitivity ref.) & .922 & 53.14 & .0666 & .2663 & .0523 \\
Adjacent-rank pairs & .927 & 55.26 & .0573 & .3238 & .0470 \\
Top1-vs-rest & .919 & 50.72 & .0946 & .2961 & .0873 \\
\bottomrule
\end{tabularx}
\caption{Objective and sparse-pair ablations around the
$\rho=0.5$ sensitivity reference. The Random-$K$ arm is omitted.}
\label{tab:ablation_followup}
\end{table}

\paragraph{Pair selection.}
Adjacent-rank pairs in Table~\ref{tab:ablation_followup} give a different
trade-off, with higher Math/Code than the $\rho=0.5$ sensitivity reference
and higher contraction. The comparison reports task scores together
with Flip, contraction, and KL.

\appsubsection{Headline Pair-Graph Controls}
\label{app:rho1_efficiency}

\begin{table}[!htbp]
\centering
\scriptsize
\setlength{\tabcolsep}{1.3pt}
\begin{tabularx}{\columnwidth}{@{}l*{6}{Y}@{}}
\toprule
Variant & MATH p@1 & MATH p@4 & Code mean & Flip & Contr. & KL \\
\midrule
Full pairs & 54.56 & 80.50 & 69.50 & 0.0565 & 0.1098 & 0.0748 \\
Adjacent & 57.65 & 82.02 & 69.04 & 0.0520 & 0.1346 & 0.0431 \\
\bottomrule
\end{tabularx}
\caption{SRP pair-graph controls at $\rho=1$, $K=10$, $\mu=1$
(training seed 1234).
Full pairs is the headline checkpoint, and its task scores are the
main-table evaluation. Adjacent pairs is a separately trained
checkpoint. Task scores are percentages. Code mean averages the four
matched HumanEval+/MBPP+ pass@1/pass@4 metrics. Diagnostics use the
998-example held-out cache. Both checkpoints have 478 training steps.
Flip is a fraction. Contr. is in logit units.}
\label{tab:rho1_pair_controls}
\end{table}

The pair-graph comparison in Table~\ref{tab:rho1_pair_controls}
uses the headline full-pair checkpoint and a separately trained
adjacent-rank checkpoint. Adjacent pairs have higher MATH scores and a
slightly lower code mean, with lower Flip and KL but higher contraction.
The two runs share the training implementation, environment, and common
hyperparameters. The pair set is the changed factor.

\begin{table}[!htbp]
\centering
\scriptsize
\begin{tabularx}{\columnwidth}{@{}lYYYY@{}}
\toprule
Method & Pairs & Tokens/s & Wall (s) & Peak (GiB) \\
\midrule
CE & -- & 6748 & 175.31 & 19.91 \\
LP-SFT & -- & 6743 & 180.81 & 19.91 \\
SRP full & 45 & 6625 & 176.41 & 19.91 \\
SRP adjacent & 9 & 6514 & 181.48 & 19.91 \\
\bottomrule
\end{tabularx}
\caption{Measured 20-step efficiency pilots with the same eight-GPU
allocation and training configuration. Throughput excludes the first five
warmup steps and checkpoint I/O. Wall time covers the training run.
Peak is the maximum allocated memory over ranks. Pairs counts auxiliary
rank constraints per position. CE and LP-SFT have no auxiliary pairs.}
\label{tab:pair_efficiency}
\end{table}

Full and adjacent graphs use 45 and 9 pairs at $K=10$, respectively.
In this pilot the adjacent graph records 6{,}514 tokens/s and 19.91 GiB,
against 6{,}625 tokens/s and 19.91 GiB for the full graph.
The adjacent checkpoint's downstream scores are in
Table~\ref{tab:rho1_pair_controls}. Both graphs reuse the same top-$K$ cache,
whose 56 NumPy arrays occupy 1.143 GiB.
CE reads the supervised cache fields. LP-SFT and SRP-SFT additionally read
the top-$K$ fields. A new timed reconstruction of the shared full cache
takes 486.63 seconds on eight GPUs, including model loading, tokenization,
teacher computation, and writing. Of 61,135 requested rows, 61,095 are
valid and 40 empty rows are skipped. The 56 arrays occupy 1.143 GiB.
This measurement is the reconstruction of the shared cache used by
LP-SFT and both SRP pair graphs.

\appsubsection{LP-SFT/ListNet Equivalence and Training Audit}
\label{app:lp_listnet_audit}
For frozen teacher probabilities $q$ and student probabilities $p$ on
the same candidate set,
\begin{equation}
\mathcal L_{\mathrm{LP}}=\mathcal L_{\mathrm{ListNet}}-H(q).
\end{equation}
With matching temperature, candidate mask, reduction, and coefficient,
$\nabla_\theta\mathcal L_{\mathrm{LP}}=
\nabla_\theta\mathcal L_{\mathrm{ListNet}}$. Both objectives include the
same target CE outside the alternatives.

We checked the current implementation on identical logits through the full
CE-plus-auxiliary code path, with coefficient one and temperature one.
Eight cases cover FP32/BF16 logits, valid/padded/empty positions, and target
exclusion. The student gradients are bitwise equal in these CPU checks.
The loss difference equals the reduced teacher entropy to floating-point
precision.

\paragraph{Controlled training audit.}
We also compare LP-SFT and ListNet over eight optimizer updates of
Qwen3-4B-Base using BF16 DeepSpeed ZeRO-2 on eight ranks. Both runs use
seed 1234, the same initialization, cache, data order, optimizer and
scheduler, with $K=10$, temperature one, and $\mu=1$. The microbatch size
is one per rank, with 16 accumulation steps and effective batch size 128.
We enable deterministic algorithms and SDPA math attention, and disable
communication overlap. Table~\ref{tab:lp_listnet_training_identity}
summarizes the paired traces: all 1,024 microbatches have matching input,
RNG, candidate-mask, and selected-logit-gradient hashes. CE losses and
optimizer-step gradient norms agree exactly. Parameter hashes recorded
on rank 0 match at initialization and after every update. The maximum
residual in $\mathcal L_{\mathrm{ListNet}}-\mathcal L_{\mathrm{LP}}-H(q)$
is $7.15\times10^{-7}$. These results verify the predicted equivalence
through eight updates of the controlled training execution.

\begin{table}[!htbp]
\centering
\scriptsize
\setlength{\tabcolsep}{2.0pt}
\begin{tabularx}{\columnwidth}{@{}Xrr@{}}
\toprule
Paired check & Pairs & Result \\
\midrule
Input and RNG hashes & 1,024 & All match \\
Candidate-mask hashes & 1,024 & All match \\
Selected-logit-gradient hashes & 1,024 & All match \\
Maximum CE loss difference & 1,024 & $0$ \\
Maximum loss--entropy residual & 1,024 & $7.15\!\times\!10^{-7}$ \\
Maximum update gradient-norm difference & 64 & $0$ \\
Parameter hashes (rank 0) & 9 & All match \\
\bottomrule
\end{tabularx}
\caption{LP-SFT/ListNet training identity audit. Microbatch checks cover
$8$ ranks $\times$ $8$ updates $\times$ $16$ accumulation steps. Gradient
norms cover all $64$ rank--update pairs. The nine parameter snapshots
comprise initialization and eight post-update states on rank 0.
The loss residual subtracts the reduced teacher entropy from the
ListNet-minus-LP total objective. Its absolute tolerance is $10^{-5}$.}
\label{tab:lp_listnet_training_identity}
\end{table}

The reported LP and ListNet runs use a 478-step schedule, with micro-batch
one and accumulation 16. They use the same training implementation,
environment, data-loader configuration, and common training hyperparameters.
ListNet is a separately trained instance of the LP-equivalent objective.
The matched training audit above directly verifies implementation identity.

\appsubsection{Ranking-Loss Form Ablation}
\label{sec:loss_ablation}

The parameter grid above varies SRP hyperparameters. We additionally compare
the loss form at fixed coefficients and a shared Qwen3-4B-Base
initialization. All rows share the same training implementation,
environment, and common training hyperparameters.
Candidate-set details are described in the full screen below. Table~\ref{tab:loss_ablation}
shows representative alternatives. The complete fixed-configuration screen is given
in Appendix~\ref{app:loss_ablation_full}.
Table~\ref{tab:ranking_matched_available} gives the four-metric code mean
for ListMLE, Margin-MSE, and SRP-SFT at $\mu=1$.

\begin{table}[!htbp]
\centering
\scriptsize
\setlength{\tabcolsep}{1.1pt}
\begin{tabularx}{\columnwidth}{@{}l*{7}{Y}@{}}
\toprule
Method & \multicolumn{2}{c}{MATH} & \multicolumn{2}{c}{HE+} & \multicolumn{2}{c}{MBPP+} & Code mean \\
\cmidrule(lr){2-3}\cmidrule(lr){4-5}\cmidrule(lr){6-7}
& p@1 & p@4 & p@1 & p@4 & p@1 & p@4 & \\
\midrule
ListMLE & 52.21 & 79.58 & 62.35 & 83.85 & 55.46 & 73.74 & 68.85 \\
Margin-MSE & 49.19 & 79.77 & 61.52 & 84.66 & 54.85 & 74.64 & 68.92 \\
SRP-SFT & 54.56 & 80.50 & 62.84 & 83.87 & 56.24 & 75.06 & 69.50 \\
\bottomrule
\end{tabularx}
\caption{One-epoch ranking controls at $\mu=1$
(seed 1234, percentages).}
\label{tab:ranking_matched_available}
\end{table}

\begin{table}[!htbp]
\centering
\scriptsize
\setlength{\tabcolsep}{1.0pt}
\renewcommand{\arraystretch}{1.10}
\begin{tabularx}{\columnwidth}{@{}>{\raggedright\arraybackslash}p{0.25\columnwidth}*{6}{Y}@{}}
\toprule
Loss & MATH & HE+ & MBPP+ & Mean & \shortstack{Flip\\$\downarrow$} & \shortstack{Contr.\\$\downarrow$} \\
\midrule
CE & 27.98 & 44.18 & 44.66 & 38.94 & 0.1540 & 0.5995 \\
LP-SFT & 49.56 & 57.90 & 51.87 & 53.11 & 0.0741 & 0.2681 \\
RankOnly ($\rho=0$) & 41.89 & 55.37 & 49.96 & 49.07 & .0828 & .4124 \\
RankNet & 46.41 & 59.09 & 52.63 & 52.71 & .0853 & .1872 \\
ListMLE & 52.15 & 61.71 & 56.18 & 56.68 & .0691 & .1271 \\
Margin-MSE & 48.68 & 61.07 & 54.72 & 54.82 & .0380 & .1522 \\
SRP-full ($\rho=1$) & 54.56 & 62.84 & 56.24 & 57.88 & .0565 & .1098 \\
SRP-SFT ($\rho=.5$) & 50.12 & 57.23 & 52.08 & 53.14 & .0666 & .2663 \\
\bottomrule
\end{tabularx}
\caption{Representative ranking-loss screen. MATH, HE+, and MBPP+
are pass@1. Mean averages them. Flip is a fraction and Contr. is in logit
units. SRP-full ($\rho=1$) is the headline checkpoint and matches
Table~\ref{tab:code_benchmark}. SRP-SFT ($\rho=.5$) is the
sensitivity-reference checkpoint in Table~\ref{tab:sensitivity}. Its
HumanEval+ and MBPP+ pass@1 average to that table's code score of
54.65 before rounding.}
\label{tab:loss_ablation}
\end{table}

RankOnly lowers inversion relative to CE but remains below the margin-aware
SRP variants in this screen. SRP-full has the strongest three-task mean,
whereas Margin-MSE has fewer inversions (0.0380 versus 0.0565).
The RLKD-style top-$K$ approximation
is specified in Appendix~\ref{app:loss_ablation_full}.

\appsection{Full Ranking-Loss Ablation}
\label{app:loss_ablation_full}

Table~\ref{tab:ranking_loss_performance} in
Section~\ref{sec:ablation} reports the full downstream performance panel for
the headline $\rho=1$ method and the other loss-form controls. This appendix
retains the companion relation and directional diagnostics. The
RLKD-style row is a controlled top-$K$ variant of the cited method. The fixed-candidate rows reuse the teacher top-$K$ non-target cache,
target exclusion, CE reduction, and valid-token normalization. BiLD T+S
and RLKD-style union additionally change the candidate set and use a
teacher-logit floor for student-only candidates. All methods use the same
training implementation, environment, and common training hyperparameters.
Every auxiliary row uses $\mu=1$.
SRP-SFT uses the fixed headline configuration.

The following definitions specify the objectives used in
Tables~\ref{tab:ranking_loss_performance} and~\ref{tab:loss_ablation}.
For each supervised position $t$, let $\mathcal C_t$ be the cached teacher
top-$K$ non-target alternatives ($K=10$), let
$\Delta^M_{ij,t}=z^M_{t,i}-z^M_{t,j}$ for $M\in\{T,S\}$, and let
$\mathcal P_t=\{(i,j)\in\mathcal C_t^2:\Delta^T_{ij,t}>0\}$. Thus the
pairwise rows use every teacher-preferred ordered pair
and never add the reverse orientation. Invalid or padded positions are
removed before reduction. Every auxiliary is averaged over the valid
response positions and added to the ordinary target CE with $\mu=1$. The
screen uses the shared reference cache, a checkpoint after one processed
epoch, seed $1234$, effective batch size $128$, peak AdamW learning rate
$2\times10^{-5}$, and 2,048-token context. All rows
use a 478-step cosine horizon with 15 warmup steps.
The CE row is the same target CE with no auxiliary term. These rows illustrate
three kinds of supervision. LP-SFT and ListNet share the same one. Pointwise supervision gives an individual
token a target label or score, as CE does for the observed response token.
Pairwise supervision compares two candidates and uses their relation, as in
RankOnly, RankNet, Margin-MSE, and SRP-SFT. Listwise supervision uses the full
candidate set at once, as in ListNet and ListMLE. Pointwise supervision is
the CE term already present in every row.

The distinction between the two listwise controls is also specific. ListNet
matches a teacher probability distribution over the candidate set, whereas
ListMLE uses only the teacher's sorted permutation and a Plackett--Luce
likelihood. At matched temperature, candidate mask, reduction, and weight, LP-SFT's
forward KL differs from ListNet only by the frozen teacher entropy. They
therefore have the same student gradient and distribution-matching geometry.

\paragraph{Pairwise objectives.}
RankOnly is the $\rho=0$ special case of the SRP implementation,
$|\mathcal P_t|^{-1}\sum_{(i,j)\in\mathcal P_t}[-\Delta^S_{ij,t}]_+$. It
enforces order but has no teacher-margin target. SRP-full (the $\rho=1$ row,
also called SRP-SFT in the main performance panel) uses
$|\mathcal P_t|^{-1}\sum_{(i,j)\in\mathcal P_t}[\Delta^T_{ij,t}-\Delta^S_{ij,t}]_+$, while the
appendix SRP-SFT ($\rho=.5$) row replaces the target by
$.5\Delta^T_{ij,t}$. Margin-MSE uses the symmetric exact-gap objective
$|\mathcal P_t|^{-1}\sum_{(i,j)\in\mathcal P_t}(\Delta^S_{ij,t}-\Delta^T_{ij,t})^2$.
RankNet \citep{burges2005ranknet} uses $\operatorname{softplus}(-\Delta^S_{ij,t}/\tau)$ on the same
pairs with $\tau=1$. Smooth-SRP uses the raw (unshifted) smooth hinge
$\tau\operatorname{softplus}((.5\Delta^T_{ij,t}-\Delta^S_{ij,t})/\tau)$ with
$\tau=.5$. Consequently it remains nonzero on the feasible side, unlike the
hard SRP hinge. Kendall-$\tau$ is a controlled soft-concordance proxy, not
the cited paper's official implementation: it averages
$1-\tanh(\Delta^S_{ij,t}/\tau)\tanh(\Delta^T_{ij,t}/\tau)$ with $\tau=1$.

\paragraph{Listwise and correlation objectives.}
ListNet \citep{cao2007listnet} forms $q_i=\operatorname{softmax}(z^T_i)$ and
$p_i=\operatorname{softmax}(z^S_i)$ only over $\mathcal C_t$, and uses
$-\sum_{i\in\mathcal C_t}q_i\log p_i$ (temperature $1$). ListMLE \citep{xia2008listmle} sorts
$\mathcal C_t$ by descending teacher logit, $\pi$, and uses the normalized
Plackett--Luce negative log-likelihood
$|\mathcal C_t|^{-1}\sum_r[-z^S_{\pi_r}+
\log\sum_{u\ge r}\exp z^S_{\pi_u}]$. It uses the teacher permutation.
Spearman first assigns each candidate the
pairwise-sigmoid soft rank
$1+\sum_{j\ne i}\sigma((z_j-z_i)/\tau)$, then minimizes one minus the
Pearson correlation of teacher and student soft ranks on $\mathcal C_t$
($\tau=1$). It adapts the rank-correlation statistic \citep{spearman1904}.

\paragraph{Cited-method approximations.}
BiLD teacher-led applies
$\operatorname{KL}(\operatorname{softmax}(\Delta^T/T)\,\|\,\operatorname{softmax}(\Delta^S/T))$
to all pairs in $\mathcal P_t$ with $T=3$ and the shared teacher top-$K$ set.
It is therefore a teacher-led controlled approximation. BiLD T+S uses
$k=8$ and adds a second branch over the student's target-excluded top-$8$.
For a student-only token, the cached teacher logit is set to
$(\min z^T|_{\mathcal C_t})-1$, exactly the floor used in the code. RLKD-style
union uses differentiable Spearman on the deduplicated union of the teacher
top-$5$ and student's target-excluded top-$5$ candidates.
Student-only teacher logits use the same floor and the rank temperature is
$1$. LP-SFT is the separate local forward KL
$\operatorname{KL}(q\|p)$ on the same target-excluded candidate set, with the
target CE retained outside that set.

\begin{table}[!htbp]
\centering
\scriptsize
\setlength{\tabcolsep}{1.8pt}
\renewcommand{\arraystretch}{1.06}
\begin{tabularx}{\columnwidth}{@{}l>{\raggedright\arraybackslash}Xrr@{}}
\toprule
\multicolumn{4}{@{}l}{\textbf{Relation and preservation diagnostics}} \\
\addlinespace[2pt]
Loss & Relation & Flip & Contr. \\
\midrule
CE & None & 0.1540 & 0.5995 \\
LP-SFT & Distribution & 0.0741 & 0.2681 \\
RankOnly ($\rho=0$) & Pairwise order & .0828 & .4124 \\
RankNet & Pairwise order & .0853 & .1872 \\
ListMLE & Listwise order & .0691 & .1271 \\
Spearman & Rank correlation & .0757 & .2844 \\
ListNet$^{\dagger}$ & Distribution (LP-equivalent) & .0739 & .2675 \\
Margin-MSE & Pairwise gap & .0380 & .1522 \\
BiLD teacher-led & Diff-KL & .0539 & .2549 \\
BiLD T+S & Diff-KL & .0855 & .3415 \\
RLKD-style union & Rank correlation & .0682 & .2509 \\
Kendall-$\tau$ & Pair concordance & .0924 & .2352 \\
Smooth-SRP & Pairwise gap bound & .0732 & .1602 \\
SRP-SFT ($\rho=1$) & Pairwise gap bound & .0565 & .1098 \\
SRP-SFT ($\rho=.5$) & Pairwise gap bound & .0666 & .2666 \\
\bottomrule
\end{tabularx}
\caption{Relation and directional diagnostics for the ranking-loss screen.
Flip is a fraction and Contr. is in logit units. Both are lower-is-better.
$^{\dagger}$ListNet is LP-equivalent.
Performance is in Table~\ref{tab:ranking_loss_performance}.}
\label{tab:ranking_loss_diagnostics}
\end{table}

\paragraph{Relating performance and geometry.}
Tables~\ref{tab:ranking_loss_performance} and~\ref{tab:ranking_loss_diagnostics}
separate task performance from the preserved relation. Margin-MSE has the
lowest displayed Flip rate, whereas SRP-SFT with $\rho=1$ has the highest
three-task mean. These properties therefore select different rows. The
supplementary $\rho=0.5$ arm and its performance remain in the compact
appendix screen (Table~\ref{tab:loss_ablation}).

RankNet continually rewards larger positive pairwise margins. SRP's
auxiliary becomes inactive when the requested bound is met. The relation
labels specify the controlled implementation compared, including the
approximate BiLD and RLKD-style variants.

\appsection{Entropy-Matched Temperature Control}
\label{app:temperature}

This appendix records the diagnostic behind
Section~\ref{sec:temperature}. The downstream screen itself is
Table~\ref{tab:tempmatch}.

\begin{table}[!htbp]
\centering
\scriptsize
\setlength{\tabcolsep}{1.6pt}
\renewcommand{\arraystretch}{1.10}
\begin{tabularx}{\columnwidth}{@{}l*{6}{Y}@{}}
\toprule
Method & $H$ & $\alpha^{*}$ & $1/\alpha^{*}$ & Contr. & Flip & Viol. \\
\midrule
CE & 0.906 & 1.081 & 0.925 & 0.600 & 0.154 & 0.501 \\
LP-SFT & 0.938 & 1.099 & 0.910 & 0.268 & 0.074 & 0.513 \\
SRP-SFT & 0.802 & 1 & 1 & 0.110 & 0.056 & 0.229 \\
\bottomrule
\end{tabularx}
\caption{Mechanism diagnostics on the held-out cache
(280{,}908 response positions).
$H$ is mean full-vocabulary entropy.
Contr./Flip are contraction/inversion, and Viol.
is the fraction below the $\rho=1$ margin. CE and LP-SFT $\alpha^{*}$ match the SRP-SFT entropy.}
\label{tab:sharpness}
\end{table}

\appsubsection{Sharpness and the Fitted Scale}
\label{app:temperature_fit}

On 280{,}908 held-out response positions, the SRP-SFT
mean full-vocabulary
entropy is $H_{\mathrm{SRP}}=0.802$
(Table~\ref{tab:sharpness}). For each frozen baseline we solve
\begin{equation}
\alpha^{*}
=
\arg\min_{\alpha\in[0.5,\,2]}
\bigl|
H(\mathrm{softmax}(\alpha z))
-
H_{\mathrm{SRP}}
\bigr|.
\end{equation}
The LP solution is interior, $\alpha^{*}=1.0993$. CE has an interior solution
$\alpha^{*}=1.081$ against the same SRP-SFT entropy. Equivalent model temperature
is $1/\alpha^{*}$. Logits are multiplied by $\alpha^{*}$ and then sampled at
temperature $0.6$ on MATH-500 and $0.8$ on HumanEval+ and MBPP+.
The fit uses the held-out SRP-SFT entropy as its only target.

\appsubsection{Margin Geometry}
\label{app:temperature_geometry}

The following two tables report per-pair margin ratios and
violation-conditioned bins. The margin-geometry panel
(Table~\ref{tab:geometry}) uses $\alpha^{*}=1.116$, not the
$1.0993$ scale used in the downstream control
(Table~\ref{tab:tempmatch}). The geometry panel uses $\alpha^{*}=1.116$.
The downstream control uses $1.0993$. On teacher-preferred pairs with reference
margin above $0.1$ (9.90 million pairs), multiplying LP-SFT logits by
$\alpha^{*}=1.116$ multiplies the mean margin ratio by the same factor,
$0.934\rightarrow 1.043$, and leaves the coefficient of variation and the
through-origin $R^{2}$ unchanged at 1.952 and 0.833. SRP-SFT has mean
ratio 1.244, coefficient of variation 1.346, $R^{2}=0.898$, and a
below-margin rate of 0.229, compared with 0.447 for LP-TempMatch and
0.539 for unscaled LP-SFT. Global scaling moves every retained margin
proportionally. SRP-SFT changes which margins fall short of the reference.

\begin{table}[!htbp]
\centering
\footnotesize
\setlength{\tabcolsep}{1.8pt}
\renewcommand{\arraystretch}{1.10}
\begin{tabularx}{\columnwidth}{@{}l*{4}{Y}@{}}
\toprule
Method & \shortstack{Mean\\ratio} & CV & $R^{2}$ & \shortstack{Below\\$\rho$} \\
\midrule
CE & 0.942 & 5.829 & 0.360 & 0.514 \\
LP-SFT & 0.934 & 1.952 & 0.833 & 0.539 \\
LP-TempMatch & 1.043 & 1.952 & 0.833 & 0.447 \\
SRP-SFT & 1.244 & 1.346 & 0.898 & 0.229 \\
\bottomrule
\end{tabularx}
\caption{Teacher-preferred margin geometry for $\Delta^{T}>0.1$.
Mean ratio is $\Delta^{S}/\Delta^{T}$.
CV is its coefficient of variation. $R^{2}$ is the through-origin fit.
LP-TempMatch uses $\alpha^{*}=1.116$.}
\label{tab:geometry}
\end{table}

\appsubsection{Violation-Conditioned Diagnostics}
\label{app:temperature_violation}

The entropy differences also vary across positions
(Table~\ref{tab:violation}), grouped by the CE checkpoint's violation rate.
In the highest bin, SRP-SFT has full-vocabulary entropy 0.165 below CE,
while LP-SFT is 0.016 below CE. SRP-SFT's contraction and flip reductions
are larger in higher CE-violation bins.

\begin{table}[!htbp]
\centering
\scriptsize
\setlength{\tabcolsep}{1.2pt}
\renewcommand{\arraystretch}{1.10}
\begin{adjustbox}{max width=\columnwidth}
\begin{tabular}{@{}lrrrrrrr@{}}
\toprule
CE viol.
& $n$
& \multicolumn{3}{c}{LP $-$ CE}
& \multicolumn{3}{c}{SRP $-$ CE} \\
\cmidrule(lr){3-5}\cmidrule(lr){6-8}
& & $\Delta$C & $\Delta$F & $\Delta H$
& $\Delta$C & $\Delta$F & $\Delta H$ \\
\midrule
$(0, .25]$ & 10{,}898 & $-.019$ & $-.011$ & $+.114$
& $-.117$ & $-.031$ & $-.044$ \\
$(.25, .50]$ & 123{,}110 & $-.246$ & $-.061$ & $+.066$
& $-.425$ & $-.095$ & $-.096$ \\
$(.50, .75]$ & 133{,}563 & $-.501$ & $-.107$ & $+.018$
& $-.780$ & $-.152$ & $-.143$ \\
$(.75, 1]$ & 13{,}315 & $-.705$ & $-.140$ & $-.016$
& $-1.098$ & $-.188$ & $-.165$ \\
\bottomrule
\end{tabular}
\end{adjustbox}
\caption{Diagnostic differences conditioned on the CE checkpoint's
violation rate. $\Delta$C and $\Delta$F are contraction and flip relative
to that CE checkpoint. The 22 zero-violation positions are omitted.}
\label{tab:violation}
\end{table}

\appsection{Convergence Data}
\label{app:convergence_data}

The following tables provide the numerical values plotted in
Figure~\ref{fig:convergence_means}. MATH mean averages MATH-500 pass@1 and
pass@4. Code mean averages HumanEval+ and MBPP+ pass@1 and pass@4. Values are
percentages, and \texttt{--} denotes a checkpoint whose downstream evaluation was not
available in the convergence report.
The step-0 Base row is this run's evaluation of the untouched checkpoint.
On MATH-500, this row and Table~\ref{tab:main_benchmark} share the
evaluation manifest and the sampling protocol: revision
\texttt{6e4ed1a2a79a}, all 500 test problems, the same prompt,
temperature 0.6, top-$p$ 1, 16 samples, and a 4{,}096-token cap.
Later checkpoints in this table use that same manifest and protocol.
The generations are a separate draw, so pass@1/pass@4 are 48.43/72.06
here (mean 60.24) and 48.11/71.23 in the main table (mean 59.67).
The trajectory is anchored to this draw.
The code entry uses temperature 0.8, top-$p$ 1, 20 samples, 768 new tokens,
and a 2{,}048-token context (mean 59.17). Table~\ref{tab:code_benchmark}
uses 4{,}096/8{,}192 and averages to 58.92.

\begin{table}[!htbp]
\centering
\footnotesize
\setlength{\tabcolsep}{2.2pt}
\renewcommand{\arraystretch}{1.02}
\begin{tabularx}{\columnwidth}{@{}l*{6}{Y}@{}}
\toprule
Epoch & Base & CE & LP-SFT & DFT & ASFT & SRP-SFT \\
\midrule
0.0 & 60.24 & -- & -- & -- & -- & -- \\
0.5 & -- & 43.22 & 61.70 & 50.30 & 52.27 & 64.21 \\
1.0 & -- & 40.00 & 59.61 & 24.36 & 37.18 & 60.32 \\
1.5 & -- & 46.72 & 60.16 & 28.13 & 57.97 & 58.42 \\
2.0 & -- & 42.50 & 53.04 & 31.68 & 46.70 & 56.54 \\
2.5 & -- & 48.93 & 57.10 & 33.27 & 46.85 & 59.30 \\
3.0 & -- & 50.65 & 57.19 & 30.33 & 34.16 & 60.24 \\
3.5 & -- & 51.00 & 59.85 & 34.64 & 45.38 & 61.25 \\
4.0 & -- & 50.69 & 58.56 & 35.60 & 48.05 & 61.58 \\
4.5 & -- & 50.86 & 60.36 & 38.66 & 50.11 & 61.00 \\
5.0 & -- & 51.42 & 60.36 & 34.68 & 50.31 & 61.49 \\
5.5 & -- & 51.62 & 60.51 & 39.40 & 49.19 & 62.84 \\
6.0 & -- & 51.72 & 60.05 & 39.33 & 52.29 & 63.04 \\
6.5 & -- & 51.14 & 59.85 & 33.13 & 54.82 & 62.24 \\
7.0 & -- & 51.74 & 59.04 & 41.45 & 53.06 & 63.38 \\
7.5 & -- & 50.24 & 59.09 & 39.32 & 51.55 & 62.97 \\
8.0 & -- & 50.75 & 60.25 & 40.23 & 53.10 & 62.56 \\
8.5 & -- & 50.09 & 60.91 & -- & 53.10 & 61.98 \\
9.0 & -- & 50.51 & 61.11 & -- & 53.38 & 62.59 \\
9.5 & -- & 49.06 & 60.13 & -- & 53.68 & 62.43 \\
10.0 & -- & 48.87 & -- & -- & 54.02 & 62.58 \\
\bottomrule
\end{tabularx}
\caption{MATH mean convergence values underlying Figure~\ref{fig:convergence_means}.
Step 0 is this run's Base draw under the same MATH-500 manifest and
sampling protocol as Table~\ref{tab:main_benchmark}.}
\label{tab:convergence_math_mean}
\end{table}

\begin{table}[!htbp]
\centering
\footnotesize
\setlength{\tabcolsep}{2.2pt}
\renewcommand{\arraystretch}{1.02}
\begin{tabularx}{\columnwidth}{@{}l*{6}{Y}@{}}
\toprule
Epoch & Base & CE & LP-SFT & DFT & ASFT & SRP-SFT \\
\midrule
0.0 & 59.17 & -- & -- & -- & -- & -- \\
0.5 & -- & 55.66 & 66.12 & 61.03 & 66.34 & 66.31 \\
1.0 & -- & 52.87 & 64.32 & 60.12 & 66.66 & 65.25 \\
1.5 & -- & 58.47 & 65.75 & 61.47 & 65.56 & 65.46 \\
2.0 & -- & 58.01 & 65.73 & 60.98 & 65.66 & 66.47 \\
2.5 & -- & 61.91 & 67.44 & 60.83 & 67.99 & 68.13 \\
3.0 & -- & 60.57 & 67.85 & 59.07 & 66.36 & 68.62 \\
3.5 & -- & 60.87 & 68.55 & 58.47 & 65.83 & 68.95 \\
4.0 & -- & 60.73 & 68.20 & 57.87 & 67.04 & 69.71 \\
4.5 & -- & 62.92 & 68.84 & 58.76 & 65.17 & 69.77 \\
5.0 & -- & 62.77 & 69.43 & 59.06 & 66.23 & 70.11 \\
5.5 & -- & 62.26 & 68.98 & 58.96 & 66.91 & 69.98 \\
6.0 & -- & 62.53 & 68.98 & 59.74 & 66.48 & 70.27 \\
6.5 & -- & 61.75 & 69.28 & 61.22 & 67.05 & 69.74 \\
7.0 & -- & 63.21 & 69.25 & 60.03 & 67.52 & 70.40 \\
7.5 & -- & 62.19 & 69.87 & 61.17 & 66.13 & 70.24 \\
8.0 & -- & 62.32 & 69.74 & -- & 67.38 & 70.69 \\
8.5 & -- & 62.05 & 70.22 & -- & 67.22 & 70.67 \\
9.0 & -- & 61.80 & 70.03 & -- & 66.86 & 70.58 \\
9.5 & -- & 61.18 & -- & -- & 67.09 & 70.56 \\
10.0 & -- & 61.48 & -- & -- & 67.20 & -- \\
\bottomrule
\end{tabularx}
\caption{Code mean convergence values underlying Figure~\ref{fig:convergence_means}.}
\label{tab:convergence_code_mean}
\end{table}

\end{document}